\PassOptionsToPackage{dvipsnames}{xcolor}
\documentclass[final]{alt2027halarxiv}

\usepackage{enumitem}
\usepackage{thm-restate}
\usepackage{notation}

\renewcommand{\BlackBox}{\rule{1.5ex}{1.5ex}}  
\ifdefined\proof
    \renewenvironment{proof}{\par\noindent{\bf Proof\ }}{\hfill\BlackBox\\[-2mm]}
\else
    \newenvironment{proof}{\par\noindent{\bf Proof\ }}{\hfill\BlackBox\\[-2mm]}
\fi
\newtheorem{theoremsketch}[theorem]{Theorem sketch}
\newtheorem{assumption}[theorem]{Assumption}
\newtheorem{property}[theorem]{Property}

\newenvironment{proofsketch}{\par\noindent{\bf Proof sketch\ }}{\hfill\BlackBox\\[-2mm]}

\title{A general condition for fast rates in multi-group learning}
\altauthor{%
\Name{Hanqi Sun} \Email{hanqi.sun@universite-paris-saclay.fr}\\
\addr Université Paris-Saclay, CNRS, Laboratoire de mathématiques d'Orsay, 91405, Orsay, France
\AND
\Name{Evgenii Chzhen} \Email{hanqi.sun@universite-paris-saclay.fr}\\
\addr Université Paris-Saclay, CNRS, Laboratoire de mathématiques d'Orsay, 91405, Orsay, France
\AND
\Name{Luca Ganassali} \Email{hanqi.sun@universite-paris-saclay.fr}\\
\addr Université Paris-Saclay, CNRS, Laboratoire de mathématiques d'Orsay, 91405, Orsay, France
\AND
\Name{Gilles Stoltz} \Email{hanqi.sun@universite-paris-saclay.fr}\\
\addr Université Paris-Saclay, CNRS, Laboratoire de mathématiques d'Orsay, 91405, Orsay, France%
}

\editor{}

\begin{document}
\maketitle

\begin{abstract}
In multi-group learning, a single predictor must control its excess risk simultaneously over overlapping groups. \citet{pmlr-v162-tosh22a} obtained $1/\sqrt{n}$ slow rates via a reduction to online learning with sleeping experts, and $1/n$ rates only under realizability (and for a different strategy). We identify a group-wise Bernstein condition under which the same adaptive predictor attains the fast rate $1/n$ without knowing whether the condition holds, while retaining the slow rates on the groups where it fails. The condition is implied by some natural assumptions weaker than realizability. Lower bounds show that weakened versions leave every learner with slow rates. Intermediate rates under relaxed conditions and an extension to VC classes are also obtained.
\end{abstract}


\section{Introduction}
\label{sec:intro}

A prediction model with good overall accuracy may behave poorly on specific subpopulations, as the good global performance can hide bad local behavior. This issue appeared prominently in the literature on algorithmic fairness~\citep{kearns2018preventing, kim2019multiaccuracy, hebert2018multicalibration}, which already contains two features that are central to the present paper: the family of groups may be large and overlapping, and the target is not a single global risk but a collection of conditional guarantees. Multi-group learning is a theoretical framework designed to address this issue: following the formalization of multi-group agnostic PAC learnability by \citet{pmlr-v139-rothblum21a}, the goal is to learn a single predictor whose risk is controlled simultaneously over all groups of a rich collection. The groups can be any meaningful subsets of the data domain that one wants to monitor, e.g., related to social, demographic, or economic characteristics.

In multi-group learning, on each group the statistician must compete with the best benchmark predictor for this group. These per-group comparators typically disagree: the best predictor for one group may perform poorly on another. Yet, the learner must output a single prediction rule, common to all groups. The target rule is therefore generally not a member of the benchmark class. It is often a data-dependent aggregation of benchmark predictors, and the learning problem is necessarily improper.
\citet{pmlr-v162-tosh22a} describes two ways to build such aggregated predictors: decision lists and a reduction to online learning with sleeping experts.
Related objectives, e.g., online multi-group learning~\citep{blum_et_al:LIPIcs.ITCS.2020.55, deng2024group, rittler2023agnostic}, multi-distribution learning~\citep{haghtalab2022demand,zhang2024optimal}, or minimax group fairness~\citep{diana2021minimax}, face the same tension between a single prediction and several data sources.

There is, however, a simpler approach: partition the feature space into regions of fixed group membership, perform an empirical risk minimization on each, and aggregate them into a global predictor. A lower bound in Appendix~\ref{subsec:lower_ERM} shows that this method performs poorly when the number of groups is large, while the reduction of~\citet{pmlr-v162-tosh22a} only pays a logarithmic dependency on this number.

In the classical, single-group setting, the slow rate of order $1/\sqrt{n}$ is unavoidable in general. It can be improved to the fast rate of order $1/n$ under additional stochastic structure, e.g., under the low-noise condition of \citet{MN06} or under Bernstein conditions~\citep{bartlett2006empirical}; see \citet{vanerven2015fast} for a survey.

Whether an analogous improvement is possible in the multi-group setting is a delicate question. So far, fast rates were established by \citet{pmlr-v162-tosh22a} and \citet{pmlr-v313-ardeshir26a} under realizability assumptions, but the agnostic case was left open. Identifying the stochastic structure under which a group-wise slow rate can be turned into a group-wise fast rate is the subject of this paper.

\subsection{The problem}
\label{sec:thepb}

Let $(X,Y) \in \cX \times \cY$ be a random couple with joint distribution denoted by $\Prob^{(X,Y)}$,
the pushforward measure of the ambient probability $\P$ under $(X,Y)$.
Let $\cF$ be a finite benchmark class of predictors $f : \cX \to \cY$, and let $\cG$ be a finite collection of measurable groups $g \subset \cX$; these finiteness assumptions are maintained throughout, the only exception being Appendix~\ref{sec:infinite}, where classes of finite VC dimension are studied. We consider a bounded loss $\ell : \cY \times \cY \to [0,1]$ and assume, with no loss of generality, that $\Prob(X \in g) > 0$ for all $g \in \cG$. For $g \in \cG$, we write
\[
\risk_g(f)=\E\bigl[\ell\bigl(f(X),Y\bigr)\bigm| X\in g\bigr]
\]
for the risk
of a predictor $f : \cX \to \cY$ on group $g$. Introducing $f^\star_g \in \smash{\displaystyle{\argmin_{f\in\cF}\risk_g(f)}}$, a best benchmark predictor on group $g$, we denote by
\[
\cE_g(f)=\risk_g(f)-\risk_g(f^\star_g)\,,
\]
the excess risk of $f : \cX \to \cY$.

Given a sample $D_n = \bigl( (X_1,Y_1),\, \ldots,\,(X_n,Y_n) \bigr)$ of $n \geq 3$ i.i.d.\ copies from $(X,Y)$, our goal is to construct a predictor, possibly randomized and not necessarily in $\cF$, whose excess risk is simultaneously small on every group. Since group $g$ contributes about $n\,\Prob(X\in g)$ observations, we seek guarantees that adapt to this effective sample size: groups of smaller probability are therefore allowed a larger error.
The key difficulty in this setting is that the comparator predictor $f^\star_g$ can and does differ from group to group, while the learned predictor is a single rule.

While rates for $\cE_g(f)$ of order $\sd{\sqrt{\frac{\log\bigl(\card{\cF}\card{\cG}\bigr)}{n\,\Prob(X\in g)}}}$ have already been obtained~\citep{pmlr-v162-tosh22a}, our central question is: \\[-.55cm]
\begin{center}
    Under which conditions can this dependence be improved to $\sd{\frac{\log(\card{\cF}\card{\cG})}{n\,\Prob(X\in g)}}$? \\[.5cm]
\end{center}

\noindent
A partial answer is already presented in~\citet{pmlr-v162-tosh22a}, who claimed this improved rate under a realizability assumption in the binary classification setting: that is, whenever the class $\cF$ contains, for each group, a classifier that is perfect on this group.
They however had to resort to a different algorithm than for obtaining the general slower rates of order $1/\sqrt{n}$.
Our goal is to extend this result to a relaxed version of realizability that we identify and to provide a single algorithm which is adaptive to the assumption or its absence, i.e., which does not need to know whether the realizability assumption holds.

\paragraph{Notation.}
Throughout the document, $\E$ denotes a generic expectation (with respect to all randomness, unless there is a conditioning), and $\ind{\,\cdot\,}$ denotes the indicator function. We denote by $[K] = \{1,\ldots,\,K\}$ the set of the first $K$ positive integers.
For a finite set $A$, we denote
the probability simplex over $A$ by $\Delta(A)$, and the cardinality of $A$ by $\card{A}$. We write $\log(\,\cdot\,)$ to denote the natural logarithm and $\log_2(\,\cdot\,)$
for the base-2 logarithm.

\paragraph{Risks of randomized predictors.}
A randomized predictor is a measurable mapping $\bq : \cX \to \DeltaF$. Given a feature $x \in \cX$, it draws a predictor $F \in \cF$ at random according to the distribution $\bq(x) = \bigl(q(f \mid x)\bigr)_{f \in \cF}$ and outputs the prediction $F(x)$.
The risk of this randomized predictor $\bq$ on group $g$ equals
the expected loss of $\ell\bigl(F(X),Y\bigr)$ over group $g$, which, by the tower rule, rewrites as
\begin{equation}
\label{eq:def_randomized_risk}
\risk_g(\bq) =
\smash{\Exp \Bigl[\ell\bigl(F(X), Y\bigr) \Bigm| X \in g \Bigr]
= \E\Biggl[\sum_{f \in \cF} q(f \mid X)\,\ell\bigl(f(X),Y\bigr) \Biggm| X\in g\Biggr]\,.} \vspace{.15cm}
\end{equation}
The reduction studied in this paper outputs a data-dependent randomized predictor $\widehat{\bq}_n$ of this form, constructed from the sample; that is, $\widehat{\bq}_n: (\cX \times \cY)^{n} \to \bigl(\cX \to \DeltaF\bigr)$ consists of a $D_{n}$--measurable
choice of such a randomized predictor.

\subsection{Outline and contributions}

Section~\ref{sec:reminder} first revisits the strategy of \citet[Section~4]{pmlr-v162-tosh22a}
based on a reduction of the multi-group learning problem to a different learning framework, namely, online adversarial learning with sleeping experts.
We propose a slightly more generic and modular analysis, based on suitably adaptive algorithms for sleeping experts, thus not requiring
to allocate a fraction $n/2$ of the sample data for tuning the online adversarial learning subroutine.
This section achieves the known $1/\sqrt{n}$ bounds on excess risks and merely lays down the foundations for the rest of the work.

\paragraph{Same algorithm for fast and slow rates.}
Section~\ref{sec:FR} states our main results, holding for the same predictor considered in Section~\ref{sec:reminder}.
When the distribution $\P^{(X,Y)}$ and a group $g$ satisfy some natural stochastic conditions discussed below and a more regret-efficient
sleeping expert subroutine is used (satisfying a second-order regret bound as in \citealp{pmlr-v35-gaillard14}),
then the excess risk of that group decreases as $1/n$. For groups $g$ not satisfying this condition,
the previously stated $1/\sqrt{n}$ bound of course remains, as we consider the very same predictor as in Section~\ref{sec:reminder}.
This is in sharp contrast with the work of \citet[Section~5]{pmlr-v162-tosh22a} which required a specifically designed version of their algorithm for their assumption of realizability to ensure the fast $1/n$ rate (see also \citealp{pmlr-v313-ardeshir26a}).

\paragraph{A Master stochastic condition, and natural instantiations.}
We formulate in Definition~\ref{Condition:Bernstein} a Master condition,
of the form of a self-bounding condition on loss differences, to get the mentioned fast rates.
This condition is satisfied under some natural and classic conditions, a local margin condition {\`a} la
Massart (see \citealp{MN06}) or a local risk-gap condition (as in online learning,
see, e.g., \citealp{pmlr-v35-gaillard14}). More precisely, the Master condition holds:
\begin{itemize}[itemsep=1pt, topsep=1pt]
\item In binary classification, when some classifier $f \in \cF$ coincides with the Bayes classifier
$\ind{\eta \geq 1/2}$ over $g$, where $\eta(X)$ denotes the regression function $\Prob(Y=1\,|\,X)$,
and when a margin condition $\bigl|\eta(X)-1/2\bigr|\geq \beta_g > 0$ is a.s.\ satisfied over $\{X \in g\}$---this case includes the group-realizability assumption of~\citet{pmlr-v162-tosh22a};
\item For general losses, when for all $f \in \cF$ differing from $f^\star_g$ on $g$, one has,
for all so-called group-stable subregions $\cR$ of $g$ with positive mass,
\begin{equation}
\label{eq:intro-rg}
\E\Bigl[ \ell\bigl(f(X),Y\bigr) - \ell\bigl(f^\star_g(X),Y\bigr)\,\Bigm|\,X\in \mathcal{R} \Bigr]\geq \delta_g > 0\,;
\end{equation}
Group-stable subregions are regions of $\cX$ where group memberships are the same, see Section~\ref{sec:riskgapcdt}.
\end{itemize}

Besides the theory of fast rates, we further provide in Appendix~\ref{sec:interm-rates} a theory of intermediate rates covering the range $n^{-\alpha}$,
where $\alpha \in [1/2,\, 1]$. This theory builds upon a relaxation of the Master condition which is satisfied, for instance, in classification under Mammen-Tsybakov-type margin conditions.

\paragraph{Necessity of the assumptions through a lower bound.}
A central final contribution to the theory of fast rates is a justification
of the necessity of the conditions stated above.
A precise discussion is offered in Section~\ref{sec:sharp}.
We show, for instance, that requiring gaps~\eqref{eq:intro-rg} on the entire group~$g$ only, rather than on
its subgroups~$\cR$, may lead to inevitable $1/\sqrt{n}$ excess risks; we do so even by imposing other conditions
on the distribution $\P^{(X,Y)}$ that correspond to a Dirac relaxation
of the Master condition of Definition~\ref{Condition:Bernstein}.

\section{Reminder on the reduction by \citet{pmlr-v162-tosh22a}, with $1/\sqrt{n}$~rates}
\label{sec:reminder}

\citet{pmlr-v162-tosh22a} proposed a strategy to reduce the multi-group learning problem to a different learning framework, namely, online learning with sleeping experts. In this section, we describe this reduction and the associated excess-risk guarantees.
To do so, we first briefly recall the setting of sleeping experts, which should be seen as an auxiliary setting.

\subsection{Auxiliary setting: online adversarial learning with sleeping experts}
\label{sec:aux-sleeping}

The sleeping-experts model, also known as specialist-experts model, was introduced by \citealp{Blu97} and further studied by \citealp{freund1997using}. The model is specified by a finite set of experts \(\cA\).
At each round $t \geq 1$, a learner interacts with an opponent as follows.

Simultaneously\footnote{\label{fn:sleeping} We are issuing a slight
restriction compared to the original setting, see Appendix~\ref{app:Sleeping-experts liteerature review}
for more details.}, the opponent chooses
an activity vector $\vec{I}_t=(I_{t,a})_{a\in\cA}\in\{0,1\}^{\card{\cA}}$
and a loss vector $\vec{\ell}_t=(\ell_{t,a})_{a\in\cA}\in[0,1]^{\card{\cA}}$,
while the learner chooses a
distribution $\vec{p}_t\in \Delta(\cA)$ with positive weights on each expert
(based on past values $\vec{I}_\tau$ and $\vec{\ell}_\tau$, where $\tau \leq t-1$).

The coordinate \(I_{t,a}\) indicates whether
expert \(a\) is available at round \(t\): experts with \(I_{t,a}=1\) are active, \(I_{t,a}=0\)
experts are sleeping. We denote by $\cA_t\eqdef\{a\in \cA:I_{t,a}=1\}$ the $t$--stage active expert set, and with no loss of generality,
due to the form of the regret recalled below, we will assume that $\cA_t$ is always not empty.

The learner finally gets to see $\vec{I}_t$ and $\vec{\ell}_t$. The choice of $\vec{p}_t$ is evaluated by first considering
the distribution induced over the active expert set $\cA_t$ by reallocating mass,
\[
\bp_{t|\cA_t} \eqdef \activ(\vec{p}_t,\cA_t) \in \Delta(\cA_t)
\qquad \mbox{defined component-wise by} \qquad
p_{t,a|\cA_t} \eqdef \frac{p_{t,a} I_{t,a}}{\displaystyle{\sum_{a' \in \cA_t} p_{t,a'}}}\,, \vspace{-.8cm}
\]
and then, by suffering the linear loss $\displaystyle{\sum_{a'\in \cA_t} p_{t,a'|\cA_t}\ell_{t,a'}}$.

Since expert $a$ is a meaningful comparator only on the rounds where it is active, the associated cumulative regret at stage $T$
against $a$ is computed only on these active rounds:
\[
\regret_{T,a} \eqdef \sum_{t \in [T]} I_{t,a}\biggl(\,\sum_{a'\in \cA_t}p_{t,a'|\cA_t}\ell_{t,a'}-\ell_{t,a}\biggr).
\]
Appendix~\ref{app:Sleeping-experts liteerature review} provides more background on online adversarial
learning with sleeping experts, including strategies satisfying the following property.

\begin{lemma}
\label{lm:regr-bd}
There exist strategies such that, for some numerical constant
$C \geq 1$, against all opponents, \\[-.5cm]
\[
\quad\qquad \forall T \geq 3, \quad \forall a\in\cA, \qquad\quad \regret_{T,a} \leq C_{T} \smash{\sqrt{\sum_{t \in [T]} I_{t,a}}}\,, \\[.2cm]
\]
where the factors $C_{T}$ satisfy $C_T \leq C\sqrt{\log\bigl(\card{\cA}\bigr) + \ln \ln (T)}$.
\end{lemma}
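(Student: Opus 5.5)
The plan is to prove the lemma by reduction to a standard parameter-free (or second-order adaptive) experts algorithm, using the classical ``specialists'' trick of \citet{freund1997using} to handle sleeping experts. Concretely, I would build a strategy for the sleeping setting from any strategy for the usual (always-awake) experts setting whose guarantee adapts to an arbitrary, possibly time-varying set of per-round instantaneous regrets. Examples are AdaNormalHedge or Squint-type algorithms, or a multiplicative-weights algorithm with multiple learning rates such as ML-Prod in \citet{pmlr-v35-gaillard14}. The time-adaptive, anytime bound should come with only a $\ln\ln T$ overhead.

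\textbf{Step 1 (reduction).} Given the base weights $\vec{p}_t$, the learner plays $\bp_{t|\cA_t}$ and suffers $\hat\ell_t=\sum_{a'\in\cA_t}p_{t,a'|\cA_t}\ell_{t,a'}$. I would feed the base algorithm the surrogate losses $\tilde\ell_{t,a}=\ell_{t,a}$ if $a\in\cA_t$ and $\tilde\ell_{t,a}=\hat\ell_t$ otherwise. Then the instantaneous regret satisfies
\[
r_{t,a}\eqdef \sum_{a'}p_{t,a'}\tilde\ell_{t,a'}-\tilde\ell_{t,a}=I_{t,a}\bigl(\hat\ell_t-\ell_{t,a}\bigr).
\]
The first identity behind this is $\sum_{a'}p_{t,a'}\tilde\ell_{t,a'}=\hat\ell_t$. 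It holds because sleeping experts contribute $\hat\ell_t$ times their total mass, while the awake mass, after renormalization, gives exactly $\hat\ell_t$. Hence the base regret against $a$ equals $\regret_{T,a}$. Moreover $r_{t,a}=0$ on rounds where $a$ sleeps, and $|r_{t,a}|\le 1$ always.

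\textbf{Step 2 (invoke a second-order anytime bound).} I would use a base algorithm satisfying, for all $T$ and $a$,
\[
\sum_{t\le T} r_{t,a}\le C\sqrt{\Bigl(\log\card{\cA}+\ln\ln T\Bigr)\sum_{t\le T} r_{t,a}^2}+C\bigl(\log\card{\cA}+\ln\ln T\bigr).
\]
AdaNormalHedge and Squint with a uniform prior give exactly this form, with positive weights on all experts as required. Since $r_{t,a}^2\le I_{t,a}$, the square-root term is at most $C_T\sqrt{\sum_t I_{t,a}}$ with $C_T=C\sqrt{\log\card{\cA}+\ln\ln T}$.

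\textbf{Step 3 (absorb the additive term).} This is the main obstacle, because the additive term need not be dominated by $C_T\sqrt{\sum_t I_{t,a}}$ when $a$ is rarely active. I would handle it with a case split on $N_a=\sum_t I_{t,a}$. If $N_a\ge \log\card{\cA}+\ln\ln T$, the additive term is at most $C\sqrt{(\log\card{\cA}+\ln\ln T)N_a}$. Otherwise, the trivial bound $\regret_{T,a}\le N_a\le\sqrt{(\log\card{\cA}+\ln\ln T)N_a}$ holds, because $r_{t,a}\in[-1,1]$ and vanishes off the active rounds. In both cases $\regret_{T,a}\le C'\sqrt{\log\card{\cA}+\ln\ln T}\sqrt{N_a}$, with $T\ge3$ ensuring $\ln\ln T$ is defined. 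Enlarging the numerical constant to $C\ge1$ then completes the argument.
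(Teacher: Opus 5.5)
Your proof is correct, but it takes a different route from the paper's.

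\textbf{The paper's route.} The paper does not reduce to standard experts. It analyzes the sleeping algorithm MLC-Hedge directly; this is the per-expert-learning-rate Hedge of \citet{blum2007external}. It uses the tuning $\eta_{t,a}=D_t/\lambda_{t,a}$, with $\lambda_{t,a}=\sqrt{1+\sum_{s<t}I_{s,a}}$. The common factor $D_t$ carries the $\log\card{\cA}+\ln\ln$ price and is \emph{non-decreasing}, so the learning rates need not be monotone. A new potential recursion (Lemma~\ref{lm:Adaptvie Hedge regr_bis}) and the numerical inequality of Lemma~\ref{lem:for_recursion} give $\regret_{T,a}\leq 5K_T\sqrt{N_a}+\sqrt{\log\card{\cA}}$. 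The additive term is absorbed simply because $N_a\geq 1$ whenever the regret is nonzero.

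\textbf{Your route.} You combine the loss-imputation reduction, which the paper credits to \citet{CV09}, with an off-the-shelf anytime second-order algorithm. The key observation is that $r_{t,a}=0$ on sleeping rounds, so any bound in $\sum_t r_{t,a}^2$ becomes a bound in $N_a$. The paper mentions this reduction only in connection with $\sqrt{T}$ bounds. Your Steps~1 and~3 are correct as written.

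\textbf{What each buys.} Your route is shorter and modular. It actually yields the stronger second-order bound of Lemma~\ref{lm:second-order regret} first, since Squint's additive term $C(\log\card{\cA}+\ln\ln T)$ is of the form $C_T^2$. Lemma~\ref{lm:regr-bd} then follows from your case split. The paper's route is self-contained, has explicit constants, and stays within the Blum--Mansour family used by \citet{pmlr-v162-tosh22a}.

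\textbf{Which black box.} Your argument rests entirely on the black box's exact guarantee, and the choice matters.
\begin{itemize}
\item Adapt-ML-Prod as analyzed in \citet[Corollary~4]{pmlr-v35-gaillard14} has a leading factor of order $(\log\card{\cA}+\ln\ln T)/\sqrt{\log\card{\cA}}$. That is $\ln\ln T$ rather than $\sqrt{\ln\ln T}$, so it is not bounded by $C\sqrt{\log\card{\cA}+\ln\ln T}$ uniformly in $T$. Removing this is one of the improvements in the paper's appendix.
\item AdaNormalHedge gives zero weight to experts with regret at most $-1$. This violates the positivity needed for $\activ(\vec p_t,\cA_t)$ to be defined, and later for $\hat{\bq}_n$. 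Its bound in $\sum_t|r_{t,a}|$ would otherwise be fine, since $|r_{t,a}|\leq I_{t,a}$.
\item Squint, with a positive-density prior on learning rates, has strictly positive weights and the required anytime second-order form. It is the algorithm you should commit to.
\end{itemize}
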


\subsection{Reduction of multi-group learning to online learning with sleeping experts}
\label{sec:reduc}

We now recall how a regret guarantee for sleeping experts can lead to a multi-group learning guarantee via a slightly simplified\footnote{The simplification
	consists of not having to divide the sample data into two subsets, due to considering the
	data-adaptive bounds of Lemma~\ref{lm:regr-bd}; see comments around Corollary~\ref{cor:simple-reduc}.} version of the reduction from \citet[Section~4]{pmlr-v162-tosh22a}.
A related earlier use of sleeping experts for group-wise guarantees appears in the online subgroup-fairness setting of \citet{blum_et_al:LIPIcs.ITCS.2020.55}.

\paragraph{The reduction itself.}
Starting from an instance of the multi-group learning problem, we first construct a corresponding instance of online learning with sleeping experts, consisting of a set of experts and a sequence of online activity and loss vectors. This construction is carried out in Step 1 below.
We then aggregate the distributions issued to obtain a single, randomized predictor. This is the objective of Step 2, which is called an online-to-batch conversion in the online learning literature (see, e.g., \citealp{CBCG01}).

\emph{Step 1: Constructing an instance of sleeping experts.}
Experts are indexed by $\cA = \cF \times \cG$ and consist of pairs $a = (f,g)$.
Data is ordered in some arbitrary way into a sequence $(X_t,Y_t)_{1 \leq t \leq n}$.
At round $t$, expert $(f,g)$ is active if $X_t \in g$
and if so, outputs the prediction $f(X_t)$; the associated
components of the activity and loss vectors are
\[
I_{t,(f,g)} = \ind{X_t\in g} \qquad \mbox{and} \qquad
\ell_{t,(f,g)} = \ell\bigl(f(X_t),Y_t\bigr) \, .
\]
We consider a sleeping-experts strategy: when processing
data as described above, it returns a sequence
$(\bp_t)_{1 \leq t \leq n}$ of distributions over $\cF \times \cG$, that should then be used to induce distributions over
sets of active experts. To make dependencies clear, we rather write $\bp_t(D_{t-1})$ where $D_{t-1} = (X_\tau,Y_\tau)_{1 \leq \tau \leq t-1}$.

\emph{Step 2: Online-to-batch reduction.}
To output a prediction for a new feature $x \in \cX$,
we first consider the subset $\cG_x = \{ g \in \cG : x \in g\}$
of groups to which $x$ belongs.
The active experts are given by $\cF \times \cG_x$.
We compute the average
\[
\frac{1}{n} \sum_{t=1}^n \bp_{t|\cF \times \cG_x}(D_{t-1})
= \frac{1}{n} \sum_{t=1}^n \activ\bigl(\bp_t(D_{t-1}),\cF \times \cG_x)\,,
\]
which is a distribution over $\cF \times \cG$, and even over $\cF \times \cG_x$ only,
and finally consider the marginal $\hat{\bq}_n(x)$ over $\cF$
of this distribution; it rewrites, for all $f \in \cF$, as
\begin{align}
\label{eq:hat_q_def}
\hat{q}_n(f \mid x) =
 \frac{1}{n}\sum_{t = 1}^n \sum_{g \in \cG} p_{t,(f,g)|\cF \times \cG_x}(D_{t-1}) =
 \frac{1}{n}\sum_{t = 1}^n \sum_{g \in \cG_x} p_{t,(f,g)|\cF \times \cG_x}(D_{t-1})\,.
 \end{align}
The equation above defines the randomized predictor $\hat{\bq}_n$ to be used for multi-group learning.

\paragraph{Excess risk guarantee for the reduction.}
We explain how the regret bound of Lemma~\ref{lm:regr-bd} satisfied by the underlying
online strategy transfers to the excess risk of the randomized predictor $\hat{\bq}_n$ defined in~\eqref{eq:hat_q_def}.
We do so in two steps: Theorem~\ref{tm:reduction risk guarantee} explains the transfer in generality,
while Corollary~\ref{cor:simple-reduc} states the specific guarantee achieved with Lemma~\ref{lm:regr-bd}.

\begin{restatable}[Generic excess risk guarantee]{theorem}{thredriskg}
\label{tm:reduction risk guarantee}
Assume that both the benchmark class $\cF$ and the collection of groups $\cG$ are finite.
Let $\widehat \bq_n$ be the randomized predictor produced by the reduction
described in~\eqref{eq:hat_q_def}. Then, for all $\delta\in(0,1)$, with probability at least
$1-\delta$, \\[-.3cm]
\[
\forall g \in \cG, \qquad\quad
\cE_g(\widehat \bq_n) \leq \frac{1}{n \, \Prob(X \in g)} \regret_{n,(f^{\star}_g,g)}
+ \sqrt{\frac{2 \log\bigl(\card{\cG}/\delta\bigr)}{n \, \Prob(X \in g)}}
+ \frac{\log\bigl(\card{\cG}/\delta\bigr)}{n\, \Prob(X \in g)}\,,
\]
where $\smash{f^\star_g \in \displaystyle{\argmin_{f\in\cF}\risk_g(f)}}$ and
$\regret_{n,(f^\star_g,g)}$ is the sleeping regret against expert~$(f^\star_g,g)$.
\end{restatable}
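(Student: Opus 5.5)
Write $p_g = \Prob(X \in g)$ and fix a group $g$. The plan is to express $n\,p_g\,\cE_g(\widehat\bq_n)$ as the sleeping regret against $(f^\star_g,g)$ plus a sum of martingale differences, and then control that sum by Freedman's inequality with a union bound over $\cG$.

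\textbf{Step 1: rewrite the risk of $\widehat\bq_n$ through the online distributions.} For a fresh pair $(X,Y)$ independent of $D_n$, the definition~\eqref{eq:hat_q_def} and~\eqref{eq:def_randomized_risk} give
\[
p_g \risk_g(\widehat\bq_n) = \E\Bigl[\ind{X\in g}\sum_{f}\hat q_n(f\mid X)\,\ell\bigl(f(X),Y\bigr)\Bigm| D_n\Bigr] = \frac1n\sum_{t=1}^n \phi_t(D_{t-1}),
\]
where
\[
\phi_t(D_{t-1}) \eqdef \E\Bigl[\ind{X\in g}\sum_{(f,g')\in\cF\times\cG_X} p_{t,(f,g')|\cF\times\cG_X}(D_{t-1})\,\ell\bigl(f(X),Y\bigr)\Bigm| D_{t-1}\Bigr].
\]
Since $(X_t,Y_t)$ is independent of $D_{t-1}$, $\phi_t(D_{t-1})$ is exactly the conditional expectation given $D_{t-1}$ of
\[
Z_t \eqdef I_{t,(f^\star_g,g)}\sum_{a'\in\cA_t}p_{t,a'|\cA_t}\ell_{t,a'}.
\]
This uses $\cA_t=\cF\times\cG_{X_t}$ in the reduction. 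Similarly, $p_g\risk_g(f^\star_g)=\E[W_t]$ with $W_t\eqdef I_{t,(f^\star_g,g)}\,\ell_{t,(f^\star_g,g)}$. Setting $M_t=(Z_t-W_t)-\E[Z_t-W_t\mid D_{t-1}]$, we get
\[
n\,p_g\,\cE_g(\widehat\bq_n)=\regret_{n,(f^\star_g,g)}-\sum_{t=1}^n M_t,
\]
because $\sum_t (Z_t-W_t)$ is precisely the sleeping regret.

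\textbf{Step 2: Freedman's inequality.} The sequence $(-M_t)$ is a martingale difference sequence with respect to the filtration generated by $D_t$. Its increments are bounded in absolute value by $1$, since $Z_t-W_t\in[-1,1]$. Its conditional variance is at most $\E[(Z_t-W_t)^2\mid D_{t-1}]\le \Prob(X\in g)=p_g$, because $Z_t-W_t$ vanishes outside $\{X_t\in g\}$. The predictable variance is therefore bounded by the deterministic quantity $n p_g$. Freedman's (or Bernstein's martingale) inequality then yields, with probability at least $1-\delta/\card{\cG}$,
\[
-\sum_t M_t\le\sqrt{2np_g\log(\card{\cG}/\delta)}+\log(\card{\cG}/\delta).
\]
The constant in front of the last term may come out as $1/3$ or $2/3$ depending on the version used; either is dominated by $1$.

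\textbf{Step 3: conclude.} Taking a union bound over the finite set $\cG$ and dividing by $np_g$ gives the stated inequality, simultaneously for all $g$. Note that $f^\star_g$ is deterministic, so no union bound over $\cF$ is needed.

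The main obstacle is Step 1. It must be checked carefully that the deterministic variance bound indeed holds with the fixed comparator, which it does since the comparator does not depend on the data. It must also be checked that the renormalization $\activ(\cdot,\cF\times\cG_x)$ used at test time coincides with the one used by the online learner at round $t$. This holds because both depend only on the group memberships of the point, so the conditional expectation identity is exact.
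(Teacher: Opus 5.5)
Your argument follows the paper's proof step for step. Your $Z_t-W_t$ is the paper's $W_{t,g}$, and your identity $n\,p_g\,\cE_g(\widehat\bq_n)=\regret_{n,(f^\star_g,g)}-\sum_t M_t$ is its online-to-batch identity. The deterministic variance bound $n\,p_g$, Freedman's inequality at level $\delta/|\cG|$, and a union bound over $\cG$ are exactly how the paper concludes.

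\textbf{The slip in Step~2.} The centered increments are not bounded by $1$. Write $c_t \eqdef \E[Z_t-W_t\mid D_{t-1}]$, so $c_t\in[-p_g,p_g]$. Then you only get $|M_t|\le 1+|c_t|\le 1+p_g\le 2$. For instance, $-M_t=c_t+1>1$ when $c_t>0$ (the typical case) and $Z_t-W_t=-1$.

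\textbf{Effect on the constant.} With the correct range $B=2$, which is what the paper uses, the constant in front of $\log(|\cG|/\delta)$ depends on the version of Freedman's inequality:
\begin{itemize}
\item the form of Lemma~\ref{lm:constant freedman}, with coefficient $B/2$, gives exactly $1$;
\item a Bennett/sub-gamma form, with coefficient $B/3$, gives $2/3$;
\item the common Bernstein form, with coefficient $2B/3$, gives $4/3$.
\end{itemize}
So your remark that ``either is dominated by $1$'' does not follow as written. Use the correct range with a Freedman version whose coefficient is at most $B/2$, and the stated constant is obtained.
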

\ \\[-.8cm]

\begin{proofsketch} Write
$\smash{\displaystyle{n \, \Prob(X \in g) \,\, \cE_g(\hat{\bq}_n) = \sum_{t \in [n]} \E[W_{t, g} \mid D_{t-1}]}}$
and $\regret_{n, (f^\star_g, g)} = \smash{\displaystyle{\sum_{t \in[n]} W_{t, g}}}$, where
\[
W_{t,g} \eqdef \ind{X_t\in g}\biggl(\sum_{f\in \cF}\sum_{g'\in \cG_{X_t}}p_{t,(f,g')|\cF\times \cG_{X_t}}(D_{t-1}) \, \ell \bigl(f(X_t),Y_t\bigr)-  \ell \bigl(f^\star_g(X_t),Y_t\bigr) \biggr)\,,
\]
so that the bound of Theorem~\ref{tm:reduction risk guarantee} reduces to an application of the Bernstein-Freedman inequality
(recalled as Lemma~\ref{lm:constant freedman} in Appendix~\ref{app:conc}), relying on the deterministic bound
\begin{equation}
\label{eq:V-detbd}
V_{n,g}
\smash{\leq \sum_{t \in [n]} \E\bigl[ W_{t,g}^2 \,\big|\, D_{t-1} \bigr]
\leq \sum_{t \in [n]} \E\bigl[ \ind{X_t\in g} \,\big|\, D_{t-1} \bigr] = n\,\P(X \in g),} \\[.2cm]
\end{equation}
where $V_{n,g}$ is the sum of conditional variances of the $W_{t,g}$.
The straightforward details may be found in Appendix~\ref{app:proofs-reduc}.
\end{proofsketch}

A more concrete bound on the excess risk follows by substituting the (general form of) sleeping-experts regret bound given in Lemma~\ref{lm:regr-bd}.
(We recall that we assumed $n \geq 3$.)

\begin{restatable}[Specific excess risk guarantee]{corollarry}{corsimplereduc}
\label{cor:simple-reduc}
    Under the same setting and assumptions as in Theorem~\ref{tm:reduction risk guarantee},
    if the sleeping-experts algorithm satisfies the bound of Lemma~\ref{lm:regr-bd}
    with constant $C\geq 1$, then, for all $\delta\in(0,1)$,
    with probability at least $1-\delta$, for all $g\in \cG,$
    \[
    \cE_g(\hat{\bq}_n) \leq
    3C \, \frac{\sqrt{\log\bigl(2|\cF|\card{\cG}/\delta\bigr) + \log\log n}}{\sqrt{n\, \Prob(X \in g)}} + 3C \, \frac{\log\bigl(2|\cF|\card{\cG}/\delta\bigr) + \sqrt{\log\log n}}{n\, \Prob(X \in g)}\,.
    \]
\end{restatable}

\begin{proofsketch}
The bound of Lemma~\ref{lm:regr-bd} rewrites
$\regret_{n,(f^\star_g,g)} \leq C_n\sqrt{N_g}$,
while the Bernstein-Freedman inequality bounds $N_g$ in terms of its expectation $n\,\P(X \in g)$
up to an additive deviation term of order $\sqrt{n}$.
Combining both bounds and substituting them in the bound of Theorem~\ref{tm:reduction risk guarantee}
concludes the proof. The straightforward details may be found in Appendix~\ref{app:proofs-reduc}.
\end{proofsketch}

\paragraph{Discussion.}
The reduction presented above allows for any sleeping-experts subroutine and is therefore more generic than the one by
\citet[Section~4]{pmlr-v162-tosh22a}, which relied on the sleeping-expert algorithm
by~\citet[Section 6]{blum2007external}. This algorithm further discussed in Appendix~\ref{app:Sleeping-experts liteerature review} requires tuning of per-expert learning parameters, basically as functions of $n \P(X \in g)$. To perform this tuning,
\citet[Section~4]{pmlr-v162-tosh22a} separates the sample into two subsets of equal size, one to determine
the tuning, i.e., to estimate the $\P(X \in g)$, and one to actually perform the reduction.
The guarantees associated with the latter are in terms of $n/2$ instead of $n$ (plus some price for
the correct estimations of probability masses).
We do not require this separation because we consider online strategies that are fully
adaptive; this comes at a slight cost of a $\ln \ln n$ factor in the regret bound
(see Appendix~\ref{app:Sleeping-experts liteerature review} for details), a
mild price compared to dividing the sample size by
$2$ as in \citet[Section~4]{pmlr-v162-tosh22a}.
Finally, the $\sqrt{\log |\cG|}$ dependency on the number of groups in the guarantees above is the correct one: as we show in Appendix~\ref{subsec:lower_log}, it cannot be improved in general.

\section{Fast rates for multi-group learning: upper bounds on excess risk}
\label{sec:FR}

The slow-rate guarantee of Corollary~\ref{cor:simple-reduc} builds upon a so called first-order sleeping expert regret bound (that of Lemma~\ref{lm:regr-bd}), whose leading term scales as the square root of the number of active rounds.
Under some additional assumptions, we are able to obtain a faster rate than $(n \, \Prob(X \in g))^{-1/2}$ for multi-group learning via the reduction to sleeping experts. A first condition is algorithmic: the sleeping-experts algorithm must satisfy a sharper, second-order regret bound in terms of squared excess losses.
The second condition is stochastic: the data-generating process $\Prob$ must satisfy a Bernstein-type condition, that is provide a control on the squared excess
losses in terms of the excess losses.
In Section~\ref{sec:fast rate}, we describe these two conditions as well as the fast rate result that they yield.
Section~\ref{sec:ex-Bern-cdt} then provides examples (under separation conditions, like
margin conditions in classification or risk-gap conditions in more general problems) where the
stochastic condition is met.

\subsection{The two additional conditions and the general fast-rate result}\label{sec:fast rate}

We state the two conditions in a general sleeping expert setting. The condition on the sleeping expert algorithm is a strengthening
of the first-order regret bound of Lemma~\ref{lm:regr-bd}
(as used by \citealp{pmlr-v162-tosh22a})
in the form of a second-order bound taking into account loss differences (lying in $[-1,1]$,
hence the strengthening).
An example of an algorithm satisfying the bound below is
Adapt-ML-Prod by \citet[Algorithm~2 and Corollary~8]{pmlr-v35-gaillard14},
see Appendix~\ref{app:Sleeping-experts liteerature review}.

\begin{lemma}\label{lm:second-order regret}
There exist strategies such that, for some numerical constant
$C \geq 1$, against all opponents,
\[
\forall T \geq 3,\quad\forall a\in \cA, \qquad \regret_{T,a}\leq C_{T}\sqrt{\sum_{t \in [T]} I_{t,a}\biggl(\sum_{a'\in \cA_t} p_{t,a'|\cA_t}\ell_{t,a'}-\ell_{t,a}\biggr)^{\! 2} } + C^2_{T}\,,
\]
where the factors $C_T$ satisfy $C_{T} \leq C \sqrt{\log|\cA| + \ln\ln T}$.
\end{lemma}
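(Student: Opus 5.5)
The plan is to exhibit a concrete strategy: the standard reduction of sleeping experts to ordinary (always-active) experts via fictitious losses, run with Adapt-ML-Prod of \citet[Algorithm~2 and Corollary~8]{pmlr-v35-gaillard14}, which is second-order and tunes its learning rates adaptively. We then transfer its regret bound to the sleeping regret. Given the actual round-$t$ data $(\vec{I}_t,\vec{\ell}_t)$ and the learner's weights $\vec{p}_t$, write $\hat\ell_t \eqdef \sum_{a'\in\cA_t} p_{t,a'|\cA_t}\ell_{t,a'}$ for the suffered loss. We feed the base algorithm the fictitious loss vector
\[
\tilde\ell_{t,a} = I_{t,a}\,\ell_{t,a} + (1-I_{t,a})\,\hat\ell_t ,
\]
so that sleeping experts are charged exactly the learner's loss.

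The key identity comes first. Since $p_{t,a|\cA_t}\propto p_{t,a}I_{t,a}$, we have $\sum_{a}p_{t,a}\tilde\ell_{t,a} = \hat\ell_t$. Hence the base algorithm's instantaneous regret against $a$ is
\[
\tilde r_{t,a} \eqdef \sum_{a'} p_{t,a'}\tilde\ell_{t,a'}-\tilde\ell_{t,a} = I_{t,a}\bigl(\hat\ell_t-\ell_{t,a}\bigr).
\]
It follows that $\sum_{t\le T}\tilde r_{t,a} = \regret_{T,a}$ and $\sum_t \tilde r_{t,a}^2 = \sum_t I_{t,a}(\hat\ell_t-\ell_{t,a})^2$, which is exactly the quantity under the square root in the statement. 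Note that $\tilde\ell_{t,a}\in[0,1]$. Note also that computing $\tilde\ell_t$ only requires $\vec{p}_t$, which is chosen before $(\vec{I}_t,\vec{\ell}_t)$ are revealed, so the construction is legitimate. Adapt-ML-Prod keeps all weights positive, as the setting requires.

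Second, we invoke Corollary~8 of \citet{pmlr-v35-gaillard14}. It states that, with per-expert adaptive learning rates and uniform prior, for all $T$ and $a$,
\[
\sum_t \tilde r_{t,a} \le c_1\sqrt{\bigl(\log|\cA| + \log(1+\tfrac12\log(1+T))\bigr)\textstyle\sum_t \tilde r_{t,a}^2}\; + \; c_2\bigl(\log|\cA|+\log\log T\bigr),
\]
with numerical constants $c_1,c_2$ and no knowledge of $T$ required. Substituting the identities above gives the claim once we choose $C$ large enough that $c_1\sqrt{\log|\cA|+\log(1+\frac12\log(1+T))}\le C_T$ and $c_2(\log|\cA|+\ln\ln T)\le C_T^2$. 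It is convenient to define $C_T \eqdef C\sqrt{\log|\cA|+\ln\ln T}$; this is positive for $T\ge3$, since $\ln\ln 3>0$. We then verify that $\log(1+\tfrac12\log(1+T))\le c\,(1+\ln\ln T)$. For $T\ge3$, $\ln\ln T\ge \ln\ln 3\approx 0.094$, so additive constants are absorbed by inflating $C$.

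The main obstacle is bookkeeping rather than new ideas. One must check that the regret notion of Adapt-ML-Prod is the linear one $\sum_a p_{t,a}\tilde\ell_{t,a}-\tilde\ell_{t,a}$. One must also check that the additive term is second order, i.e.\ of order $\log|\cA|+\log\log T$ and not multiplied by a range factor. Finally, the case $|\cA|=1$ (where $\log|\cA|=0$) is trivial, since the regret is then identically $0$. Should Corollary~8's constants involve $\log$ of quantities like $1+\log T$ in a slightly different form, the same absorption argument applies. Alternatively, one can cite the sleeping variant stated directly in \citet{pmlr-v35-gaillard14}, Section~4, which uses the same fictitious-loss reduction.
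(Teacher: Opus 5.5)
The fictitious-loss reduction is correct, and it is the same mechanism the paper uses. Running Adapt-ML-Prod on the imputed losses $\tilde\ell_{t,a}$ gives exactly the sleeping Adapt-ML-Prod of the appendix: weights $p_{t,a}\propto\eta_{t,a}w_{t-1,a}$ and update $w_{t-1,a}(1+\eta_{t,a}I_{t,a}(\hl_t-\ell_{t,a}))$.

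The gap is the bound you attribute to \citet[Corollary~8]{pmlr-v35-gaillard14}. Their tuning is $\eta_{t,a}=\min\{1/2,\sqrt{\log|\cA|/(1+\sum_{s<t}r_{s,a}^2)}\}$, with $r_{s,a}$ the instantaneous regrets. Under it, the factor in front of $\sqrt{1+\sum_t r_{t,a}^2}$ is of order $(\log|\cA|+\log\log T)/\sqrt{\log|\cA|}$, not $\sqrt{\log|\cA|+\log\log T}$. This is the extra $\sqrt{\log\log T}$ factor the paper points out right after Corollary~\ref{cor:root_loglog_Gaillard}, and removing it is exactly what the appendix does. Up to constants, the ratio of this factor to $\sqrt{\log|\cA|+\ln\ln T}$ is $\sqrt{1+\ln\ln T/\log|\cA|}$, which is unbounded as $T\to\infty$ for fixed $|\cA|$. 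So no numerical $C$ makes your step ``choose $C$ large enough'' work. Your fallback about constants ``in a slightly different form'' does not cover this, because the discrepancy is structural. Gaillard et al.'s Section~4 sleeping variant has the same leading factor.

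The missing idea is a learning rate whose numerator itself grows with the $\log\log$ term. The paper sets $\eta_{t,a}=\min\{1/2,D_t/\lambda_{t,a}\}$, with $\lambda_{t,a}=\sqrt{1+\sum_{s<t}I_{s,a}(\hl_s-\ell_{s,a})^2}$ and the non-decreasing, data-dependent $D_t^2=\log\bigl(|\cA|+\frac12\sum_{a'}\log(1+\sum_{s<t}I_{s,a'}(\hl_s-\ell_{s,a'})^2)\bigr)$.

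Because $D_t$ increases, $\eta_{t,a}$ is no longer non-increasing in $t$, which is precisely the assumption the analysis of \citet{pmlr-v35-gaillard14} needs. Lemma~\ref{lem:ml-prod} repairs this as follows:
\begin{itemize}
\item It writes the exponent $\eta_{T,a}/\eta_{T-1,a}$ of $\tw_{T-1,a}$ as $D_T/D_{T-1}\geq 1$ times a factor at most $1$.
\item It handles the first factor with $(\sum_a x_a)^b\geq\sum_a x_a^b$ for $b\geq 1$, and the second by concavity.
\item This yields a recursion on $\widetilde W_T$ with the additional term $|\cA|(D_T-D_{T-1})/D_T$.
\end{itemize}
Corollary~\ref{cor:root_loglog_Gaillard} then proves $\log\widetilde W_T\leq 2D_T^2$ by induction, using Lemma~\ref{lem:for_recursion}. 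This gives $\regret_{T,a}\leq 4K_T\sqrt{1+\sum_t I_{t,a}(\hl_t-\ell_{t,a})^2}+4K_T^2+2\log|\cA|$ with $K_T^2=\log|\cA|+\log(1+\frac12\log(1+T))$, which is the bound you wanted to quote.

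A horizon-dependent tuning, with $\log|\cA|+\ln\ln T$ in the numerator for $T$ fixed in advance, would keep the rates monotone. It would even suffice for the multi-group application, where $T=n$ is known. It does not, however, give a single strategy valid for all $T\geq 3$, as the lemma requires.
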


We next state the stochastic condition.
To do so, we assume that the loss and activity vectors $(\vec{\ell}_t,\vec{I}_t)_{t \geq 1}$ are drawn i.i.d.\ by
the opponent from a fixed distribution $\nu$. We will state an adaptation of a Bernstein-type condition (also known as a self-bounding condition) on $\nu$.
This is a one-round condition, imposed only on the distribution $\nu$ and not on any underlying algorithm.
We use notation of Section~\ref{sec:aux-sleeping} to state it.

\begin{definition}\label{Condition:Bernstein}
Distribution $\nu$ and expert $a \in \cA$ satisfy the Bernstein condition with constant $\gamma_a>0$ if, for any
weight vector $\bp = (p_{a})_{a \in \cA} \in \Delta(\cA)$,
we have, letting $(\vec{\ell},\vec{I}) \sim \nu$ under $\P$
and introducing the active set $\cA' = \{ a \in \cA : I_a = 1\}$:
    \[
    \E\Biggl[I_{a}\biggl(\sum_{a'\in \cA'}p_{a'|\cA'}\ell_{a'}-\ell_{a}\biggr)^{\!\!2}\Biggr]\leq \gamma_a \,\, \E\Biggr[I_{a} \biggl(\sum_{a'\in \cA'}p_{a'|\cA'}\ell_{a'}-\ell_{a} \biggr) \Biggr].
    \]
\end{definition}

We translate the definition above in the context of the
reduction described in Section~\ref{sec:reduc}.
We recall that $f^\star_g$ denotes an element in $\cF$ with smallest risk over group~$g$,
see Section~\ref{sec:thepb}.

\begin{definition}\label{Condition:group bernstein}
    Distribution $\Prob^{(X,Y)}$ and group $g \in \cG$ satisfy the Bernstein condition with constant $\gamma_g>0$ if, for any weight vector $\bq \in \Delta(\cF\times \cG)$, we have
    \begin{multline*}
        \Exp\!\left[\ind{X\in g}\Biggl(\sum_{(f,g')\in \cF\times \cG_X}q_{(f,g')|\cF\times \cG_X}\ell\bigl(f(X),Y\bigr)
        -\ell\bigl(f^\star_g(X),Y\bigr)\Biggr)^{\!\! 2\,}\right]\\
        \leq \gamma_g \,\, \Exp\!\left[\ind{X\in g}\Biggl(\sum_{(f,g')\in \cF\times \cG_X}q_{(f,g')|\cF\times \cG_X}\ell\bigl(f(X),Y\bigr)
        -\ell\bigl(f^\star_g(X),Y\bigr)\Biggr)\right].
    \end{multline*}
\end{definition}

We may now state and prove our main fast-rate result.

\begin{restatable}{theorem}{thmfastrateregret}\label{tm:fast rate fixed point}
The randomized predictor $\widehat \bq_n$ produced by the reduction
described in~\eqref{eq:hat_q_def} is such that for all distributions $\Prob^{(X,Y)}$
and groups $g$ satisfying the Bernstein condition of Definition~\ref{Condition:group bernstein} with constant $\gamma_g > 0$,
the excess risk for this group $g$ is controlled,
for all $\delta\in(0,1)$, with probability at least~$1-\delta$, by
\[
\cE_g(\hat{\bq}_n) \leq \frac{2}{n \,\Prob(X\in g)} \Biggl( \regret_{n,(f^\star_g,g)} + (8\gamma_g+4) \log\biggl(\frac{\log_2 n}{\delta}\biggr) \Biggr) \,,
\]
where $\regret_{n,(f^\star_g,g)}$ is the sleeping regret against~$(f^\star_g,g)$, as in the reduction of Section~\ref{sec:reduc}.
\end{restatable}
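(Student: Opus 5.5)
We reuse the martingale decomposition from the proof sketch of Theorem~\ref{tm:reduction risk guarantee}, with the increments
\[
W_{t,g} = \ind{X_t\in g}\Bigl(\textstyle\sum_{(f,g')\in\cF\times\cG_{X_t}} p_{t,(f,g')|\cF\times\cG_{X_t}}(D_{t-1})\,\ell\bigl(f(X_t),Y_t\bigr) - \ell\bigl(f^\star_g(X_t),Y_t\bigr)\Bigr).
\]
Two identities drive everything. First, $\sum_{t\in[n]} W_{t,g} = \regret_{n,(f^\star_g,g)}$. Second, setting $M_t = \E[W_{t,g}\mid D_{t-1}]$, we have $n\,\P(X\in g)\,\cE_g(\hat\bq_n) = \sum_{t\in[n]} M_t$ by linearity of~\eqref{eq:hat_q_def}.

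The new ingredient is the Bernstein condition of Definition~\ref{Condition:group bernstein}. Since $(X_t,Y_t)$ is independent of $D_{t-1}$ and $\bp_t(D_{t-1})$ is a fixed weight vector given $D_{t-1}$, applying the condition with $\bq=\bp_t(D_{t-1})$ gives $\E[W_{t,g}^2\mid D_{t-1}] \le \gamma_g M_t$. In particular $M_t\ge 0$, and the conditional variance sum satisfies $V_n \le \gamma_g S_n$, where $S_n \eqdef \sum_t M_t$ is the quantity we want to bound. This self-bounding replaces the crude bound~\eqref{eq:V-detbd} $V_n\le n\P(X\in g)$.

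Next we apply a Freedman inequality to the martingale differences $M_t - W_{t,g}$, which lie in $[-2,2]$. We need the version that is uniform over the (random) value of $V_n$, obtained by peeling over a geometric grid of variance levels $2^k$, $k\le \log_2 n$, with a union bound. This peeling is where the $\log(\log_2 n/\delta)$ comes from. It should give, with probability at least $1-\delta$,
\[
S_n - \regret_{n,(f^\star_g,g)} \le \sqrt{2 c\, V_n \log(\log_2 n/\delta)} + c' \log(\log_2 n/\delta).
\]
The main obstacle is getting the peeling version stated cleanly with the right constants. Small variances, $V_n\le 1$, are absorbed into the additive term. Lemma~\ref{lm:constant freedman} in the appendix presumably needs to be upgraded to this uniform form. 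If only a fixed-variance version is available, we run the grid argument ourselves with $\lceil\log_2 n\rceil$ levels.

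Finally we solve the fixed-point inequality. Write $L=\log(\log_2 n/\delta)$ and $R=\regret_{n,(f^\star_g,g)}$. Substituting $V_n\le\gamma_g S_n$ gives $S_n \le R + \sqrt{2c\gamma_g S_n L} + c'L$. The AM--GM bound $\sqrt{2c\gamma_g S_n L}\le S_n/2 + c\gamma_g L$ then yields
\[
S_n \le 2R + (2c\gamma_g+2c')L.
\]
With the constants from Freedman, this matches $2\bigl(R+(8\gamma_g+4)L\bigr)$. Dividing by $n\,\P(X\in g)$ gives the claim.

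We do not need a separate case for $R<0$, since the inequality holds regardless of sign. The theorem is then combined with Lemma~\ref{lm:second-order regret}, whose second-order term is handled via the same Bernstein condition, but that step belongs to a subsequent corollary, not to this statement.
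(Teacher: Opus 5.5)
Your proposal is correct and follows the paper's proof essentially step for step. It uses the same increments $W_{t,g}$ and the same self-bounding estimate $V_{n,g}\le\gamma_g\sum_t\E[W_{t,g}\mid D_{t-1}]$ obtained from Definition~\ref{Condition:group bernstein}, and it solves the resulting fixed-point inequality with the same AM--GM step. The uniform-in-variance Freedman bound you anticipate is exactly Lemma~\ref{lm:strong Freedman}, which the paper cites rather than re-deriving by peeling. Applied with $B=2$, it gives $4\sqrt{V_{n,g}L}+4L$, that is, your $c=8$ and $c'=4$, and hence the constant $8\gamma_g+4$.
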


\begin{remark} Theorem~\ref{tm:fast rate fixed point} and Corollary~\ref{cor:main fast rate} below are stated for a fixed group $g$ satisfying the Bernstein condition. For groups that do not satisfy this condition, a fast rate is not guaranteed; nevertheless, the slow-rate guarantee of Corollary~\ref{cor:simple-reduc} continues to hold. \end{remark}

\begin{proofsketch}
We adapt the proof of Theorem~\ref{tm:reduction risk guarantee} and bound the
sum $V_{n,g}$ of conditional variances in a sharper way than in~\eqref{eq:V-detbd}, namely, by
\[
V_{n,g}
\leq \sum_{t \in [n]} \E\bigl[ W_{t,g}^2 \,\big|\, D_{t-1} \bigr]
\leq \sum_{t \in [n]} \gamma_g \,\, \E[ W_{t,g} \,|\, D_{t-1} ]\,,
\]
where the second inequality follows exactly from the Bernstein condition
of Definition~\ref{Condition:group bernstein}. An extension of the Bernstein-Freedman inequality
directly in terms of $V_{n,g}$, and not of a deterministic upper bound thereof,
may then be called to get the inequality
\[
\underbrace{\sum_{t=1}^n \E[W_{t, g} \mid D_{t-1}]}_{= \, n \, \Prob(X \in g) \,\, \cE_g(\hat{\bq}_n)}
\leq \underbrace{\sum_{t=1}^n W_{t, g}}_{= \, \regret_{n, (f^\star_g, g)}}
+ 4 \sqrt{\gamma_g\sum_{t=1}^n \E[W_{t,g}\mid D_{t-1}]\log\left(\frac{\log_2 n}{\delta}\right)}+4\log\left(\frac{\log_2 n}{\delta}\right),
\]
which we may ``solve'' for the quantity $n \, \Prob(X \in g) \,\, \cE_g(\hat{\bq}_n)$, yielding the claimed bound.
The details may be found in Appendix~\ref{app:proofs-FR}.
\end{proofsketch}

Similarly as in the case of ordinary rates
(i.e., similarly as Theorem~\ref{tm:reduction risk guarantee}
entails Corollary~\ref{cor:simple-reduc}),
Theorem~\ref{tm:fast rate fixed point} implies
the following corollary, by substituting a suitable second-order regret bound on the sleeping expert algorithm.

\begin{restatable}{corollarry}{cormainfastrate}
\label{cor:main fast rate}
The randomized predictor $\widehat \bq_n$ produced by the reduction
described in~\eqref{eq:hat_q_def}, based on a
sleeping-experts algorithm satisfying the second-order regret bound of Lemma~\ref{lm:second-order regret}
with constant $C \geq 1$,
is such that for all distributions $\Prob^{(X,Y)}$
and groups $g$ satisfying the Bernstein condition of Definition~\ref{Condition:group bernstein} with constant $\gamma_g > 0$,
the following fast-rate control on the excess risk for this group $g$ is achieved:
for all $\delta\in(0,1)$, with probability at least $1-\delta$,
\[
\cE_g(\hat{\bq}_n) \leq
\frac{ 8\,C^2(\gamma_g + 1)\Bigl(\log\bigl(\card{\cF}\card{\cG}\bigr) + \log\log n\Bigr) +
32(\gamma_g + 1)\log\biggl( \displaystyle{\frac{2\log_2 n}{\delta}} \biggr) }{ n\,\Prob(X\in g) }\,.
\]
\end{restatable}

We thus obtained an upper bound on the excess risk for $g$ uniformly over all distributions satisfying
the Bernstein condition with some parameter $\gamma_g$ and with $\Prob(X\in g)$ larger than some threshold: this observation is important with respect to the
lower bounds presented in Section~\ref{sec:sharp}, which will be of the same form. \medskip

\begin{proofsketch}
We want to substitute the regret bound stemming from Lemma \ref{lm:second-order regret}, which reads
\[
\regret_{n,(f^*_g,g)} \leq C_n \sqrt{\sum_{t=1}^n W_{t,g}^2}+C_n^2\,,
\]
into the bound of Theorem~\ref{tm:fast rate fixed point}. To do so, by the Bernstein-Freedman inequality
and the Bernstein condition, we relate
\[
\sum_{t=1}^n W_{t,g}^2 \quad \mbox{to} \quad
S_{n,g} \eqdef \sum_{t = 1}^n \E \bigl[ W_{t, g}^2 \,\bigm|\, D_{t-1} \bigr] \leq \gamma_g\sum_{t = 1}^n\E\!\bigl[W_{t, g}\,\bigm|\,D_{t-1}\bigr]
= \gamma_g \,\, n\, \Prob(X\in g)\,\, \cE_{g}(\hat{\bq}_n)
\]
up to deviations equal to $\sqrt{S_{n,g}}$ times a logarithmic term.
Thanks to the inequality $2\sqrt{uv} \leq u+v$,
we may further bound these deviations by $S_{n,g}$ plus a logarithmic term.
Therefore, up to multiplicative and additive logarithmic terms,
\[
\regret_{n,(f^*_g,g)} \lesssim C_n \sqrt{2 \gamma_g \,\, n\, \Prob(X\in g)\,\, \cE_{g}(\hat{\bq}_n)} \,,
\]
which we may substitute in the bound Theorem~\ref{tm:fast rate fixed point}
and ``solve'' for $n\, \Prob(X\in g)\,\, \cE_{g}(\hat{\bq}_n)$
to get the claimed bound.
The details may be found in Appendix~\ref{app:proofs-FR}.
\end{proofsketch}

\subsection{Examples where the Bernstein condition of Definition~\ref{Condition:group bernstein} is satisfied}
\label{sec:ex-Bern-cdt}

We now illustrate why the Bernstein condition of Definition~\ref{Condition:group bernstein} is a natural condition:
it is satisfied under assumptions that are classically
considered in the literature.

\paragraph{Group-realizability (\citealp{pmlr-v162-tosh22a}).}
This corresponds to $\ell\bigl(f^\star_g(X),Y\bigr) = 0$ over $\{X \in g\}$ a.s.\ and
entails the Bernstein condition with $\gamma_g = 1$ (due to $x^2 \leq x$ for $x \in [0,1]$).

\subsubsection{Group-wise Bayes--Massart condition for binary classification}
\label{sec:Massart}

In this subsection (and in Section~\ref{sec:combine}) only, we consider the special case of binary classification, for which
the label space is
$\cY=\{0,1\}$ and the loss function is $\ell(y,y')=\ind{y\neq y'}$. A regression function, denoted by $\eta$, is such that $\eta(X)=\Prob(Y=1\,|\,X)$.
We extend the classic notion of a Bayes classifier to the group-wise setting as in the following definition.
Therein, $\Prob^X$ denotes the distribution of $X$, and $f_g$ is defined in an arbitrary way outside~$g$.

\begin{definition}\label{Def:Group-wise Bayes}
    A group-wise Bayes classifier for group $g$ is any function $f_g : \cX \to \{0,1\}$
    such that $f_g = \ind{\eta \geq 1/2}$ over $g$, $\Prob^X$--almost surely.
    If such a classifier $f_g$ exists within the class $\cF$, it is necessarily the oracle $f^\star_g$ for that group $g$.
\end{definition}

We next introduce a group-wise version of the classical Bayes--Massart condition.

\begin{definition}\label{Con:Bayes--Massart}
    We say that group $g \in \cG$ satisfies the Bayes--Massart condition if, first,
    there exists a classifier in $\cF$ that is a group-wise Bayes classifier for $g$,
    and second, there exists $\beta_g>0$ such that
    \[
    \Prob\Bigl(\bigl|\eta(X)-1/2\bigr|\geq \beta_g \,\Bigm|\, X\in g \Bigr)=1\,.
    \]
\end{definition}

The following theorem is the main result of this section.

\begin{restatable}{theorem}{thmBM}
\label{tm:Bayes--Massart implies Bernstein}
    For binary classification,
    if group $g \in \cG$ satisfies the Bayes--Massart condition of Definition~\ref{Con:Bayes--Massart}
    with coefficient $\beta_g > 0$, then it also satisfies the Bernstein condition of Definition~\ref{Condition:group bernstein} with constant $\gamma_g= 1 /(2\beta_g)$.
\end{restatable}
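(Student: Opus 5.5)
The approach is to condition on $X$ and work pointwise on $\{X\in g\}$. We fix $\bq\in\Delta(\cF\times\cG)$. For $x\in g$, let $\pi(x)\in[0,1]$ be the total mass that the reallocated distribution $\bq_{|\cF\times\cG_x}$ puts on pairs $(f,g')$ with $f(x)\neq f^\star_g(x)$. Since $f^\star_g$ is a group-wise Bayes classifier for $g$ (Definition~\ref{Def:Group-wise Bayes}), on $g$ it predicts $\ind{\eta(x)\geq 1/2}$. Because the loss is binary and every $f$ predicts either $f^\star_g(x)$ or its complement, the mixture loss minus the loss of $f^\star_g$ is, on $\{X\in g\}$,
\[
D \eqdef \sum_{(f,g')\in\cF\times\cG_X} q_{(f,g')|\cF\times\cG_X}\,\ell\bigl(f(X),Y\bigr)-\ell\bigl(f^\star_g(X),Y\bigr)
= \pi(X)\,\Bigl(\ind{Y\neq 1-f^\star_g(X)}-\ind{Y\neq f^\star_g(X)}\Bigr)\,.
\]
The factor in parentheses is $\pm1$, with the sign determined by whether $Y=f^\star_g(X)$.

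The second moment then reads $D^2=\pi(X)^2\leq\pi(X)$. Its conditional expectation given $X$ is at most $\pi(X)$.

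For the first moment, we compute $\E[D\mid X]=\pi(X)\bigl(\Prob(Y=f^\star_g(X)\mid X)-\Prob(Y\neq f^\star_g(X)\mid X)\bigr)$. Since $f^\star_g(X)$ is the Bayes label on $g$, this equals $\pi(X)\,|2\eta(X)-1|$. By the Bayes--Massart condition (Definition~\ref{Con:Bayes--Massart}), a.s.\ on $\{X\in g\}$ this is at least $2\beta_g\,\pi(X)$. Here $\pi(X)$ is $X$-measurable because $\cG_X$ depends only on $X$ and $\bq$ is fixed.

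Combining the two moments with the tower rule gives
\[
\E\bigl[\ind{X\in g}D^2\bigr]\leq \E\bigl[\ind{X\in g}\pi(X)\bigr]\leq \frac{1}{2\beta_g}\,\E\bigl[\ind{X\in g}\,\E[D\mid X]\bigr]=\frac{1}{2\beta_g}\,\E\bigl[\ind{X\in g}D\bigr],
\]
which is exactly Definition~\ref{Condition:group bernstein} with $\gamma_g=1/(2\beta_g)$.

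The only delicate point is the pointwise identity for $D$. It relies on the labels being binary, so that the mixture collapses to a single weight $\pi(x)$ on the opposite label. It also relies on $f^\star_g$ being equal to the group-wise Bayes classifier a.s.\ on $g$, which is where the first half of Definition~\ref{Con:Bayes--Massart} enters. The null set where the equality or the margin bound fails does not affect the expectations. Ties $\eta=1/2$ are excluded on $g$ by $\beta_g>0$.
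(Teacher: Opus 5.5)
Your proof is correct and follows essentially the same route as the paper: your $\pi(X)$ is the paper's $\rho_{\bq,g}$, the conditional first moment is computed as $|2\eta(X)-1|\,\pi(X)$ on $\{X\in g\}$, and the margin bound $|2\eta(X)-1|\geq 2\beta_g$ together with the tower rule concludes. The only difference is cosmetic: you get $D^2=\pi(X)^2\leq\pi(X)$ from the exact pointwise identity $D=\pm\pi(X)$, whereas the paper bounds $D^2$ by $\rho_{\bq,g}$ via Jensen's inequality and so does not need that identity.
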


\begin{proofsketch}
For any $\bq \in\Delta(\cF\times\cG)$, on the one hand, by Jensen's inequality and the form of the loss,
\begin{multline*}
\smash{\ind{X\in g} \Biggl(\sum_{(f,g')\in \cF\times \cG_X}q_{(f,g')|\cF\times \cG_X} \, \ell\bigl(f(X),Y\bigr)-\ell\bigl(f^\star_g(X),Y\bigr) \Biggr)^{\! 2}} \\
\leq \ind{X\in g}\,\sum_{(f,g')\in \cF\times \cG_X}q_{(f,g')|\cF\times \cG_X}\Bigl( \underbrace{\ell\bigl(f(X),Y\bigr)-\ell\bigl(f^\star_g(X),Y\bigr)}_{= \, \ind{f(X)\neq f^*_g(X)}} \Bigr)^{\! 2}\,,
\end{multline*}
while, by a classical result in classification (see, e.g., \citealp[Section~2.1]{DGL96}),
\begin{align*}
& \ind{X\in g} \sum_{(f,g')\in \cF\times \cG_X}q_{(f,g')|\cF\times \cG_X} \,\, \underbrace{\Exp\Bigl[\ell\bigl(f(X),Y\bigr)-\ell\bigl(f^\star_g(X),Y\bigr)
       \,\Bigm|\,X\Bigr]} \\ = \ & \ind{X\in g} \sum_{(f,g')\in \cF\times \cG_X}q_{(f,g')|\cF\times \cG_X} \,\, \bigl| 2\eta(X) - 1 \bigr| \,\,
    \indb{f(X)\neq f^\star_g(X)}\,.
\end{align*}
Combining both inequalities together with the lower bound $\bigl| 2\eta(X) - 1 \bigr| \geq 2\beta_g$ on $\{X \in g\}$
concludes. The straightforward details are provided in
Appendix~\ref{app:proofs-FR}.
\end{proofsketch}

\subsubsection{Risk-gap condition on group-stable regions}
\label{sec:riskgapcdt}

Expectation-gap assumptions provide a classical route to fast rates in online learning with experts (see, e.g., \citealp{pmlr-v35-gaillard14} or \citealp{koolen2016combining} and references therein).
We propose below a risk-gap assumption; however, to get fast rates, it is somewhat necessary
(see the lower bound discussed in Section~\ref{sec:sharp} and stated in Appendix~\ref{app:sharp}) to assume risk gaps
not only over groups but over smaller subsets,
regions of $\cX$ where group memberships are the same, which we call group-stable regions.
They are handy because all their elements
induce the same normalizations
in terms of activation functions.

\begin{definition} \label{def:eqv}
We say that $x,x' \in \cX$ are $\cG$--equivalent if
$x \in g \Leftrightarrow  x' \in g$ for all $g \in \cG$.
\end{definition}

\begin{property}
\label{prop:R}
If $x$ and $x'$ are $\cG$--equivalent,
then for all distributions $\bq$ over $\cF \times \cG$, the distributions $\activ(\bq,\cF \times \cG_x)$
and $\activ(\bq,\cF \times \cG_{x'})$ induced over active pairs
are equal.
\end{property}

\begin{property}
The equivalence classes of the $\cG$--equivalence relationship defined above
define a partition $\mathfrak{R}$ of $\cX$. An element
$\mathcal{R} \in \mathfrak{R}$ is called a group-stable region. Note that since $\cG$ is finite, $\mathfrak{R}$ is finite.
We even have that the group-stable regions $\mathcal{R} \in \mathfrak{R}$ such that $\cR \subseteq g$ form a partition of $g$.
\end{property}

We are now ready to state the main
assumption considered in this subsection.

\begin{definition}[Risk gap over group-stable subregions]
\label{Con:primal-risk gap}
    Distribution $\Prob^{(X,Y)}$ and group $g \in \cG$
    satisfy the risk gap condition over group-stable regions if there exists a positive gap $\delta_g>0$ such that, for any $f \in \cF$ that differs from
    $f^\star_g$ on a $\Prob^X$--non-negligible subset of $g$,
    for any group-stable region $\mathcal{R} \in \mathfrak{R}$ such that $\cR \subseteq g$ and $\P(X \in \mathcal{R}) > 0$,
    \[
   \E\Bigl[ \ell\bigl(f(X),Y\bigr) -\ell\bigl(f^\star_g(X),Y\bigr)\,\Bigm|\,X\in \mathcal{R} \Bigr]\geq \delta_g\,.
    \]
\end{definition}

Property~\ref{prop:R} is handy to combine
the lower bounds of Definition~\ref{Con:primal-risk gap}
into a lower bound holding on event $\{ X \in g \}$:
this is exactly how the proof of the following
results works.

\begin{restatable}{theorem}{thriskgapBernst}
\label{tm:primal risk gap implies Bernstein}
    If distribution $\Prob^{(X,Y)}$ and group $g$ satisfy the risk-gap condition of Definition~\ref{Con:primal-risk gap} with~$\delta_g > 0$, they also satisfy the Bernstein condition of Definition~\ref{Condition:group bernstein} with~$\gamma_g=1/\delta_g$.
\end{restatable}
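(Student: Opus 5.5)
Fix $\bq \in \Delta(\cF\times\cG)$ and abbreviate $\Delta_f(X,Y) = \ell\bigl(f(X),Y\bigr) - \ell\bigl(f^\star_g(X),Y\bigr) \in [-1,1]$. We use the marginal weights
\[
w_f(X) = \sum_{g' \in \cG_X} q_{(f,g')|\cF\times\cG_X}\,,
\]
so that the quantity inside both sides of Definition~\ref{Condition:group bernstein} is $\ind{X\in g}\sum_{f} w_f(X)\,\Delta_f(X,Y)$. By Property~\ref{prop:R}, $w_f(X)$ is constant on each group-stable region: for $\cR \in \mathfrak{R}$ with $\cR\subseteq g$ there are numbers $w_f^{\cR}\ge 0$ summing to one over $f$ such that $w_f(X) = w_f^{\cR}$ on $\{X\in\cR\}$. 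Since the regions $\cR\subseteq g$ partition $g$, both expectations decompose as sums over such $\cR$ with $\P(X\in\cR)>0$, and it suffices to prove the inequality region by region.

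For the left-hand side we use Jensen's inequality (convexity of the square), together with $\Delta_f^2 \le \ind{f(X)\neq f^\star_g(X)}$ for losses in $[0,1]$:
\[
\E\Bigl[\ind{X\in\cR}\Bigl(\sum_f w_f^{\cR}\Delta_f\Bigr)^{2}\Bigr] \le \sum_f w^{\cR}_f\,\E\bigl[\ind{X\in\cR}\,\Delta_f^2\bigr] \le \sum_{f} w^{\cR}_f\,\P(X\in\cR)\,.
\]
In the last sum we may restrict to the set $\cF_g^{\neq}$ of those $f$ that differ from $f^\star_g$ on a non-negligible subset of $g$. Indeed, if $f$ agrees with $f^\star_g$ almost surely on $g$, then $\Delta_f=0$ almost surely on $\{X\in g\}$. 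Such $f$ therefore contribute nothing to the left-hand side, and we can drop them before applying Jensen by renormalizing. Simpler still, note that they contribute zero inside the sum, so $\sum_f w_f\Delta_f = \sum_{f\in\cF_g^{\neq}} w_f\Delta_f$. The squared term is then at most $\bigl(\sum_{f\in\cF_g^{\neq}} w_f\bigr)\sum_{f\in\cF_g^{\neq}} w_f\Delta_f^2$ by Cauchy--Schwarz, and this is at most $\sum_{f\in\cF_g^{\neq}} w_f^{\cR}$ on $\{X\in\cR\}$.

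For the right-hand side, linearity and the constancy of the weights on $\cR$ give
\[
\E\Bigl[\ind{X\in\cR}\sum_f w_f^{\cR}\Delta_f\Bigr] = \P(X\in\cR)\sum_{f\in\cF_g^{\neq}} w_f^{\cR}\,\E[\Delta_f \mid X\in\cR] \ge \delta_g\,\P(X\in\cR)\sum_{f\in\cF_g^{\neq}} w_f^{\cR}\,,
\]
where the inequality is Definition~\ref{Con:primal-risk gap}. Comparing the two displays shows that, on each region, the left-hand side is at most $(1/\delta_g)$ times the right-hand side. Summing over the regions $\cR\subseteq g$ then yields the Bernstein condition with $\gamma_g = 1/\delta_g$.

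The main point needing care is exactly this exploitation of Property~\ref{prop:R}. The gap condition holds only for conditional expectations on regions, not pointwise, so the weights must be pulled out of the conditional expectation. This is legitimate only because they are constant on each $\cR$; on a whole group $g$ the argument would fail, consistent with the lower bound mentioned in Section~\ref{sec:sharp}. One also has to treat correctly the functions $f$ that coincide with $f^\star_g$ almost surely on $g$, which are excluded from the gap condition; they contribute zero to both sides, so handling them is routine.
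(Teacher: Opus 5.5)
Your proof is correct and follows the paper's argument. You decompose over the group-stable regions $\cR\subseteq g$ of positive mass, bound the squared aggregated loss difference by the total weight $\sum_{f\in\cF_g^{\neq}} w_f^{\cR}$ on the functions that are not a.s.\ equal to $f^\star_g$ on $g$, pull these weights out of the conditional expectations via Property~\ref{prop:R}, and lower bound each $\E[\Delta_f\mid X\in\cR]$ by $\delta_g$. The only cosmetic difference is that you bound the square via Jensen (or Cauchy--Schwarz), whereas the paper uses directly $\bigl|\sum_f w_f\Delta_f\bigr|\le\sum_{f\in\cF_g^{\neq}}w_f\le 1$.
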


\begin{proofsketch}
We denote by $\mathfrak{R}_g$ the collection of group-stable regions considered in the statement and have,
by a crude bounding and then by the law of total probabilities that
\begin{multline*}
\E\Biggl[ \biggl(\sum_{(f,g')\in \cF\times \cG_X}q_{(f,g')|\cF\times \cG_X}
    \smash{\overbrace{\ell\bigl(f(X),Y\bigr)-\ell\bigl(f^\star_g(X),Y\bigr)}^{\in [-1,1]}} \biggr)^{\!\!2}\,\Biggm|\,X\in g\Biggr] \\
\leq \frac{1}{\P(X\in g)}\sum_{\mathcal{R} \in \mathfrak{R}_g} \P(X\in \cR)\,\,\E\!\!\left[\sum_{(f,g')\in \cF_g\times \cG_X}q_{(f,g')|\cF\times \cG_X} \,\middle|\,X\in \cR\right]
\end{multline*}
By Definition~\ref{def:eqv} and Property~\ref{prop:R}, we may take out the $q_{(f,g')|\cF\times \cG_X}$ from inside
the conditional expectations, and then artificially introduce an upper bound of the form $1/\delta_g$ times
the risk difference considered in Definition~\ref{Con:primal-risk gap}. A final application
of the law of total probabilities concludes. All details are provided in
Appendix~\ref{app:proofs-FR}.
\end{proofsketch}

\begin{remark}[Combining conditions]
\label{rk:combine}
In binary classification, when some group-stable
subregions are controlled through a positive risk-gap condition as in this section, while the remaining
group-stable subregions satisfy a suitable Bayes--Massart margin condition as in Section~\ref{sec:Massart}, then
the Bernstein condition of Definition~\ref{Condition:group bernstein}
is still satisfied; see Appendix~\ref{sec:combine}.
\end{remark}

\section{Multi-group learning: $1/\sqrt{n}$ lower bounds on excess risk}
\label{sec:sharp}

The Master condition considered to get fast rates, namely the Bernstein condition of Definition~\ref{Condition:group bernstein}, is implied by the risk-gap condition over group-stable subregions of
Definition~\ref{Con:primal-risk gap}.
In this section, we show that natural relaxations
of these two assumptions are not enough to get
fast rates. We consider
\begin{itemize}[itemsep=1pt, topsep=1pt]
	\item[(i)] a weaker Bernstein condition \underline{holding only for (Dirac masses at) functions $f \in \cF$ and} \underline{groups $g$} (not on all distributions $\bq$ over $\cF \times \cG$ as needed in Definition~\ref{Condition:group bernstein}),
	\item[(ii)] a weaker risk-gap condition \underline{only over groups} (but not over all group-stable subregions as stated in Definition~\ref{Con:primal-risk gap}).
\end{itemize}
We prove that the combination of (i)+(ii) is insufficient in general to guarantee fast rates. We present a theorem sketch, with formal statement and proof provided in Appendix~\ref{app:sharp}. The notation $\P_{\mathbf{P}}$ below means that the underlying $\P$
induces the distribution $\P^{(X,Y)} = \mathbf{P}$.

\begin{theoremsketch}[Slow-rate lower bound]
\label{tm:sketch:sharpness}
Consider the setting of
binary classification $\cY = \{0,1\}$ with binary loss. There exists a context space $\cX$, a classifier class $\cF$, a group structure $\cG$ and a class $\cP$ of distributions on $\cX \times \cY$
such that the following properties hold.
For each $\mathbf{P} \in \cP$ and every group $g\in\cG$, the pair
$(\mathbf{P},g)$ satisfies both the weaker Bernstein
condition \emph{(i)} with $\gamma_g = 6$,
and the weaker risk-gap condition \emph{(ii)} with gap $\delta_g >0$ on group $g$.
Nevertheless, for any constant $\beta \geq 1$,
\[
\forall n \geq 1, \qquad
    \inf_{\hat f_n}\ \max_{\mathbf{P} \in \cP} \ \P_{\mathbf{P}} \!\left(\exists g \in \cG \ \text{\rm s.t. } \ \cE_{\mathbf{P},g} \bigl(\hat f_n\bigr)
    \geq \frac{1}{14\beta\sqrt{n \,\P_{\mathbf{P}}(X \in g)}} \right) \geq \frac{1}{2} - \frac{0.22}{\beta}\,,
\]
where
$\cE_{\mathbf{P},g}$ denotes the excess risk over group $g$ under when $(X,Y) \sim \mathbf{P}$,
the infimum is taken over all, possibly randomized, classifiers
$\hat{f}_n : \cX \to [0,1]$ that are measurable functions of a sample
$D_n = \bigl((X_1,Y_1),\ldots,(X_n,Y_n)\bigr)$ i.i.d.\ according to $\mathbf{P}$.
\end{theoremsketch}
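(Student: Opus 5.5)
We construct a two-point (Le Cam) instance in which the per-group comparators differ on overlapping groups. Take $\cX = \{x_1,x_2,x_3\}$ and two overlapping groups $g_1 = \{x_1,x_2\}$ and $g_2 = \{x_2,x_3\}$. Their intersection $\{x_2\}$ is a group-stable region on which the learner must commit to a single prediction. Take $\cF = \{f_0,f_1\}$ with
\[
f_0 \equiv 0 \text{ on } \{x_1,x_2\}, \qquad f_1 \equiv 1 \text{ on } \{x_2,x_3\},
\]
and the opposite values elsewhere, chosen so that $f_0$ is very good on $x_1$ and $f_1$ is very good on $x_3$. Let $\Prob^X$ put mass $p$ on each point, so that $\P(X\in g)=2p$. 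Let $\eta(x_1)$ and $\eta(x_3)$ be fixed and strongly separated, so that $f_0$ beats $f_1$ on $x_1$ by a constant margin and the reverse holds on $x_3$. The hard point is $x_2$, where we set $\eta(x_2) = 1/2 \pm \varepsilon$; the two signs give the two distributions $\mathbf{P}_+,\mathbf{P}_- \in \cP$.

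The margin on $x_1$ makes $f^\star_{g_1}=f_0$ under both distributions, with a gap on $g_1$ of order a constant minus $\varepsilon$. Symmetrically, $f^\star_{g_2}=f_1$ with a gap on $g_2$. This gives condition (ii) with some fixed $\delta_g>0$. The gap fails on the subregion $\{x_2\}$: there $f_0$ and $f_1$ differ by only $\pm 2\varepsilon$, and the sign is the wrong one for one of the two groups.

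For condition (i), fix a Dirac at $f$ and a group $g$. The squared loss difference is at most the disagreement indicator, whose mass on $g$ is at most $2p$. The expected loss difference is at least a constant times $p$, coming from the separated point, minus $2\varepsilon p$ from $x_2$. Choosing $\eta(x_1),\eta(x_3)$ suitably, for instance $\eta(x_1)=0$ and $\eta(x_3)=1$ with $\varepsilon\le 1/6$, the ratio is at most $6$. The argument works because a Dirac cannot mix predictions on $x_2$ differently across groups. A general $\bq$ can mix them, which is why Definition~\ref{Condition:group bernstein} fails here. I will only verify the inequalities and tune the constants so that $\gamma_g=6$.

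For the lower bound, any $\hat f_n$ predicts some $\hat\pi = \P(\hat f_n(x_2)=1)$. The excess risk satisfies
\[
p\,|2\eta(x_2)-1|\,\bigl|\hat\pi - \ind{\eta(x_2)\ge 1/2}\bigr| \,\le\, \cE_{g}(\hat f_n)\cdot 2p
\]
for the group whose comparator disagrees with the Bayes label at $x_2$. More precisely, under $\mathbf{P}_+$ we have $\cE_{g_1}\ge \varepsilon(\hat\pi)$, roughly, and under $\mathbf{P}_-$ we have $\cE_{g_2}\ge\varepsilon(1-\hat\pi)$. So any learner suffers excess $\gtrsim\varepsilon/2$ on some group unless it identifies the sign of $\eta(x_2)-1/2$. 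Roughly $np$ samples land on $x_2$, and $\mathrm{KL}(\mathbf{P}_+^{\otimes n},\mathbf{P}_-^{\otimes n}) \lesssim np\,\varepsilon^2$. Taking $\varepsilon = c/(\beta\sqrt{np})$ and applying Le Cam's lemma together with Pinsker's inequality yields failure probability $\ge 1/2 - O(1/\beta)$.

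The main obstacle is bookkeeping to get exactly $1/(14\beta\sqrt{n\P(X\in g)})$ and $0.22/\beta$ for all $n\ge1$. Small $n$ can force $\varepsilon>1/6$; I would handle this by capping $\varepsilon$ and noting that the stated threshold is then already below the constant gap, or by letting $p$ vary with $n$.
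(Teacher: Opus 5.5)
Your construction and strategy match the paper's proof. Both use three contexts, two overlapping groups meeting at one noisy point, the two constant classifiers with deterministic labels on the private points, and a two-point Le Cam/Pinsker argument on the sign of $\eta(x_2)-1/2$. The paper uses masses $1/4,1/4,1/2$ and $|\epsilon|\le 1/8$; your uniform masses work equally well.

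However, your key lower-bound step has the hypotheses and groups swapped, and as written it is false. Under $\mathbf{P}_+$, the comparator of $g_1$ is $f_0$, which predicts $0$ at $x_2$ while the Bayes label is $1$. Predicting $1$ there therefore \emph{beats} the comparator, so the $x_2$-contribution to $\cE_{g_1}$ is $-\varepsilon\hat\pi$; for instance, $\hat f=(0,1,1)$ has $\cE_{g_1}=-\varepsilon<0$. Excess risk is forced on the group whose comparator \emph{agrees} with the Bayes label at $x_2$. Under $\mathbf{P}_+$ one has $\cE_{g_2}\ge\varepsilon(1-\hat\pi)$, and under $\mathbf{P}_-$ one has $\cE_{g_1}\ge\varepsilon\hat\pi$. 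Your first display is correct only with ``disagrees'' replaced by ``agrees''. After this swap, the events $\{\hat\pi\ge 1/2\}$ and $\{\hat\pi<1/2\}$ play exactly the roles of $C$ and $C^c$ in the paper, and your Le Cam step goes through.

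Two smaller repairs are also needed.

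First, read literally, ``the opposite values elsewhere'' gives $f_0=(0,0,1)$ and $f_1=(0,1,1)$. These coincide off $x_2$, which destroys both the group gap and condition (i). You need $f_1(x_1)=1$ and $f_0(x_3)=0$, i.e.\ $f_0\equiv 0$ and $f_1\equiv 1$, which is what your margin computations actually use.

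Second, your proposed treatment of small $n$ would not work. The constant gap between $f_0$ and $f_1$ on a group is not a lower bound on the learner's excess risk: a learner predicting $0$ at $x_1$ and $1$ at $x_3$ pays nothing there. So capping $\varepsilon$ below the needed value simply loses the bound. Fortunately no capping is needed. Take $\cP=\{\mathbf{P}_\varepsilon : |\varepsilon|\le 1/4\}$; this still gives:
\begin{itemize}
\item the expert-wise Bernstein ratio $2/(1-2|\varepsilon|)\le 4\le 6$;
\item group gaps at least $(1-2|\varepsilon|)/2\ge 1/4$;
\item per-sample $\mathrm{KL}(\mathbf{P}_{-\varepsilon}\,\|\,\mathbf{P}_{\varepsilon})=\tfrac{2\varepsilon}{3}\log\tfrac{1+2\varepsilon}{1-2\varepsilon}\le 4\varepsilon^2$.
\end{itemize}
Then choose $\varepsilon=\sqrt{3}/(7\sqrt{2}\,\beta\sqrt{n})\le 0.18$. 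This makes $\varepsilon/2$ exactly equal to the threshold $1/(14\beta\sqrt{2n/3})$. It yields failure probability at least $1/2-\sqrt{3}/(14\beta)\ge 1/2-0.22/\beta$ for every $n\ge 1$.
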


\section{Extensions}
\label{sec:ext}

\paragraph{Intermediate rates.}
The hard margin condition discussed in Section~\ref{sec:Massart}
may be relaxed into Mammen-Tsybakov margin conditions, leading to
a family of upper bounds on the excess risks of the order $\bigl( n \, \P(X \in g) \bigr)^{-\alpha}$,
where $\alpha \in [1/2,\,1]$; see Appendix~\ref{sec:interm-rates}.

\paragraph{Infinite classes $\cF$ and $\cG$.}
  All previous results were established under the potentially restrictive assumption that $\cF$ and $\cG$ are finite. It is thus natural to wonder how to lift this assumption, even for the slow rates. \citet{pmlr-v162-tosh22a} discuss this issue after their Corollary~9, claiming that finiteness of $\cF$ can be bypassed, for example, by assuming that $\cF$ has a finite VC dimension or some other suitable notion of complexity, while removing the dependency on $|\cG|$ is less straightforward. Actually, we show that, at least in the standard binary classification setup, one can get rid of the finiteness of $\cG$, replacing it by finiteness of its VC dimension.
Details about infinite classes are given in Appendix~\ref{sec:infinite}.

\clearpage
\appendix

\section*{AI Statement}

ChatGPT 5.5 was used to simplify the proofs of Theorem~\ref{tm:fast rate fixed point} and Theorem~\ref{th:31}. GLM 5.3 was used to confirm that the proof of Lemma~\ref{lem:for_recursion} is correct. The authors take full responsibility for the mathematical results and the final manuscript.

\ \\

\section*{Appendices}

\ \\

\paragraph{Important notice.}
The proof sketches in the main body should be
fully sufficient, for readers familiar with the domain,
to follow, validate, and appreciate the results claimed,
except for the lower bound of Theorem~\ref{tm:sketch:sharpness},
for which no such proof sketch fit (see Appendix~\ref{app:sharp} for
its formal statement and proof).
In particular, Appendices~\ref{app:conc}--\ref{app:proofs-reduc}--\ref{app:proofs-FR}
(except perhaps the sub-appendix~\ref{sec:combine} about mixing local Bayes--Massart or local risk-gap conditions)
merely implement the calculations indicated in the main body.

\ \\

\paragraph{Table of contents.}
The supplementary material, whose reading is not essential to evaluating
the submission (as explained above), is organized as follows.
\begin{itemize}[itemsep=3pt, topsep=3pt]
\item Appendix~\ref{app:conc} recalls the Bernstein-Freedman inequality,
which admits two versions, one with a deterministic bound on the sum of
conditional variances, and one involving directly the latter.
\item Appendix~\ref{app:proofs-reduc} states detailed versions of the proofs
of the slow-rate results of Section~\ref{sec:reduc}, implementing the calculations
of all constants.
\item Appendix~\ref{app:proofs-FR} states detailed versions of the proofs
of the fast-rate results of Section~\ref{sec:FR}, implementing the calculations
of all constants.
\item Appendix~\ref{app:LBs} states and proves three lower bounds on excess risks,
the one discussed in Section~\ref{sec:sharp} as well as two other lower bounds
alluded at in the text and that provide more contexts on which type of dependencies
on $\card{\cG}$ are expected or can be achieved in the case of $1/\sqrt{n}$ slow rates.
\item Appendix~\ref{sec:interm-rates} states and proves the intermediate-results
claimed in Section~\ref{sec:ext} as one of the two possible extensions.
\item Appendix~\ref{sec:infinite} does so for the other extension, relative to
multi-group binary classification for infinite classes of benchmark functions $\cF$
and groups $\cG$.
\item Appendix~\ref{app:Sleeping-experts liteerature review} provides
some background and literature review on online adversarial learning with sleeping experts
and, for the sake of completeness and for the sake of slightly improving their regret
bounds, discusses (adaptive) versions of the algorithms by \citet[Section 6]{blum2007external}
and \citealp{pmlr-v35-gaillard14}, corresponding, respectively,
to Lemmas~\ref{lm:regr-bd} and~\ref{lm:second-order regret}.
\end{itemize}

\clearpage
\section{Reminder on the Bernstein-Freedman inequality}
\label{app:conc}

We first restate the simplest form of the inequality,
then apply it to relate the empirical number of times $N_g$
a given group $g$ appears on a data sample of size $n$ to
its expectation $n \, \P(X \in g)$,
and finally state a more empirical version of the
Bernstein-Freedman inequality involving directly the sum of
conditional variances and not a deterministic upper bound thereon.

\begin{lemma}[{original form, see \citealp{freedman1975tail}}]
\label{lm:constant freedman}
Let $X_1,\ldots,X_n$ be a martingale difference sequence
with respect to the filtration $(\cH_0,\cH_1,\ldots,\cH_n)$.
Assume that there exist constants $B$ and $V$ such that,
almost surely,
\[
\forall 1\leq t\leq n, \quad |X_t|\leq B \qquad \mbox{and} \qquad \sum_{t=1}^n\E\bigl[X_t^2\,\big|\,\cH_{t-1}\bigr]\leq V\,.
    \]
Then, for any $\delta \in (0,1)$, with probability at least $1-\delta$,
    \[
    \sum_{t=1}^nX_t \leq
    \sqrt{2 V \log(1/\delta)} + \frac{B}{2} \log(1/\delta)\,.
    \]
\end{lemma}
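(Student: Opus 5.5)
The plan is the standard exponential-supermartingale (Chernoff) argument for martingales with bounded increments. Fix $\lambda>0$. First I will prove the one-step bound: for a random variable $Z$ with $Z\le B$, $\E[Z\mid\cH]=0$ and $\E[Z^2\mid\cH]=\sigma^2$,
\[
\E\bigl[e^{\lambda Z}\bigm|\cH\bigr] \le \exp\bigl(\sigma^2\,\phi(\lambda B)/B^2\bigr), \qquad \phi(u)=e^u-1-u .
\]
This follows because $u\mapsto \phi(u)/u^2$ is nondecreasing, so $e^{\lambda z}\le 1+\lambda z+z^2\phi(\lambda B)/B^2$ for $z\le B$. Taking conditional expectations and using $1+x\le e^x$ gives the bound. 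Applied to $Z=X_t$ with $\sigma_t^2=\E[X_t^2\mid\cH_{t-1}]$, it shows that
\[
M_k=\exp\Bigl(\lambda\sum_{t\le k}X_t-\frac{\phi(\lambda B)}{B^2}\sum_{t\le k}\sigma_t^2\Bigr)
\]
is a nonnegative supermartingale with $\E[M_n]\le 1$.

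Next I use the almost-sure bound $\sum_t\sigma_t^2\le V$. Markov's inequality then gives, with probability at least $1-\delta$,
\[
\sum_t X_t\le \frac{\log(1/\delta)}{\lambda}+\frac{V\phi(\lambda B)}{\lambda B^2}.
\]
Since $\lambda$ is deterministic, no union bound is needed.

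The remaining step, which I expect to be the main obstacle, is choosing $\lambda$ to obtain exactly the constants $\sqrt{2V\log(1/\delta)}+\tfrac{B}{2}\log(1/\delta)$. The Bernstein route uses $\phi(u)\le \frac{u^2}{2(1-u/3)}$ for $u<3$. This reduces the bound to $\inf_\lambda\bigl[L/\lambda+\lambda V/(2(1-\lambda B/3))\bigr]$ with $L=\log(1/\delta)$, whose optimal value is $\sqrt{2VL}+\tfrac{B}{3}L$. That is even better than the stated $\tfrac{B}{2}L$. To verify the optimum, set $\lambda=\frac{\sqrt{2L}}{\sqrt V+ B\sqrt{2L}/3}$ and expand. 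The two terms combine to $\sqrt{2VL}+BL/3$ after algebra, which I would check carefully. Because $B/3\le B/2$, the claimed inequality follows for all $B>0$, and the degenerate cases $V=0$ or $B=0$ follow by letting parameters tend to their limits.

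A subtlety is that the statement assumes $|X_t|\le B$, while only the one-sided bound $X_t\le B$ is used. This causes no difficulty. The only genuine care needed is the monotonicity of $\phi(u)/u^2$, including at negative $u$. It holds because $\phi(u)/u^2=\sum_{k\ge2}u^{k-2}/k!$ is increasing on $\mathbb{R}$, which one checks via its derivative or an integral representation, $\phi(u)/u^2=\int_0^1(1-s)e^{su}\,ds$, which is clearly increasing in $u$.
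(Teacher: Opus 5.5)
Your proof is correct. The paper gives no proof of this lemma and only cites \citet{freedman1975tail}, whose proof is exactly the exponential-supermartingale argument you describe: the one-step bound $\E[e^{\lambda Z}\mid\cH]\le\exp\bigl(\sigma^2\phi(\lambda B)/B^2\bigr)$, followed by Markov's inequality with a deterministic $\lambda$. Your Bernstein-type choice of $\lambda$ gives the sharper term $\frac{B}{3}\log(1/\delta)$, which implies the stated $\frac{B}{2}\log(1/\delta)$; the case $V=0$, where your $\lambda$ reaches $\lambda B=3$, is trivial because then all $X_t=0$ almost surely.
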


The Bernstein-Freedman inequality applied to the i.i.d.\ variables
$\ind{X_t \in g}$, together
with $|\cG|$ union bounds, immediately yields the following lemma, used throughout this article
to control the group-wise sample sizes
\[
N_g \eqdef \sum_{k=1}^n \ind{X_k\in g} \qquad \mbox{around their expectations} \qquad n\,\P(X \in g)\,.
\]
We present a bound from above but a symmetric bound from below could also be stated.

\begin{lemma}\label{lm:group mass concentration}
    For all $0<\delta<1$, with
    probability at least $1-\delta$,
    \[
    \forall g\in \cG, \quad \qquad N_g-n\,\Prob(X\in g) \leq
    \ \sqrt{2{n\,\Prob(X \in g)\log\bigl(\card{\cG}/\delta\bigr)}} + \frac{\log\bigl(\card{\cG}/\delta\bigr)}{2} \,.
    \]
\end{lemma}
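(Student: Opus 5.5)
The plan is to apply Lemma~\ref{lm:constant freedman} once for each group and then take a union bound over the $\card{\cG}$ groups. Fix $g\in\cG$ and define, for $t\in[n]$,
\[
Z_{t,g} \eqdef \ind{X_t\in g} - \Prob(X\in g)\,,
\]
with $\cH_t$ the $\sigma$-algebra generated by $(X_1,Y_1),\ldots,(X_t,Y_t)$ and $\cH_0$ trivial. The samples are i.i.d., so $\E[Z_{t,g}\mid\cH_{t-1}] = \E[Z_{t,g}] = 0$. Hence $(Z_{t,g})_t$ is a martingale difference sequence; in fact it is a sequence of independent centered variables. Its sum is exactly $N_g - n\,\Prob(X\in g)$.

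Next I check the two hypotheses of Lemma~\ref{lm:constant freedman}.
\begin{itemize}
\item \emph{Range.} The indicator lies in $\{0,1\}$ and $\Prob(X\in g)\in[0,1]$, so $|Z_{t,g}|\leq 1$ and we may take $B=1$.
\item \emph{Variance.} The conditional variance is deterministic:
\[
\E[Z_{t,g}^2\mid\cH_{t-1}] = \Prob(X\in g)\bigl(1-\Prob(X\in g)\bigr) \leq \Prob(X\in g)\,.
\]
Summing over $t$ gives $V = n\,\Prob(X\in g)$.
\end{itemize}
Lemma~\ref{lm:constant freedman} with confidence level $\delta/\card{\cG}$ then yields, with probability at least $1-\delta/\card{\cG}$,
\[
N_g - n\,\Prob(X\in g) \leq \sqrt{2\,n\,\Prob(X\in g)\log\bigl(\card{\cG}/\delta\bigr)} + \tfrac12\log\bigl(\card{\cG}/\delta\bigr)\,.
\]

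Finally, a union bound over the $\card{\cG}$ failure events, each of probability at most $\delta/\card{\cG}$, gives total failure probability at most $\delta$. So the display holds simultaneously for all $g\in\cG$ with probability at least $1-\delta$, which is the claim.

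There is no real obstacle here. The only points to watch are using the constant $B=1$, so that the additive term is $\frac12\log(\card{\cG}/\delta)$, and dropping the factor $(1-\Prob(X\in g))$ in the variance so that the bound matches the stated form.
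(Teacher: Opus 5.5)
Your proof is correct and is exactly the paper's argument. The paper likewise applies the Bernstein--Freedman inequality (Lemma~\ref{lm:constant freedman}) to the i.i.d.\ centered indicators, with $B=1$ and $V=n\,\Prob(X\in g)$ at confidence level $\delta/\card{\cG}$, and then takes a union bound over the $\card{\cG}$ groups.
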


Finally, we recall a corollary of Lemma~\ref{lm:constant freedman},
proved via a peeling technique, and stating a
deviation inequality adapting to the sum of conditional variances, for which no deterministic bound $V$ is required anymore.

\begin{lemma}[{sharper version, see \citealp[Lemma~2]{bartlett2008high}}]
\label{lm:strong Freedman}
Let $X_1,\ldots,X_n$ be a martingale difference sequence
with respect to the filtration $(\cH_0,\cH_1,\ldots,\cH_n)$.
Assume that there exists a constant $B$ such that,
almost surely,
\[
\forall 1\leq t\leq n, \quad |X_t|\leq B \qquad \mbox{and} \qquad \mbox{denote} \quad V_n = \sum_{t=1}^n\E\bigl[X_t^2\,\big|\,\cH_{t-1}\bigr]\,.
    \]
Then, for any $\delta \in (0,1)$, with probability at least $1-\delta$,
\[
    \sum_{t=1}^n X_t
    \leq
    4
    \sqrt{V_n \log\!\left(\frac{\log_2 n}{\delta}\right)} +2 B \log\left(\frac{\log_2 n}{\delta}\right).
\]
\end{lemma}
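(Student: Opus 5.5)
The statement is Lemma~\ref{lm:strong Freedman}, a variance-adaptive version of Lemma~\ref{lm:constant freedman}. I would prove it by a standard peeling (stratification) argument over the possible values of the random quantity $V_n$. The key point is that Lemma~\ref{lm:constant freedman} can only be applied with a deterministic variance bound $V$. To get around this, I first use a stopped or truncated martingale. Fix a deterministic level $v>0$, and define $\tau_v$ as the last index $t$ for which $\sum_{s\le t}\E[X_s^2\mid\cH_{s-1}]\le v$. This is a stopping time, because the partial sums of conditional variances are predictable. The sequence $X_t\ind{t\le\tau_v}$ is then again a martingale difference sequence. It is bounded by $B$, and its conditional variances sum to at most $v$ deterministically. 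Lemma~\ref{lm:constant freedman} therefore gives, with probability at least $1-\delta'$,
\[
\sum_{t=1}^n X_t\ind{t\le\tau_v}\le \sqrt{2v\log(1/\delta')}+\tfrac{B}{2}\log(1/\delta').
\]
On the event $\{V_n\le v\}$ we have $\tau_v=n$, so the same bound holds for $\sum_t X_t$ itself.

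Next I choose a geometric grid of levels $v_k=2^k v_0$ for $k=0,1,\ldots,K$, with $K\approx\log_2 n$. I take $v_0$ of order $B^2\log(\log_2 n/\delta)$ and note that $V_n\le nB^2$ always, so about $\log_2 n$ levels cover the whole range. I apply the bound above at each level with $\delta'=\delta/K$ and take a union bound over the levels. On the resulting event, there are two cases.
\begin{itemize}
\item If $v_{k-1}<V_n\le v_k$, then $v_k\le 2V_n$, and the bound becomes
\[
\sqrt{4V_n\log(K/\delta)}+\tfrac{B}{2}\log(K/\delta).
\]
\item If $V_n\le v_0$, the square-root term is at most $\sqrt{2v_0\log(K/\delta)}$. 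With the choice of $v_0$, this is of order $B\log(K/\delta)$ and is absorbed into the linear term $2B\log(\cdot)$.
\end{itemize}
Both cases fit inside $4\sqrt{V_n\log(\log_2 n/\delta)}+2B\log(\log_2 n/\delta)$, with some room to spare in the constants.

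The main obstacle is bookkeeping rather than ideas. The number of grid levels has to be exactly at most $\log_2 n$, so that the union bound costs $\log(\log_2 n/\delta)$ and not $\log((\log_2 n+1)/\delta)$. One has to make sure that the choice of $v_0$, together with the top level $v_K\ge nB^2$, is compatible with this count. If it is not, I would enlarge the bottom cell, for instance by setting $v_0=nB^2 2^{-K}$ with $K=\lfloor\log_2 n\rfloor$. The contribution of that bottom cell is then controlled by the crude deterministic bound $\sum_t X_t\le$ (Freedman at $v_0$), using that the constants $4$ and $2$ leave room. Alternatively, for small $n$ (say $n<4$), where $\log_2 n$ may be below $1$, I would handle the case separately via $|\sum_t X_t|\le nB$. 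Since the paper cites \citet[Lemma~2]{bartlett2008high} for exactly this statement, I would follow their constants and verify the edge cases.
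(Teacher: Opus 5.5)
The paper gives no proof of its own and defers to \citet[Lemma~2]{bartlett2008high}. That argument is the peeling you describe: Freedman's inequality on slices of the conditional variance, then a union bound. Your stopping-time device is a valid way to use Lemma~\ref{lm:constant freedman} on $\{V_n\le v\}$, since $\ind{t\le\tau_v}$ is $\cH_{t-1}$-measurable.

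The gap is in the part you call bookkeeping, and as you set it up it does not close. Write $L=\log(\log_2 n/\delta)$ and apply each level at confidence $\exp(-L)=\delta/\log_2 n$. Absorbing the bottom cell into $2BL$ via Lemma~\ref{lm:constant freedman} forces $v_0=cB^2L$ with $c\le 9/8$, because $(\sqrt{2c}+\tfrac12)BL\le 2BL$. With ratio $2$ in $V_n$, covering $[0,nB^2]$ then takes $\lceil\log_2(n/(cL))\rceil+1$ cells. Note that levels $0,\dots,K$ are $K+1$ events, not $K$. This count exceeds $\log_2 n$ unless $L\ge 16/9$, whereas $L$ can be as small as $\log\log_2 n$.

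Your remedies do not repair this:
\begin{itemize}
\item Setting $v_0=nB^2 2^{-\lfloor\log_2 n\rfloor}$ still yields $\lfloor\log_2 n\rfloor+1>\log_2 n$ cells. Since then $v_0\ge B^2$, the bottom cell costs at least $B\sqrt{2L}$. The term $2BL$ does not dominate this when $L$ is small, and nothing else on the right-hand side can absorb it when $V_n=0$.
\item The crude bound $\bigl|\sum_t X_t\bigr|\le nB$ cannot serve for small $n$. The right-hand side can be below $B$: it is about $0.92B$ for $n=3$, $\delta$ close to $1$ and $V_n$ close to $0$.
\end{itemize}
The underlying obstruction at small $L$ is that Lemma~\ref{lm:constant freedman} charges $\sqrt{2v_0L}$ at the bottom level even when $V_n=0$.

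The missing choice is to peel geometrically in $\sigma=\sqrt{V_n}$ with ratio $2$, i.e.\ ratio $4$ in $V_n$, as \citet{bartlett2008high} do. Their bound $2\max\{2\sigma,B\sqrt{L}\}\sqrt{L}$ is where the constants $4$ and $2$ come from. On a cell $v_{k-1}<V_n\le v_k=4v_{k-1}$ one has $v_k<4V_n$. Lemma~\ref{lm:constant freedman} then gives $\sqrt{8V_nL}+\tfrac B2L\le 4\sqrt{V_nL}+2BL$, while the number of cells is roughly halved. That alone handles large $n$.

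The cited lemma also carries side conditions that the paper's restatement drops: $T\ge4$, and $\delta<1/e$ in its own parametrization (i.e.\ $L>1$ here). To cover all $n\ge3$ and $\delta\in(0,1)$, treat the bottom level with the exponential supermartingale underlying Lemma~\ref{lm:constant freedman}, rather than with the lemma itself:
\begin{itemize}
\item Let $\phi(u)=\exp(u)-u-1$ and let $V_t$ be the partial sums of conditional variances. Then $\exp\bigl(\lambda\sum_{s\le t}X_s-\phi(\lambda B)V_t/B^2\bigr)$ is a supermartingale.
\item The single value $\lambda=1/(2B)$ gives, with probability at least $1-\exp(-L)$, $\sum_t X_t\le 2BL+0.3\,V_n/B$. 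This is at most $4\sqrt{V_nL}+2BL$ whenever $V_n\le 170B^2L$.
\item Since $L>\log\log_2 3>0.46$ and $V_n\le nB^2$, this one event settles all $3\le n\le 78$.
\item For larger $n$, ratio-$4$ peeling above $170B^2L$ adds at most $\lceil\tfrac12\log_2(n/(170L))\rceil$ cells. The total count stays below $\log_2 n$, so the union bound costs exactly $\log(\log_2 n/\delta)$.
\end{itemize}
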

\clearpage

\clearpage

\section{Proofs of the slow-rate results of Section~\ref{sec:reduc}}
\label{app:proofs-reduc}

This appendix provides detailed proofs of Theorem~\ref{tm:reduction risk guarantee}
and Corollary~\ref{cor:simple-reduc}, both stating
our simplified view on the slow-rates results by \citet{pmlr-v162-tosh22a}.

\subsection{Proof of Theorem~\ref{tm:reduction risk guarantee}}
\label{sec:tm:reduction risk guarantee}

\thredriskg*

\begin{proof}
    For any $g\in \cG$ and $1 \leq t \leq n$, we introduce
    \begin{equation}\label{def:instantaneous regret}
        W_{t,g}\eqdef \ind{X_t\in g}\Biggl(\sum_{f\in \cF}\sum_{g'\in \cG_{X_t}}p_{t,(f,g')|\cF\times \cG_{X_t}}(D_{t-1}) \,
        \ell \bigl(f(X_t),Y_t\bigr) -\ell \bigl(f^\star_g(X_t),Y_t\bigr) \Biggr) \,.
    \end{equation}
    The rationale behind the introduction of the $W_{t,g}$ is the following, as we prove in Step~2:
    \begin{equation}
    \label{eq:excess_regret0}
    n \, \Prob(X \in g) \,\, \cE_g(\hat{\bq}_n) = \sum_{t = 1}^n \E[W_{t, g} \mid D_{t-1}]\qquad\text{and}\qquad\regret_{n, (f^\star_g, g)} = \sum_{t = 1}^n W_{t, g}\,.
    \end{equation}

    \emph{Step~1.} We apply the Bernstein-Freedman inequality, recalled in Lemma~\ref{lm:constant freedman}
    in Appendix~\ref{app:conc}. Since loss $\ell$ takes values in $[0,1]$,
    the random variables $W_{t,g}$ lie in $[-1,1]$,
    and the martingale increments
    $\Delta_{t,g} \eqdef W_{t,g} - \E[W_{t, g} \mid D_{t-1}]$ satisfy
    \[
    |\Delta_{t,g}| \leq 2 \qquad \mbox{and}
    \qquad V_{n,g} \eqdef \sum_{t=1}^n
    \E\bigl[ \Delta_{t,g}^2 \mid D_{t-1} \bigr] \leq \sum_{t=1}^n \E\bigl[ W_{t,g}^2 \mid D_{t-1} \bigr]
    \leq n \, \P(X \in g)\,.
    \]
    By the Bernstein-Freedman inequality, for each given $g \in \cG$, with probability
    at least $1-\delta$,
    \[
    \sum_{t = 1}^n\bigl(\E[W_{t, g} \mid D_{t-1}] - W_{t, g} \bigr) \leq
    \sqrt{2n \, \Prob(X\in g) \, \log(1/\delta)}+\log(1/\delta)\,.
    \]
    Replacing $\delta$ by
    $\delta/|\cG|$ in the bound above and applying union bound over $\cG$, together with~\eqref{eq:excess_regret0} entails that, with probability $1-\delta$,
    \[
    \forall g \in \cG, \qquad\quad
    n \, \Prob(X \in g) \, \cE_g(\widehat \bq_n) - \regret_{n,(f^{\star}_g,g)}
    \leq
    \sqrt{2 n \, \Prob(X \in g) \log\bigl(\card{\cG}/\delta\bigr)}
    +
    \log\bigl(\card{\cG}/\delta\bigr)\,,
    \]
    yielding the desired inequality. \medskip

    \emph{Step~2.} It remains to prove~\eqref{eq:excess_regret0}.
    The mere definition of the sleeping regret, as introduced in Section~\ref{sec:aux-sleeping},
    justifies the second equality $\regret_{n, (f^\star_g, g)} = \displaystyle{\sum_{t \in [n]} W_{t, g}}$.

    The first equality is justified by the online-to-batch conversion, using that
    $(X_t,Y_t)$ and $(X,Y)$ are both independent from $D_{t-1}$
    and have the same distribution. Thus, on the one hand,
    by definition of $\hat{\bq}_n(x)$ in~\eqref{eq:hat_q_def} and of the risk of a randomized predictor in~\eqref{eq:def_randomized_risk},
    \begin{align*}
    \lefteqn{\Prob(X \in g) \, \risk_g\bigl( \hat{\bq}_n \bigr)} \\
    & = \Exp \! \left[ \ind{X \in g}  \frac{1}{n} \sum_{t=1}^n \Biggl(\sum_{f\in \cF}\sum_{g'\in \cG_{X}}p_{t,(f,g')|\cF\times \cG_{X}}(D_{t-1}) \, \ell \bigl(f(X),Y\bigr)\Biggr) \, \middle|\, D_{n-1} \right] \\
    & = \frac{1}{n} \sum_{t=1}^n
    \E \!\left[ \ind{X\in g}\Biggl(\sum_{f\in \cF}\sum_{g'\in \cG_{X}}p_{t,(f,g')|\cF\times \cG_{X}}(D_{t-1}) \, \ell \bigl(f(X),Y\bigr)\Biggr) \,\Bigg|\, D_{t-1} \right] \\
    & = \frac{1}{n} \sum_{t=1}^n
    \E \!\left[ \ind{X_t\in g}\Biggl(\sum_{f\in \cF}\sum_{g'\in \cG_{X_t}}p_{t,(f,g')|\cF\times \cG_{X_t}}(D_{t-1}) \, \ell \bigl(f(X_t),Y_t\bigr)\Biggr) \,\Bigg|\, D_{t-1} \right] ;
    \end{align*}
    for the second equality, we noted that conditioning with respect to $D_{n-1}$ or $D_{t-1}$ for
    the $t$--th term in the summation was the same, as no variable $(X_t,Y_t), \ldots, (X_n,Y_n)$ is involved,
    while for the third equality, we replaced $(X,Y)$ by $(X_t,Y_t)$ by the mentioned property for
    online-to-batch conversion.
    On the other hand, similar but simpler arguments justify that
    \[
    \Prob(X \in g) \, \risk_g(f^\star_g)
    = \Exp\Bigl[ \ind{X\in g} \, \ell \bigl(f^\star_g(X),Y\bigr)\Bigr]
    = \E\Bigl[ \ind{X_t\in g} \, \ell \bigl(f^\star_g(X_t),Y_t\bigr) \,\Big|\, D_{t-1} \Bigr]\,.
    \]
    Combining both equalities indeed leads to the first equality stated in~\eqref{eq:excess_regret0}.
\end{proof}

\subsection{Proof of Corollary~\ref{cor:simple-reduc}}

\corsimplereduc*

\begin{proof}
    We recall that we assumed $n \geq 3$.
    The bound of Lemma~\ref{lm:regr-bd} and the reduction of Section~\ref{sec:reduc} exactly have that for each $g \in \cG$,
    \[
    \regret_{n,(f^\star_g,g)} \leq C_{n} \smash{\sqrt{N_g}}\,, \qquad \mbox{where} \qquad N_g \eqdef \sum_{k=1}^n \ind{X_k\in g}
    \]
    and $C_n = C\sqrt{\log(|\cF||\cG|) + \log\log n}$ for some numerical $C \geq 1$.
    Now, the deviation inequality of Lemma~\ref{lm:group mass concentration}
    together with the subadditivity of the square root
    ensures that with probability at least $1-\delta/2$, for all $g \in \cG$,
    \begin{align*}
    \sqrt{N_g} & \leq \sqrt{n\,\Prob(X\in g)}
    + \Bigl( 2n\,\Prob(X\in g) \log\bigl(2\card{\cG}/\delta\bigr) \Bigr)^{1/4} + \sqrt{\log\bigl(2\card{\cG}/\delta\bigr)/2} \\
    & \leq \bigl(\underbrace{1+1/2}_{= 1.5}\bigr) \sqrt{n\,\Prob(X\in g)} + \bigl(\underbrace{\sqrt{2}/2 + 1/\sqrt{2}}_{\leq 1.5} \bigr) \sqrt{\log\bigl(2\card{\cG}/\delta\bigr)}\,,
    \end{align*}
    where we used $uv \leq u^2/2 + v^2/2$ in the middle term of the first inequality to get the second inequality.
    We substitute these inequalities into the bound of
    Theorem~\ref{tm:reduction risk guarantee}, used
    with $\delta/2$ (instead of $\delta$), and apply a union bound:
    with probability at least $1-\delta$, for all $g \in \cG$,
    \begin{multline*}
    \cE_g(\hat{\bq}_n)
    \leq \frac{C_{n} \smash{\sqrt{N_g}}}{n \, \Prob(X \in g)}
    + \sqrt{\frac{2 \log\bigl(2\card{\cG}/\delta\bigr)}{n \, \Prob(X \in g)}}
    + \frac{\log\bigl(2\card{\cG}/\delta\bigr)}{n\, \Prob(X \in g)}\\
    \leq
        1.5\,\frac{C_{n}}{\sqrt{n \, \Prob(X \in g)}} + 1.5\,\frac{C_{n}\sqrt{\log\bigl(2\card{\cG}/\delta\bigr)}}{n \, \Prob(X \in g)}
        + \sqrt{\frac{2 \log\bigl(2\card{\cG}/\delta\bigr)}{n \, \Prob(X \in g)}}
+ \frac{\log\bigl(2\card{\cG}/\delta\bigr)}{n\, \Prob(X \in g)}\,.
    \end{multline*}
    We then substitute the bounds guaranteed by Lemma~\ref{lm:regr-bd}
    and perform some rough bounding of logarithms to get
    \begin{multline*}
    \cE_g(\hat{\bq}_n) \leq
        \bigl( 1.5\,C + \sqrt{2} \bigr) \frac{\sqrt{\log\bigl(2|\cF|\card{\cG}/\delta\bigr)+\log\log n}}{\sqrt{n\, \Prob(X \in g)}} \\
        + \bigl(1.5C + 1\bigr)
        \frac{\log\bigl(2|\cF|\card{\cG}/\delta\bigr) + \sqrt{\log\log n}}{n\, \Prob(X \in g)}\,.
    \end{multline*}
    The claimed bound follows from further simplifying the expression
    thanks to $C \geq 1$.
\end{proof}

\clearpage
\section{Proofs of the fast-rate results of Section~\ref{sec:FR}}
\label{app:proofs-FR}

This long appendix proves the core results of this article:
\begin{itemize}
\item first,
the general theory of fast rates for multi-group learning
based on the Bernstein condition
stated in Definition~\ref{Condition:group bernstein}, namely,
Theorem~\ref{tm:fast rate fixed point} in Appendix~\ref{app:tm:fast rate fixed point}
and Corollary~\ref{cor:main fast rate} in Appendix~\ref{app:cor:main fast rate};
\item second, that typical conditions, adapted to our setting, like margin conditions
or risk-gap conditions, ensure that the Bernstein condition is satisfied;
see, respectively, the proofs of Theorem~\ref{tm:Bayes--Massart implies Bernstein}
in Appendix~\ref{app:tm:Bayes--Massart implies Bernstein} for margin conditions
and of Theorem~\ref{tm:primal risk gap implies Bernstein} in Appendix~\ref{app:tm:primal risk gap implies Bernstein}
for risk-gap conditions;
\item third and finally, that the previous conditions may be combined, as alluded
at in Remark~\ref{rk:combine} and detailed in Appendix~\ref{sec:combine}.
\end{itemize}

\subsection{Proof of Theorem~\ref{tm:fast rate fixed point}}
\label{app:tm:fast rate fixed point}

\thmfastrateregret*

\begin{proof}
We adapt the proof of Theorem~\ref{tm:reduction risk guarantee} (located in Appendix~\ref{sec:tm:reduction risk guarantee})
and consider the same notation as therein---in particular, the sequence $W_{t,g}$ as in \eqref{def:instantaneous regret},
the associated martingale increments $\Delta_{t,g} = W_{t,g} - \E[W_{t, g} \mid D_{t-1}]$,
and the sum $V_{n,g}$ of their conditional variances.

We control the sum of the $W_{t,g}$ via a refined version of the Bernstein-Freedman inequality,
stated in Lemma~\ref{lm:strong Freedman} of Appendix~\ref{app:conc}, still providing deviations in terms of
\[
V_{n,g} \eqdef \sum_{t=1}^n \E\Bigl[\bigl(W_{t,g}-\E[W_{t,g} \,\big|\, D_{t-1}]\bigr)^{2} \,\Big|\, D_{t-1} \Bigr] \leq
\sum_{t=1}^n \E\bigl[ W_{t,g}^2 \,\big|\, D_{t-1} \bigr]\,,
\]
which we bound in a sharper, data-dependent, way compared to the previous $n \, \P(X \in g)$ bound.
Conditioning by $D_{t-1}$ fixes the probability $\bp_t(D_{t-1})$ over $\cF\times \cG$,
so that the conditional expectations below correspond to integrating over $(X_t,Y_t)$; we do so by the Bernstein condition
stated in Definition~\ref{Condition:group bernstein}:
\begin{align}
\lefteqn{\E\bigl[W_{t,g}^2 \,\big|\, D_{t-1}\bigr]} \nonumber \\
& = \E \!\! \left[ \ind{X_t\in g} \Biggl( \sum_{(f,g')\in \cF\times\cG_{X_t}} p_{t,(f,g')\mid \cF\times\cG_{X_t}}(D_{t-1}) \,
\ell\bigl(f(X_t),Y_t\bigr) - \ell\bigl(f_g^\star(X_t),Y_t\bigr) \Biggr)^{\! 2} \,\Biggm|\, D_{t-1} \right] \notag \\
& \leq \gamma_g\,\, \E\!\!\left[ \ind{X_t\in g} \Biggl( \sum_{(f,g')\in \cF\times\cG_{X_t}} p_{t,(f,g') \mid \cF\times\cG_{X_t}}(D_{t-1}) \,
\ell\bigl(f(X_t),Y_t\bigr) - \ell\bigl(f_g^\star(X_t),Y_t\bigr) \Biggr) \,\Biggm|\, D_{t-1} \right] \notag \\
&= \gamma_g \,\, \E[W_{t,g}\mid D_{t-1}]. \label{eq:filtered bernstein}
\end{align}
Summing over $t$, we get $V_{n,g} \leq \displaystyle{\gamma_g\sum_{t=1}^n\E\bigl[W_{t,g}\bigm|D_{t-1}\bigr]}$.

The martingale increments $\Delta_{t,g}$ still lie in $[-2,2]$
so that Lemma~\ref{lm:strong Freedman} guarantees that with probability at least $1-\delta$,
\begin{equation}
\label{eq:EWFreed}
\sum_{t=1}^n \bigl(\E[W_{t,g}\mid D_{t-1}]-W_{t,g}\bigr) \leq
4 \sqrt{\gamma_g\sum_{t=1}^n \E[W_{t,g}\mid D_{t-1}]\log\left(\frac{\log_2 n}{\delta}\right)}+4\log\left(\frac{\log_2 n}{\delta}\right).
\end{equation}
We now bound the first term in the right-hand side using $4\sqrt{uv} = 2\sqrt{(u/2)\,8v}\leq u/2 + 8v$
and solve an inequation to get
\[
\frac{1}{2} \sum_{t=1}^n \E[W_{t,g}\mid D_{t-1}]
\leq \sum_{t=1}^n W_{t,g} + 8\gamma_g \log\!\left(\frac{\log_2 n}{\delta}\right) +
4\log\!\left(\frac{\log_2 n}{\delta}\right) .
\]
Substituting the identities~\eqref{eq:excess_regret0}
yields the claimed bound.
\end{proof}

\subsection{Proof of Corollary~\ref{cor:main fast rate}}
\label{app:cor:main fast rate}

\cormainfastrate*

\begin{proof}
    Fix $\delta \in (0,1)$. We use the notation of Theorem \ref{tm:fast rate fixed point}.
    We define an event $\cE_g$ under which the inequality in Theorem \ref{tm:fast rate fixed point} holds for group $g$ and a
    given confidence level $1-\delta$.

    We restate the explicit regret bound of Lemma \ref{lm:second-order regret} with the $W_{t,g}$ piece of notation:
    \begin{equation}\label{eq:regret in main result of fast rate 0}
    \regret_{n,(f^*_g,g)} \leq C_n \sqrt{\sum_{t=1}^n W_{t,g}^2}+C^2_n\,,
    \end{equation}
    where $C_n \leq C\sqrt{\log\bigl(\card{\cF}\card{\cG}\bigr) + \log \log n}$ for some universal constant $C \geq 1$.

    We will first prove a high-probability bound for the regret $\regret_{n,(f^*_g,g)}$, then, under the event $\cE_g$, substitute it in the bound of Theorem~\ref{tm:fast rate fixed point}. The issue with the bound~\eqref{eq:regret in main result of fast rate 0} is
    that we do not have any direct control over the fully empirical sum of $W_{t,g}^2$, and would rather have the sum of conditional expectations thereof. Yet another application of the Bernstein-Freedman inequality deals with that.
    \medskip

    \noindent\emph{Concentration of the sum of $W_{t,g}^2$.} We use again the version of the Bernstesin-Freedman
    inequality stated as Lemma~\ref{lm:strong Freedman}, with the martingale increments $W_{t,g}^2-\E\bigl[W_{t,g}^2\bigm|D_{t-1}\bigr]$,
    lying in the range $[-2,2]$.
    Their conditional variances are bounded by the conditional second moments, which we futher bound using $|W_{t,g}| \leq 1$, to get a sum itself bounded as in~\eqref{eq:filtered bernstein}:
    \begin{align*}
     \sum_{t = 1}^n \E \biggl[ \Bigl( W_{t, g}^2-\E\bigl[W_{t,g}^2\,\bigm|\,D_{t-1}\bigr] \Bigr)^2
     \,\biggm|\, D_{t-1} \biggr]
     & \leq
     \sum_{t = 1}^n \E \bigl[ W_{t, g}^4 \,\bigm|\, D_{t-1} \bigr] \\
     & \leq
     \sum_{t = 1}^n \E \bigl[ W_{t, g}^2 \,\bigm|\, D_{t-1} \bigr]
     \leq
     \gamma_g\sum_{t = 1}^n\E\!\bigl[W_{t, g}\,\bigm|\,D_{t-1}\bigr]\,.
    \end{align*}
Therefore, by Lemma~\ref{lm:strong Freedman}, with probability at least $1-\delta$:
    \[
    \sum_{t=1}^n \bigl(W_{t,g}^2-\E[W_{t,g}^2 \,\bigm|\,D_{t-1}]\bigr)\leq 4\sqrt{\gamma_g \log\biggl(\frac{\log_2 n}{\delta}\biggr)\sum_{t=1}^n \E [W_{t,g}\,|\,D_{t-1}]}+4\log\biggl(\frac{\log_2 n}{\delta}\biggr)\,.
    \]
    We rearrange this inequality and use again~\eqref{eq:filtered bernstein} to get
    \[
            \sum_{t=1}^n W_{t,g}^{2}\leq \gamma_g
            \underbrace{\sum_{t=1}^n\E[W_{t,g}\,|\,D_{t-1}]}_{= n\, \Prob(X\in g)\,\, \cE_{g}(\hat{\bq}_n)}
            +4\sqrt{\gamma_g \log\biggl(\frac{\log_2 n}{\delta}\biggr)\sum_{t=1}^n\,\E[W_{t,g}\,|\,D_{t-1}]}+4\log\biggl(\frac{\log_2 n}{\delta}\biggr)\,,
    \]
where we used the identity \eqref{eq:excess_regret0} in the last substitution.
By the inequality $2uv\leq u^2+v^2$, we may further simplify the bound into
\begin{align} \sum_{t=1}^n W_{t,g}^2 \leq 2\gamma_g \,\, n\,\Prob(X\in g)\,\,\cE_g(\hat{\bq}_n) +8\log\biggl(\frac{\log_2 n}{\delta}\biggr)\,. \label{eq:squared term in main fast rate} \end{align}
\medskip

\noindent\emph{Substitution into the bound of Theorem~\ref{tm:fast rate fixed point}.}
Substituting~\eqref{eq:squared term in main fast rate}
into~\eqref{eq:regret in main result of fast rate 0}, we proved
so far that with probability at least $1-\delta$,
\begin{align}\label{eq:regret in main result of fast rate 1}
\regret_{n,(f^*_g,g)} \leq C_n\sqrt{2\gamma_g \,\, n\,\Prob(X\in g)\,\,\cE_g(\hat{\bq}_n) +8\log\biggl(\frac{\log_2 n}{\delta}\biggr)} +C^2_n\,. \end{align}
Keeping in mind that the bound of Theorem~\ref{tm:fast rate fixed point} holds on the event $\cE_g$,
we thus get, by a union bound, the subadditivity of the square root,
and by multiplying both sides by $n\,\Prob(X\in g)$,
that with probability at least $1-2\delta$,
\begin{align*}
n\,\Prob(X\in g)\,\,\cE_g(\hat{\bq}_n)
\leq & \ 2 \Biggl( \regret_{n,(f^\star_g,g)} + (8\gamma_g+4) \log\biggl(\frac{\log_2 n}{\delta}\biggr) \Biggr) \\
\leq & \ \ \ 2\, \overbrace{C_n \sqrt{ 2\gamma_g}}^{=a} \sqrt{n\,\Prob(X\in g)\,\,\cE_g(\hat{\bq}_n)} \\
& +
\underbrace{2 C_n \sqrt{ 8 \log\biggl(\frac{\log_2 n}{\delta}\biggr)} +
2C^2_n + (16\gamma_g+8) \log\biggl(\frac{\log_2 n}{\delta}\biggr)}_{=b} \\
\leq & \ \frac{n\,\Prob(X\in g)\,\,\cE_g(\hat{\bq}_n)}{2} + 2a^2 + b\,,
\end{align*}
where the penultimate inequality
is of the same form as the one discussed
at the end of Theorem~\ref{tm:fast rate fixed point},
and to which we applied the same $2\sqrt{uv} \leq 2u + v/2$ treatment
to obtain the final inequality.
Bounding logarithms
in a crude way by the same factor and rearranging,
we proved so far that with probability at least $1-2\delta$,
\begin{align*}
    n\,\Prob(X\in g) \,\, \cE_g(\hat{\bq}_n)
& \leq 4a^2 + 2b \\
& = 8\gamma_g C_n^2 + 4\,C_n^2 + (32\gamma_g+16) \log\biggl(\frac{\log_2 n}{\delta}\biggr)
+ \smash{\overbrace{4 C_n \sqrt{ 8 \log\biggl(\frac{\log_2 n}{\delta}\biggr)}}^{\leq 4\,C_n^2 + 8 \log ...}} \\
& \leq 8(\gamma_g+1)C_n^2 + 32(\gamma_g + 1)\log\biggl( \frac{\log_2 n}{\delta}\biggr)\,.
\end{align*}
We substitute $C_n^2 \leq C^2 \Bigl(\log\bigl(\card{\cF}\card{\cG}\bigr) + \log\log n\Bigr)$,
and replace $\delta$ by $\delta/2$ to conclude.
\end{proof}

\subsection{Proof of Theorem~\ref{tm:Bayes--Massart implies Bernstein}}
\label{app:tm:Bayes--Massart implies Bernstein}

\thmBM*

\begin{proof}
    Let $f^*_g$ denote a group-wise Bayes classifier in $\cF$, whose existence is guaranteed by Bayes--Massart condition of Definition~\ref{Con:Bayes--Massart}.
    We recall some basic facts in binary classification, to be used repeatedly in the proof: first, since $\ell\bigl(f(X),Y\bigr) = \ind{f(X) \neq Y}$,
    \begin{equation}
    \label{eq:rem-class-1}
    \Bigl( \ell\bigl(f(X),Y\bigr)-\ell\bigl(f^\star_g(X),Y\bigr) \Bigr)^2
    = \bigl( \ind{f(X)\ne Y} - \indb{f^\star_g(X)\ne Y} \bigr)^2
    = \indb{f(X)\neq f^*_g(X)}\,,
    \end{equation}
    and second (see \citealp[Section~2.1]{DGL96}),
    \begin{align}
    \nonumber
    \E \Bigl[ \ell\bigl(f(X),Y\bigr)-\ell\bigl(f^\star_g(X),Y\bigr) \,\Bigm|\, X \Bigr]
    & = \E \bigl[ \ind{f(X)\ne Y} \,\bigm|\, X \bigr] - \E \bigl[ \ind{f^\star_g(X)\ne Y} \,\bigm|\, X \bigr] \\
    \nonumber
    & = \bigl( 2\eta(X) - 1 \bigr) \Bigl( \indb{f^\star_g(X)=1} - \ind{f(X)=1} \Bigr)\,,
    \end{align}
    so that
    \begin{equation}
    \label{eq:rem-class-2}
    \E \Bigl[ \ell\bigl(f(X),Y\bigr)-\ell\bigl(f^\star_g(X),Y\bigr) \,\Bigm|\, X \Bigr]
    = \bigl| 2\eta(X) - 1 \bigr| \,\,
    \indb{f(X)\neq f^\star_g(X)} \qquad \mbox{ on } \{X \in g\}\,.
    \end{equation}
   Fix a weight vector $\bq\in\Delta(\cF\times\cG)$.
   By Jensen's inequality and \eqref{eq:rem-class-1},
    \begin{align*}
    \ind{X\in g} & \Biggl(\sum_{(f,g')\in \cF\times \cG_X}q_{(f,g')|\cF\times \cG_X} \, \ell\bigl(f(X),Y\bigr)-\ell\bigl(f^\star_g(X),Y\bigr) \Biggr)^{\! 2}\\
     & \leq \ind{X\in g}\,\sum_{(f,g')\in \cF\times \cG_X}q_{(f,g')|\cF\times \cG_X}\Bigl( \ell\bigl(f(X),Y\bigr)-\ell\bigl(f^\star_g(X),Y\bigr) \Bigr)^{\! 2} \\
     & = \ind{X\in g}\,\sum_{(f,g')\in \cF\times \cG_X}q_{(f,g')|\cF\times \cG_X}\,
     \ind{f(X)\neq f^*_g(X)} \eqdef \rho_{\bq,g}\,.
    \end{align*}
    On the other hand, by~\eqref{eq:rem-class-2},
    \begin{multline*}
       \E\!\left[\ind{X\in g}\Biggl(\sum_{(f,g')\in \cF\times \cG_X}q_{(f,g')|\cF\times \cG_X}\ell\bigl(f(X),Y\bigr)-\ell\bigl(f^\star_g(X),Y\bigr)\Biggr)
       \,\,\middle|\,\, X \right]\\
       = \ind{X\in g} \sum_{(f,g')\in \cF\times \cG_X}q_{(f,g')|\cF\times \cG_X} \,\, \Exp\Bigl[\ell\bigl(f(X),Y\bigr)-\ell\bigl(f^\star_g(X),Y\bigr)
       \,\Bigm|\,X\Bigr] =|2\eta(X)-1|\,\rho_{\bq,g}\,.
    \end{multline*}
The tower rule and the fact that
by assumption, $|2\eta(X)-1| \geq 2\beta_g$ on $\{X\in g\}$, conclude the proof.
\end{proof}

\vspace{4cm}

\subsection{Proof of Theorem~\ref{tm:primal risk gap implies Bernstein}}
\label{app:tm:primal risk gap implies Bernstein}

\thriskgapBernst*

\begin{proof}\textbf{(of Theorem~\ref{tm:primal risk gap implies Bernstein})}
We denote by
$\mathfrak{R}_g$ the subset of elements $\mathcal{R} \in \mathfrak{R}$ such that $\cR\subseteq g$ and $\P(X \in \mathcal{R}) > 0$,
and by $\cF_g$ the subset of functions $f \in \cF$
differing from $f^\star_g$ on a $\Prob^X$--non-negligible subset of~$g$.
We start with some crude bounding (taking into account that for functions not in $\cF_g$, the marked differences are null)
and then decompose the expectation in terms of the regions in $\mathfrak{R}_g$: for all distributions $\bq$ over $\cF \times \cG$,
\[
    \Biggl(\sum_{(f,g')\in \cF\times \cG_X}q_{(f,g')|\cF\times \cG_X}
    \overbrace{\ell\bigl(f(X),Y\bigr)-\ell\bigl(f^\star_g(X),Y\bigr)}^{\in [-1,1]} \Biggr)^{\!\!2}
    \leq \sum_{(f,g')\in \cF_g\times \cG_X}q_{(f,g')|\cF\times \cG_X}
\]
then
\begin{multline*}
    \E\!\!\left[\Biggl(\sum_{(f,g')\in \cF\times \cG_X}q_{(f,g')|\cF\times \cG_X}
    \ell\bigl(f(X),Y\bigr)-\ell\bigl(f^\star_g(X),Y\bigr) \Biggr)^{\!\!2}\,\middle|\,X\in g\right] \\
    \leq \frac{1}{\P(X\in g)}\sum_{\mathcal{R} \in \mathfrak{R}_g} \P(X\in \cR)\,\,\E\!\!\left[\sum_{(f,g')\in \cF_g\times \cG_X}q_{(f,g')|\cF\times \cG_X} \,\middle|\,X\in \cR\right].
\end{multline*}
Note that by definition of group-stable regions,
for each $\cR \in \mathfrak{R}$, the groups
$\cG_x$ are the same for all $x \in \cR$: let $\cG(\cR)$ be this common group.
Continuing from the last display and using the condition of Definition~\ref{Con:primal-risk gap},
\begin{align*}
    &\E\!\!\left[\Biggl(\sum_{(f,g')\in \cF\times \cG_X}q_{(f,g')|\cF\times \cG_X}\ell\bigl(f(X),Y\bigr)-\ell\bigl(f^\star_g(X),Y\bigr)
    \Biggr)^{\!\!2}\,\middle|\,X\in g\right]\\
    & \leq \frac{1}{\P(X\in g)}\sum_{\mathcal{R} \in \mathfrak{R}_g} \P(X\in \cR)\,\,\sum_{(f,g')\in \cF_g\times \cG(\cR)}q_{(f,g')|\cF\times \cG(\cR)} \\
    &\leq
    \frac{1}{\P(X\in g)}\sum_{\mathcal{R} \in \mathfrak{R}_g} \P(X\in \cR) \! \sum_{(f,g')\in \cF_g\times \cG(\cR)} \!\!q_{(f,g')|\cF\times \cG(\cR)} \frac{\E\Bigl[\ell\bigl(f(X),Y\bigr)-\ell\bigl(f^\star_g(X),Y\bigr) \Bigm| X\in \cR\Bigr]}{\delta_g}\\
    &=
     \frac{1}{\delta_g} \,\, \E\!\!\left[\sum_{(f,g')\in \cF\times \cG_X}q_{(f,g')|\cF\times \cG_X}\ell\bigl(f(X),Y\bigr)-\ell\bigl(f^\star_g(X),Y\bigr)\,\middle|\,X\in g\right],
\end{align*}
where to get the last equality we performed similar algebraic manipulations (in reverse order) as
in the rest of the proof.
\end{proof}

\subsection{Mixing conditions: local Bayes--Massart or local risk-gap conditions}
\label{sec:combine}

We now show that the group-wise Bernstein condition can be obtained by
combining two different local sufficient conditions across group-stable
regions.

\subsubsection{With a global Bayes assumption}

Throughout this subsection, we fix a group $g\in\cG$ and assume that
there exists a group-wise Bayes classifier
$f_g^\star \in \cF$. In particular, the same classifier
$f_g^*$ is Bayes-optimal on every group-stable region of $g$.
This requirement, formally stated as
Assumption~\ref{ass:gr-Bayes} below,
may be relaxed; see the end of this subsection for a discussion.

The conditions of Definition~\ref{Con:Bayes--Massart} and
Definition~\ref{Con:primal-risk gap} provide two different mechanisms
for establishing a Bernstein inequality. The former controls the noise
around the Bayes decision boundary through the regression function
$\eta$, whereas the latter directly controls the separation in
conditional risk between the group oracle and the competing benchmarks
in $\cF$. In particular, the risk-gap argument does not require its
oracle to be Bayes-optimal, while the Bayes-Massart argument relies
essentially on Bayes optimality. Thus, the two conditions capture
different structural properties of the learning problem.

Also (but this is an ancillary remark), the two mechanisms do not necessarily provide the same control on every
group-stable region. Some regions may admit a favorable Massart margin,
whereas on other regions a direct risk gap may provide the more useful
bound. When both
conditions are available on the same region, we show that we may actually use the sharper of the
two resulting Bernstein constants.

We first introduce the corresponding local versions of the two
conditions, i.e., versions
that hold for group-stable subregions $\cR \subseteq g$ and not necessarily over the entire group $g$. Definition~\ref{Con:Bayes--Massart}
is now cut into two separate conditions,
Assumption~\ref{ass:gr-Bayes} and
Definition~\ref{Con:Group-stable-region-wise Bayes--Massart}.

\begin{assumption}
\label{ass:gr-Bayes}
There exists a classifier in $\cF$,
denoted by $f^\star_g$, that is a group-wise Bayes classifier for $g$.
\end{assumption}

We recall (cf.\ proof of
Theorem~\ref{tm:primal risk gap implies Bernstein}) that we denote by
$\mathfrak{R}_g$ the subset of elements $\mathcal{R} \in \mathfrak{R}$ such that $\cR\subseteq g$ and $\Prob(X \in \mathcal{R}) > 0$.

\begin{definition}
    \label{Con:Group-stable-region-wise Bayes--Massart}
    A group-stable region $\cR\in\mathfrak{R}_g$ satisfies the local Bayes--Massart condition if
    there exists a coefficient $\beta_{\cR}>0$ such that
    \[
    \Prob\Bigl( \bigl|\eta(X)-1/2\bigr| \geq \beta_{\cR} \Bigm| X\in\cR \Bigr) = 1\,.
    \]
\end{definition}

The risk-gap condition of Definition~\ref{Con:primal-risk gap}
is made slightly more local as follows.
The three modifications are first,
that the lower bound now explicitly depends on the group-stable region
$\cR$ considered, and second that the order of quantifiers
between $f$ and $\cR$ is reversed:
we look at a local condition on $f$,
namely, that it differs from the group-wise
Bayes classifier
$f^\star_g$ only locally on $\cR$.
The third modification is that
the condition is stated with an explicit
reference to $f^\star_g$ being the
group-wise Bayes classifier.

\begin{definition}
    \label{Con:Group-stable wise risk gap}
    Under Assumption~\ref{ass:gr-Bayes},
    a group-stable region $\cR\in\mathfrak{R}_g$ satisfies the local risk-gap condition if there exists a gap $\delta_{\cR}>0$ such that, for
    every $f\in\cF$ that differs from $f_g^\star$ on a subset of $\cR$ of
    positive $\Prob^X$--measure,
    \[
    \Exp\!\left[ \ell\bigl(f(X),Y\bigr) - \ell\bigl(f_g^\star(X),Y\bigr) \bigm| X\in\cR \right] \geq \delta_{\cR}\,.
    \]
\end{definition}

We can now combine the two local conditions into the following result.

\begin{theorem}
    \label{tm:mixed condition implies Bernstein}
    Under Assumption~\ref{ass:gr-Bayes} (the existence of a group-wise Bayes classifier),
    suppose that for each group-stable region $\cR\in\mathfrak{R}_g$, either the condition of
    Definition~\ref{Con:Group-stable-region-wise Bayes--Massart} holds with margin $\beta_{\cR}>0$
    (otherwise, set $\beta_{\cR} = 0$), or the condition of Definition~\ref{Con:Group-stable wise risk gap}
    holds with gap $\delta_{\cR}>0$ (otherwise, set $\delta_{\cR} = 0$).
    Then the group-wise Bernstein condition of Definition~\ref{Condition:group bernstein} holds for $g$, with
    constant
    \[
    \gamma_g = \max_{\cR\in\mathfrak{R}_g}  \ \min\!\left\{ \frac{1}{2\beta_\cR}, \, \frac{1}{\delta_\cR} \right\} > 0\,.
    \]
\end{theorem}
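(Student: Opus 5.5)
The plan is to localize both earlier proofs (those of Theorem~\ref{tm:Bayes--Massart implies Bernstein} and Theorem~\ref{tm:primal risk gap implies Bernstein}) to a single group-stable region. We then sum over regions with weights $\P(X\in\cR)$.

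Fix $\bq\in\Delta(\cF\times\cG)$. For $x\in\cR$, write $\bar q_{\cR}(f)=\sum_{g'\in\cG(\cR)} q_{(f,g')|\cF\times\cG(\cR)}$. By Property~\ref{prop:R}, this marginal weight on $f$ is constant over $\cR$. Since the regions in $\mathfrak{R}_g$ partition $g$ up to null sets, the law of total probability gives
\[
\E\bigl[\ind{X\in g}\,Z_{\bq}^2\bigr] = \sum_{\cR\in\mathfrak{R}_g}\P(X\in\cR)\,\E\bigl[Z_{\bq}^2\bigm| X\in\cR\bigr],
\]
and similarly for $\E[\ind{X\in g} Z_{\bq}]$. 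Here $Z_{\bq}=\sum_f \bar q_{\cR}(f)\bigl(\ell(f(X),Y)-\ell(f^\star_g(X),Y)\bigr)$ on $\cR$. It therefore suffices to prove, for each $\cR\in\mathfrak{R}_g$ separately,
\[
\E\bigl[Z_{\bq}^2\bigm| X\in\cR\bigr]\le \gamma_{\cR}\,\E\bigl[Z_{\bq}\bigm| X\in\cR\bigr],
\qquad \gamma_\cR=\min\{1/(2\beta_\cR),1/\delta_\cR\},
\]
and then bound each $\gamma_\cR$ by $\gamma_g=\max_\cR\gamma_\cR$. This last step needs the right-hand sides to be nonnegative. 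They are, because under Assumption~\ref{ass:gr-Bayes} identity~\eqref{eq:rem-class-2} shows $\E[\ell(f(X),Y)-\ell(f^\star_g(X),Y)\mid X]\ge 0$ pointwise on $g$. That is precisely where the global Bayes assumption is used. Note also that the hypothesis says at least one of $\beta_\cR,\delta_\cR$ is positive, so $\gamma_\cR<\infty$ and $\gamma_g>0$ is well-defined.

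For a region with $\beta_\cR>0$, we repeat the Bayes--Massart argument restricted to $\cR$. Jensen's inequality and~\eqref{eq:rem-class-1} bound $Z_{\bq}^2$ by $\rho=\sum_f\bar q_\cR(f)\ind{f(X)\ne f^\star_g(X)}$. Identity~\eqref{eq:rem-class-2} gives $\E[Z_{\bq}\mid X]=|2\eta(X)-1|\,\rho\ge 2\beta_\cR\,\rho$ a.s.\ on $\cR$. Taking conditional expectations on $\{X\in\cR\}$ yields the constant $1/(2\beta_\cR)$.

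For a region with $\delta_\cR>0$, we repeat the risk-gap argument. We bound $Z_{\bq}^2\le\sum_{f\in\cF_\cR}\bar q_\cR(f)$, where $\cF_\cR$ is the set of $f$ differing from $f^\star_g$ on a non-negligible part of $\cR$; for the other $f$, the loss difference vanishes a.s.\ on $\cR$. Since $\bar q_\cR$ is constant on $\cR$, we can pull it out of the conditional expectation. Definition~\ref{Con:Group-stable wise risk gap} then gives $1\le \delta_\cR^{-1}\E[\ell(f(X),Y)-\ell(f^\star_g(X),Y)\mid X\in\cR]$ for each $f\in\cF_\cR$. Adding back the zero contributions of $f\notin\cF_\cR$ recovers $\delta_\cR^{-1}\E[Z_{\bq}\mid X\in\cR]$.

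When both conditions hold on $\cR$, both inequalities are valid, so the smaller constant can be taken. The main subtlety, rather than any real obstacle, is the passage from local constants to the global $\max$. This step needs the per-region nonnegativity of $\E[Z_{\bq}\mid X\in\cR]$, which the Bayes assumption supplies. Without that assumption, a region with negative conditional excess could not be dominated by the max.
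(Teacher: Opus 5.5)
Your proposal is correct and follows essentially the same route as the paper's sketched proof. Both decompose over the group-stable regions in $\mathfrak{R}_g$ by total probability, establish a per-region Bernstein inequality with constant $\min\{1/(2\beta_\cR),1/\delta_\cR\}$, and recombine with the largest constant; the per-region step localizes either the Bayes--Massart argument (Jensen plus~\eqref{eq:rem-class-2}) or the risk-gap argument (crude bound, then pulling the region-constant weights out of the conditional expectation).

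One small correction to your commentary: the nonnegativity of $\E[Z_{\bq}\mid X\in\cR]$ used in the recombination is automatic from the per-region inequality itself, since its left-hand side is nonnegative and $\gamma_\cR\in(0,\infty)$. So that step is not where the global Bayes assumption is needed; it is genuinely needed for~\eqref{eq:rem-class-2} in the margin regions and for having one common comparator $f^\star_g$ across all regions.
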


\begin{proof}\textbf{(sketched)}
    We slightly adapt the proof of Theorem~\ref{tm:primal risk gap implies Bernstein}
    and replace the first crude bounding therein by an application of Jensen's inequality
    followed by a total-probabilities argument:
    \begin{align*}
    & \E\!\left[\Biggl(\sum_{(f,g')\in \cF\times \cG_X}q_{(f,g')|\cF\times \cG_X}\ell\bigl(f(X),Y\bigr)-
        \ell\bigl(f^\star_g(X),Y\bigr)\Biggr)^{\!\!2}\,\middle|\,X\in g\right]\\
    & \leq \E\!\left[\sum_{(f,g')\in \cF\times \cG_X}q_{(f,g')|\cF\times \cG_X} \Bigl( \ell\bigl(f(X),Y\bigr)-
        \ell\bigl(f^\star_g(X),Y\bigr)\Bigr)^{2}\,\middle|\,X\in g\right] \\
    & = \frac{1}{\P(X\in g)}\sum_{\mathcal{R} \in \mathfrak{R}_g} \P(X\in \cR)\,\,\E\!\!\left[\sum_{(f,g')\in \cF\times \cG_X} \!\!\! q_{(f,g')|\cF\times \cG_X}
    \Bigl( \ell\bigl(f(X),Y\bigr)- \ell\bigl(f^\star_g(X),Y\bigr) \Bigr)^2 \,\middle|\,X\in \cR\right].
    \end{align*}
    Via another total-probabilities argument, it thus suffices to establish
    a Bernstein inequality on each group-stable region $\cR \in \mathfrak{R}_g$, of the following form,
    where we recall that $\cG(\cR)$ denotes the common value of the groups $\cG_x$ for $x \in \cR$:
    \begin{align}
    \nonumber
    \lefteqn{\, \hspace{-1.5cm} \forall \bq' \in \Delta\bigl(\cF\times\cG(\cR)\bigr),} \\
    \nonumber
    & \Exp\!\left[ \ind{X\in \cR} \Biggl( \sum_{(f,g')\in\cF\times\cG(\cR)} q'_{(f,g')} \ell\bigl(f(X),Y\bigr)-
    \ell\bigl(f^\star_g(X),Y\bigr) \Biggr)^{\!\! 2\,} \right] \\
    \label{eq:Mixed condition_aggregate}
    \leq \ & \gamma_{\cR} \,\, \Exp\!\left[ \ind{X\in \cR} \Biggl( \sum_{(f,g')\in\cF\times\cG(\cR)} q'_{(f,g')}
    \ell\bigl(f(X),Y\bigr)-\ell\bigl(f^\star_g(X),Y\bigr) \Biggr) \right],
    \end{align}
    where $\gamma_\cR = \min\bigl\{1/(2\beta_\cR),\,1/\delta_\cR\bigr\}$.

To do so, we note that
the proof of Theorem~\ref{tm:Bayes--Massart implies Bernstein} provides
an inequality that holds $\Prob^X$--almost surely
and is then integrated over $\{X \in g \}$ by
the tower rule; with the local condition,
we rather integrate over $\{X \in \cR \}$.
We then obtain~\eqref{eq:Mixed condition_aggregate} with a constant $1/(2\beta_\cR)$.

The local risk-gap condition of Definition~\ref{Con:Group-stable wise risk gap}
does not differ from
Definition~\ref{Con:primal-risk gap}
used in the
    proof of Theorem~\ref{tm:primal risk gap implies Bernstein}
in the final inequality on expectation gaps
and can therefore be used similarly;
the main difference between the two definitions
rather lies in the order of quantifiers,
i.e., fewer functions $f$ are considered
in Definition~\ref{Con:Group-stable wise risk gap} but their treatment is similar
in the proof.
We then obtain~\eqref{eq:Mixed condition_aggregate} with a constant $1/\delta_\cR$.
\end{proof}

\subsubsection{Beyond the global Bayes assumption}

The existence of a group-wise Bayes classifier $f_g^\star\in\cF$ allows
the whole group $g$ to be dealt with by a single expert pair $(f_g^*,g)$ in the reduction
described in Section~\ref{sec:reduc}.

When no such global Bayes classifier exists, but each group-stable region
$\cR\in\mathfrak{R}_g$ admits a Bayes classifier
$f_{\cR}^\star\in\cF$, one could instead introduce region-wise expert pairs $(f,\cR)$,
with $(f,\cR)$ active for $x \in \cX$ only when $x\in\cR$. The relevant comparator on
$\cR$ would then be $(f_{\cR}^\star,\cR)$, and the corresponding region-wise
fast-rate guarantees could be aggregated over
$\cR\in\mathfrak{R}_g$ to recover a guarantee for $g$. We omit the details.

The price of this refinement would be dealing with a larger experts class. Instead of
$\card{\cF}\card{\cG}$ group-wise expert pairs
as in Section~\ref{sec:reduc}, the more complex reduction involving pairs $(f,\cR)$
may involve up to $\card{\cF}\card{\mathfrak{R}}$ experts.
Since the group-stable regions are determined by the membership patterns
induced by $\cG$, in the worst case,
the cardinality of $\mathfrak{R}$
is of order~$2^{\card{\cG}}$.
Therefore, the logarithmic expert-complexity term could be bound to increase from
$\log\bigl( \card{\cF}\card{\cG} \bigr)$ to $\card{\cG}+\log\bigl( \card{\cF} \bigr)$.

In a nutshell, the global Bayes assumption is not essential for achieving fast rates, but
it offers a simpler treatment and a better
dependency on $\card{\cG}$.

\clearpage
\section{Lower bounds}
\label{app:LBs}

Several lower bounds are stated or alluded at in the main body of this article.
\begin{itemize}[itemsep=1pt, topsep=1pt]
\item The main one was sketched in Section~\ref{sec:sharp} and is formally stated and proved below in Appendix~\ref{app:sharp}.
\item Appendix~\ref{subsec:lower_ERM} shows formally that the simple approach consisting of partitioning the feature space into
group-stable regions, performing an empirical risk minimization on each such region, and aggregating them into a global predictor
performs poorly in the sense that its excess risk, while of the order of $1/\sqrt{n}$, suffers from a dependency in $\sqrt{\card{\cG}}$
instead of the logarithmic dependency achieved by \citet{pmlr-v162-tosh22a}.
\item Appendix~\ref{subsec:lower_log} proves that the thus achieved $\sqrt{\log |\cG|}$ dependency for the excess risk
cannot be improved in general.
\end{itemize}

\paragraph{Notation.}
In this appendix, it will be crucial to construct families of distributions for $(X,Y)$.
To do so, we will assume that the random vector
\[
\bigl( (X,Y), \, (X_1,Y_1), \, \ldots, \, (X_n,Y_n) \bigr) \qquad \mbox{is the identity function over} \quad (\cX \times \cY)^{n+1}\,,
\]
so that the underlying probability $\P$ on some probability space $(\Omega,\cF)$ and distributions $\mathbf{P}$
on $\cX \times \cY$ or $\mathbf{P}^{\otimes (n+1)}$ on $(\cX \times \cY)^{n+1}$ may now be identified.

For instance, in Section~\ref{sec:sharp}, we denoted by $\P_{\mathbf{P}}$ an underlying probability such that
the resulting distribution for $(X,Y)$ is $\mathbf{P}$. With the identification above,
we may rewrite the quantity of interest directly in terms of $\mathbf{P}$, with no reference to
any underlying probability, as
\begin{multline*}
\P_{\mathbf{P}} \!\left(\exists g \in \cG \ \text{\rm s.t. } \ \cE_{\mathbf{P},g} \bigl(\hat f_n\bigr)
    \geq \frac{1}{14\beta\sqrt{n \,\P_{\mathbf{P}}(X \in g)}} \right) \\
= \mathbf{P}^{\otimes n} \!\left(\exists g \in \cG \ \text{\rm s.t. } \ \cE_{\mathbf{P},g} \bigl(\hat f_n\bigr)
    \geq \frac{1}{14\beta\sqrt{n \,\mathbf{P}(X \in g)}} \right).
\end{multline*}
We denote by $\mathbf{E}$ the expectation associated with $\mathbf{P}$.

\subsection{Multi-group learning: $1/\sqrt{n}$ lower bounds on excess risk}
\label{app:sharp}
We recall that the main assumption considered to get fast rates
is the Bernstein condition of Definition~\ref{Condition:group bernstein},
which is implied by the risk-gap
condition over group-stable subregions of
Definition~\ref{Con:primal-risk gap}.

In this section, we show that (natural) relaxations
of these assumptions are not enough to get
fast rates. Namely, we consider a weaker Bernstein condition only holding for (Dirac masses at) functions $f \in \cF$ and groups $g$, not all distributions $\bq$ over $\cF \times \cG$, and complement it with risk-gap conditions over groups (but not over group-stable subregions thereof). The combination of both is insufficient in general to guarantee fast rates. We now formally state the two relaxed
conditions.

\begin{definition}[Expert-wise Bernstein condition]\label{Con:Expert-wise Bernstein}
    Distribution $\mathbf{P}$ and group $g$ satisfy the expert-wise Bernstein condition with constant $\gamma_g > 0$ if
    \begin{multline*}
    \forall f\in \cF,\qquad \mathbf{E}\biggl[\ind{X\in g}\Bigl( \ell\bigl(f(X),Y\bigr)-\ell\bigl(f^\star_g(X),Y\bigr) \Bigr)^{\! 2}\biggr] \\
    \leq \gamma_g \,\, \mathbf{E}\biggl[\ind{X\in g}\Bigl(\ell\bigl(f(X),Y\bigr)-\ell\bigl(f^\star_g(X),Y\bigr)\Bigr)\biggr]\,.
    \end{multline*}
\end{definition}

\begin{definition}[Risk gap over a group] \label{Con:Marginal risk-gap condition}
    Distribution $\mathbf{P}$ and group $g$ satisfy the risk gap condition if there exists a positive gap $\delta_g>0$ such that, for any $f\in \cF$
    that differs from $f^\star_g$ on a $\mathbf{P}$--non-negligible subset of $g$,
    \[
   \mathbf{E}\Bigl[ \ell\bigl(f(X),Y\bigr) -\ell\bigl(f^\star_g(X),Y\bigr)\,\Bigm|\,X\in g \Bigr]\geq \delta_g\,.
   \]
\end{definition}

The group-stable regions $g$ considered in Definition~\ref{Con:primal-risk gap}
form a partition of $g$ up to some $\mathbf{P}$--negligible subset, hence, by the law of total probabilities, the condition
of Definition~\ref{Con:primal-risk gap}
implies the one of Definition~\ref{Con:Marginal risk-gap condition}.

We are ready to state a slow-rate lower bound. The constant $\beta \geq 1$ considered therein
is only here for cosmetic reasons---to get the probability lower bound
as close as desired to~$1/2$.

\begin{theorem}[Slow-rate lower bound]
\label{tm:sharpness}
In binary classification $\cY = \{0,1\}$ with the binary loss,
with contexts $\cX = \{a,b,c\}$,
with a class $\cF = \{f_0,f_1\}$ containing two
constants functions $f_0 \equiv 0$ and $f_1 \equiv 1$, and groups $\cG = \bigl\{ \{a,c\}, \, \{b,c\} \bigr\}$,
there exists a subset $\cP$ of the distributions over $\cX \times \cY$
such that the following properties hold.

For each $\mathbf{P} \in \cP$ and every group $g\in\cG$, the pair
$(\mathbf{P},g)$ satisfies both the expert-wise Bernstein
condition of Definition~\ref{Con:Expert-wise Bernstein} with $\gamma_g = 6$ and the marginal risk-gap
condition of Definition~\ref{Con:Marginal risk-gap condition} with $\delta_g = 1/6$.

Nevertheless, for any constant $\beta \geq 1$,
\[
\forall n \geq 1, \qquad
    \inf_{\hat f_n}\ \max_{\mathbf{P} \in \cP} \ \mathbf{P}^{\otimes n}\!\left(\exists g \in \cG \ \text{\rm s.t. } \ \cE_{\mathbf{P},g} \bigl(\hat f_n\bigr)
    \geq \frac{1}{14\beta\sqrt{n \,\mathbf{P}(X \in g)}} \right) \geq \frac{1}{2} - \frac{0.22}{\beta}\,,
\]
where the infimum is taken over all, possibly randomized, classifiers
$\hat{f}_n : \cX \to [0,1]$ that are measurable functions of the training sample $D_n$.
In the inequality above, $\mathbf{P}^{\otimes n}$ denotes the
probability with respect to the training
sample $D_n = \bigl((X_1,Y_1),\ldots,(X_n,Y_n)\bigr)$ i.i.d. according to $\mathbf{P}$, and $\cE_{\mathbf{P},g}$ denotes the excess risk over group $g$ under $\mathbf{P}$.
\end{theorem}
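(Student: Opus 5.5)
The plan is a two-point (Le Cam) argument in which the point $c$, shared by both groups, carries a label bias of unknown sign. Meanwhile the points $a$ and $b$ force the two group oracles to be the opposite constant classifiers.

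\textbf{Construction.} I take $\mathbf{P}(X=a)=\mathbf{P}(X=b)=\mathbf{P}(X=c)=1/3$, with $Y=1$ a.s.\ on $a$ and $Y=0$ a.s.\ on $b$. On $c$ I set $\eta(c)=1/2+\sigma\varepsilon$, where $\sigma\in\{-1,+1\}$ and $\varepsilon=\varepsilon_n\in(0,1/8]$ will be tuned of order $1/(\beta\sqrt n)$. This gives $\cP=\{\mathbf{P}_{+},\mathbf{P}_{-}\}$.

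\textbf{Conditions (i) and (ii).} On $g_1=\{a,c\}$, the point $a$ dominates, so $f^\star_{g_1}=f_1$ under both $\mathbf{P}_\pm$. Symmetrically, $f^\star_{g_2}=f_0$.
\begin{itemize}
\item \emph{Risk gap.} Since $f_0$ and $f_1$ differ everywhere, the excess of the wrong constant on $g_1$ is $\tfrac12\bigl(1+\sigma\cdot 2\varepsilon\bigr)\ge \tfrac12(1-2\varepsilon)\ge 1/6$, and the same holds on $g_2$.
\item \emph{Expert-wise Bernstein.} By~\eqref{eq:rem-class-1}, the left-hand side for the non-oracle expert equals $\mathbf{P}(X\in g)=2/3$. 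The right-hand side equals $\gamma_g\cdot\tfrac13(1-2\varepsilon)\ge\gamma_g/9$, so $\gamma_g=6$ suffices. For the oracle expert both sides vanish.
\end{itemize}
These checks are routine.

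\textbf{Tension on $c$.} Let $p=\hat f_n(c)\in[0,1]$ be the learner's (randomized) probability of predicting $1$ at $c$. The learner can only help itself by predicting $1$ on $a$ and $0$ on $b$. Computing conditional losses at $c$ as in~\eqref{eq:rem-class-2}, the excess risks satisfy
\[
\cE_{g_1}(\hat f_n)\ \ge\ \tfrac12(1-p)\,(2\sigma\varepsilon),\qquad \cE_{g_2}(\hat f_n)\ \ge\ \tfrac12\,p\,(-2\sigma\varepsilon).
\]
Under $\mathbf{P}_+$ with $p\le 1/2$, this gives $\cE_{g_1}\ge \varepsilon/2$. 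Under $\mathbf{P}_-$ with $p\ge 1/2$, it gives $\cE_{g_2}\ge\varepsilon/2$. I then choose $\varepsilon$ so that $\varepsilon/2\ge 1/\bigl(14\beta\sqrt{n\cdot 2/3}\bigr)$, i.e.\ $\varepsilon=c_0/(\beta\sqrt n)$ for a suitable numerical $c_0$, capped at $1/8$ (the cap is harmless for small $n$ since $\beta\ge1$).

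\textbf{Reduction to testing.} With the test $\psi=\ind{p\ge 1/2}$, the maximum over $\cP$ of the failure probability is at least
\[
\tfrac12\bigl(\mathbf{P}_+^{\otimes n}(\psi=0)+\mathbf{P}_-^{\otimes n}(\psi=1)\bigr)\ \ge\ \tfrac12\bigl(1-\mathrm{TV}(\mathbf{P}_+^{\otimes n},\mathbf{P}_-^{\otimes n})\bigr).
\]
If $\hat f_n$ carries internal randomization, I condition on it first, or equivalently treat $p$ as its conditional mean, so the argument is unchanged. By Pinsker, $\mathrm{TV}\le\sqrt{n\,\mathrm{KL}(\mathbf{P}_+\|\mathbf{P}_-)/2}$, and
\[
\mathrm{KL}(\mathbf{P}_+\|\mathbf{P}_-)=\tfrac13\,\mathrm{kl}\bigl(\tfrac12+\varepsilon,\tfrac12-\varepsilon\bigr)\le \tfrac13\cdot\frac{4\varepsilon^2}{1/4-\varepsilon^2}\lesssim \varepsilon^2 .
\]
Hence $\mathrm{TV}\lesssim \sqrt n\,\varepsilon = c_0'/\beta$, and the bound takes the form $\tfrac12-c''/\beta$.

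\textbf{Main obstacle.} The only delicate step is bookkeeping the numerical constants: $14$ in the threshold, $0.22$ in the probability, and the cap $\varepsilon\le 1/8$ needed for $\delta_g=1/6$ and $\gamma_g=6$. Everything must be made compatible simultaneously for all $n\ge1$. I would fix $\varepsilon=\min\{1/8,\ 3/(14\beta\sqrt{2n/3})\}$ and evaluate the KL bound with $\varepsilon\le1/8$ to extract the constant $0.22$.
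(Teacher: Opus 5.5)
Your approach is the paper's: a Le Cam two-point argument on the shared point $c$, whose label bias has unknown sign. You use the test $\ind{\hat f_n(c)\ge 1/2}$, then Pinsker with tensorization of $\operatorname{KL}$. The differences in construction (uniform marginal $1/3$ instead of $(1/4,1/4,1/2)$, and labels on $a,b$ swapped so that $f^\star_{\{a,c\}}=f_1$) are cosmetic. Your checks of the two conditions are correct, as are the bounds $\cE_{g_1}\ge(1-p)\sigma\varepsilon$ and $\cE_{g_2}\ge -p\sigma\varepsilon$. However, the step you flagged as delicate, the constants, contains actual errors.

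\textbf{The cap $\varepsilon\le 1/8$.} Your construction does not need it: there the gap is $\tfrac12(1-2\varepsilon)$, so both conditions hold for all $\varepsilon\le 1/3$. The cap is also not harmless. Take $n=1$ and $1\le\beta<1.39$. The threshold $1/(14\beta\sqrt{2/3})\approx 0.0875/\beta$ then exceeds $1/16$. The data-free learner with $\hat f(a)=1$, $\hat f(b)=0$, $\hat f(c)=1/2$ has $\max_g\cE_g=\varepsilon/2\le 1/16$ under both $\mathbf{P}_\pm$. So with your $\varepsilon=\min\{1/8,\dots\}$ the probability in question is $0$ there, and the claimed inequality fails.

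\textbf{The constant $0.22$.} Independently, take $\varepsilon=3/(14\beta\sqrt{2n/3})$ and your bound $\operatorname{kl}\le 4\varepsilon^2/(1/4-\varepsilon^2)$ evaluated at $\varepsilon\le 1/8$. This only gives $\tfrac12\mathrm{TV}\le 0.2213/\beta$, slightly above $0.22/\beta$.

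\textbf{Repair.} Take $\varepsilon=1/(7\beta\sqrt{2n/3})\le 0.175$, with no $1/8$ cap. Then:
\begin{itemize}
\item $\varepsilon/2$ equals the threshold exactly;
\item the gap is at least $0.325$, and $\gamma_g=6$ holds;
\item $\operatorname{KL}(\mathbf{P}_+\|\mathbf{P}_-)\le 6.1\,\varepsilon^2$, so $\tfrac12\mathrm{TV}\le 0.16/\beta$.
\end{itemize}
For comparison, in the paper $\epsilon\le 1/8$ is genuinely needed, because the gap there is $(1-4\epsilon)/3$. But its choice $\epsilon=1/(8\beta\sqrt n)$ never reaches the cap. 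Its threshold $2\epsilon/3=1/(12\beta\sqrt n)$ still dominates $1/(14\beta\sqrt{3n/4})$ for every $n\ge 1$. This consistency is what your version lacks.

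\textbf{The choice of $\cP$.} The statement fixes $\cP$ before $\beta$ and $n$. So $\cP=\{\mathbf{P}_+,\mathbf{P}_-\}$ with $\varepsilon=\varepsilon_n$ is not admissible as written. Instead take $\cP=\{\mathbf{P}_{\sigma,\varepsilon}:\sigma=\pm1,\ 0<\varepsilon\le 1/3\}$. This family satisfies both conditions uniformly with $\gamma_g=6$ and $\delta_g=1/6$. Its maximum dominates the maximum over any two-point subfamily, which is what the paper does with $\epsilon\in[-1/8,1/8]$.
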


A randomized classifier $f : \cX \to [0,1]$ draws independently at random
the label of a context $x \in \cX$ according to a Bernoulli distribution
with parameter $f(x)$. \\

\begin{proof}
    The class of distributions $\cP$ considered are indexed by $\epsilon \in [-1/8,\,1/8]$, and may be read
    in the following table, providing $p_{\epsilon}(x,y) = \mathbf{P}_\epsilon(Y=y, X=x)$
    in the first lines and the marginal distribution over $\cX$ in the final line:
    \newcolumntype{C}[1]{>{\centering\arraybackslash}m{#1}}
    \begin{center}
    \begin{tabular}{ccC{1.6cm}C{1.6cm}C{1.9cm}}
    \toprule
    $p_{\epsilon}(x,y)$ & & \multicolumn{3}{c}{$x$} \\
    \cmidrule(l){3-5}
    $y$ & & $a$ & $b$ & $c$ \\
    \midrule
    $0$ & & $1/4$ & $0$ & $(1/2 - \epsilon)/2$ \\
    $1$ & & $0$ & $1/4$ & $(1/2 + \epsilon)/2$ \\
    \midrule
    $\mathbf{P}_\epsilon(X=x)$ & & $1/4$ & $1/4$ & $1/2$ \\
    \bottomrule
    \end{tabular}
    \end{center}
    In particular, the conditional distributions with respect to
    single values in $\cX$ equal
    \[
    \mathbf{P}_{\epsilon}(Y = 1 \mid X = a) = 1 - \mathbf{P}_{\epsilon}(Y = 1 \mid X = b) = 0 \qquad\text{and}\qquad
    \mathbf{P}_{\epsilon}(Y = 1 \mid X = c) = \frac{1}{2} + \epsilon\,.
    \]
    Thus, all distributions $\mathbf{P}_{\epsilon}$ have the same marginal distributions over $X$ and the same conditional distributions, except
    for $\mathbf{P}_{\epsilon}(Y = \,\cdot\, \mid X = c)$.

    We denote by $g_{a,c} = \{a,c\}$ and $g_{b,c} = \{b,c\}$ the two groups considered in $\cG$.
    The conditional distributions with respect to these two groups
    are characterized by
    \begin{align*}
    \mathbf{P}_{\epsilon}(Y = 1 \mid X \in g_{a,c}) & =
    \frac{(1/2+\epsilon)/2}{1/4 + 0 + (1/2-\epsilon)/2 + (1/2+\epsilon)/2} =
    \frac{1+2\epsilon}{3} \\
    \mbox{and} \qquad
    \mathbf{P}_{\epsilon}(Y = 1 \mid X \in g_{b,c}) & = \frac{1/4 + (1/2+\epsilon)/2}{0 + 1/4 + (1/2-\epsilon)/2 + (1/2+\epsilon)/2}
    = \frac{2 + 2\epsilon}{3}\,.
    \end{align*}

    \noindent\emph{Group-wise risks and optimal classifiers.}
    Let $\risk_{\epsilon,g}(f)$ stand for the risk of a prediction function $f$ in a group $g$ under the distribution $(X, Y) \sim \mathbf{P}_{\epsilon}$.
    We have
    \[
    \risk_{\epsilon,g_{a,c}}(f_1)
    = \mathbf{P}_{\epsilon}\bigl(Y \neq f_1(X) \bigm| X \in g_{a,c}\bigr) = \mathbf{P}_{\epsilon}(Y = 0 \,|\, X \in g_{a,c}) = \frac{2 - 2\epsilon}{3}\,,
    \]
    and similarly,
    \[
    \risk_{\epsilon,g_{a,c}}(f_0) =
    \frac{1+2\epsilon}{3},\qquad
    \risk_{\epsilon,g_{b,c}}(f_1) = \frac{1-2\epsilon}{3}\,, \qquad
    \risk_{\epsilon,g_{b,c}}(f_0)
    = \frac{2 + 2\epsilon}{3}\,.
    \]
    Therefore, for all $\epsilon \in [-1/8, \, 1/8]$,
    \begin{align*}
    \risk_{\epsilon,g_{a,c}}(f_1) - \risk_{\epsilon,g_{a,c}}(f_0)
     & = \frac{1- 4\epsilon}{3} \geq \frac{1}{6} \\ \mbox{and} \qquad
    \risk_{\epsilon,g_{b,c}}(f_0) - \risk_{\epsilon,g_{b,c}}(f_1)
     & = \frac{1+ 4\epsilon}{3} \geq \frac{1}{6}\,.
    \end{align*}
    In particular, under all distributions $\mathbf{P}_\epsilon$,
    the group-wise optimal predictors are the same, namely,
    $f_{g_{a,c}}^\star = f_0$ and
    $f_{g_{b,c}}^\star = f_1$.
    \medskip

    \noindent\emph{Verification of the conditions.}
    As is clear from the excess risks computed above,
    the marginal risk-gap
    condition of Definition~\ref{Con:Marginal risk-gap condition} is satisfied for both
    distributions and both groups, with risk gaps larger
    than $1/6$.

    For the expert-wise Bernstein
    condition of Definition~\ref{Con:Expert-wise Bernstein},
    we proceed as follows.
    Almost surely,
    \[
    \Bigl( \ell\bigl(f_0(X),Y\bigr)-\ell\bigl(f_1(X),Y\bigr) \Bigr)^{\! 2} =
    \Bigl( \indb{f_0(X) \ne Y} - \indb{f_1(X) \ne Y} \Bigr)^2 = 1
    \]
    and for all $\epsilon \in [-1/8, \, 1/8]$,
    \begin{equation}
    \label{eq:prob34}
    \mathbf{P}_\epsilon(X \in g_{a,c}) =
    \mathbf{P}_\epsilon(X \in g_{b,c}) = 3/4\,.
    \end{equation}
    These facts, together with the $\delta_g \geq 1/6$ marginal risk gaps
    yield for $f\neq f^\star_g$:
    \begin{align*}
    & \Exp_\epsilon\biggl[ \ind{X\in g} \Bigl( \ell\bigl(f(X),Y\bigr)- \ell\bigl(f^\star_g(X),Y\bigr)\Bigr)^{\! 2} \biggr]
    = \frac{3}{4} \\
    \mbox{and} \qquad
    & \Exp_{\epsilon}\!\left[ \ind{X\in g} \Bigl( \ell\bigl(f(X),Y\bigr)-
    \ell\bigl(f^\star_g(X),Y\bigr) \Bigr) \right] \geq \frac{3}{4} \delta_g \,,
    \end{align*}
    i.e., the expert-wise Bernstein condition of Definition~\ref{Con:Expert-wise Bernstein} holds
    under both distributions and for both groups with $\gamma_{g_{a,c}} = \gamma_{g_{b,c}}=1/\delta_g=6$.
    \medskip

    \noindent\emph{Lower bounds on excess risks.}
    Consider a randomized predictor
    $\hat f : \cX \to [0,1]$ built on training data.
    Given $x \in \cX$, the prediction is drawn independently at
    random according to a Bernoulli distribution
    with parameter $\hat f(x)$.

    We first focus on group $g_{b,c}$. The oracle $f_1$ incurs $0$ loss when $x=b$, so that
    the excess risk of $\hat f$ may be lower bounded
    by what happens for $x=c$, with proper weight:
    \begin{align*}
        \cE_{\epsilon,g_{b,c}}\bigl(\hat f\bigr)
        \geq
        \underbrace{\mathbf{P}_{\epsilon}(X=c\,|\,X\in g_{b,c})}_{=2/3}\left(\risk_{\epsilon,\{c\}}\bigl(\hat{f}\bigr)-\risk_{\epsilon,\{c\}}(f_1)\right).
    \end{align*}
    Similarly to the risk calculations
    above,
    \smash{$\risk_ {\epsilon,\{c\}}(f_0)=1/2+\epsilon$
    and $\risk_{\epsilon,\{c\}}(f_1)=1/2-\epsilon$,}
    and, for the randomized predictor $\hat{f}$, by definition,
    \[
    \risk_{\epsilon,\{c\}}\bigl(\hat{f}\bigr)
    = \hat{f}(c)\, \risk_{\epsilon,\{c\}}(f_1)
    + \bigl( 1 - \hat{f}(c) \bigr)\, \risk_{\epsilon,\{c\}}(f_0)\,,
    \]
so that, collecting all (in)equalities,
we obtain the following lower bound,
of course only of interest for $\epsilon \in (0,\,1/8]$:
\[
\cE_{\epsilon,g_{b,c}}\bigl(\hat{f}\bigr)
\geq \frac{2}{3}\,
 \bigl( 1 - \hat{f}(c) \bigr) \, \Bigl( \risk_{\epsilon,\{c\}}(f_0)
 - \risk_{\epsilon,\{c\}}(f_1) \Bigr) \\
= \frac{4\epsilon}{3}\, \bigl( 1 - \hat{f}(c) \bigr) \,.
\]
A similar proof for group $g_{a,c}$ leads to a lower bound
only of interest for $\epsilon \in [-1/8,\,0)$:
    \[
        \cE_{\epsilon,g_{a,c}}\bigl(\hat f\bigr)
        \geq
        -\frac{4\epsilon}{3}\,\hat{f}(c).
    \]
The event $C = \bigl\{ \hat{f}(c) \geq 1/2 \bigr\}$
and its complement $C^c$ satisfy, in particular, \\ for all $\epsilon^+ \in (0,\,1/8]$
\begin{align*}
C = \bigl\{ \hat{f}(c) \geq 1/2 \bigr\} & \subseteq \left\{ \max_{g \in \cG} \cE_{-\epsilon^+,g}\bigl(\hat f\bigr) \geq \frac{2 \eps^+}{3} \right\} \\
\mbox{and} \qquad
C^c = \bigl\{ \hat{f}(c) < 1/2 \bigr\} & \subseteq \left\{ \max_{g \in \cG} \cE_{\epsilon^+,g}\bigl(\hat f\bigr) \geq \frac{2 \eps^+}{3} \right\}.
\end{align*}

    \noindent\emph{Conclusion by Pinsker's inequality.}
    We now fix $n \geq 1$ and consider a randomized predictor
    $\hat f_n$ built based on $D_n$.
    By taking expectations and using that the maximum over two points
    is lower bounded by the average, we established so far that for all
    $\epsilon^+ \in (0,\,1/8]$,
    \begin{align*}\label{eq:lower bound with TV}
\max_{\sigma\in\{-1,+1\}}\mathbf{P}^{\otimes n}_{\sigma\epsilon^+}\!\left(
            \max_{g\in\cG}
            \cE_{\sigma\epsilon^+,g}\bigl(\hat f_n\bigr) \geq \frac{2 \eps^+}{3}
        \right)
        \geq \frac{1}{2} \Bigl( \mathbf{P}^{\otimes n}_{-\epsilon^+}\bigl( \hat{f}_n(c) \geq 1/2 \bigr) +
        \mathbf{P}^{\otimes n}_{\epsilon^+}\bigl( \hat{f}_n(c) < 1/2 \bigr) \Bigr)\, .
    \end{align*}

Recall that $\hat{f}_n(c)$ is $D_n$--measurable.
  By Pinsker's inequality
and the additivity of the Kullback-Leibler divergence for product measures,
\begin{align*}
& \mathbf{P}^{\otimes n}_{-\epsilon^+}\bigl( \hat{f}_n(c) \geq 1/2 \bigr) +
        \mathbf{P}^{\otimes n}_{\epsilon^+}\bigl( \hat{f}_n(c) < 1/2 \bigr) \\
\geq \ &
\mathbf{P}^{\otimes n}_{\epsilon^+}\bigl( \hat{f}_n(c) \geq 1/2 \bigr) +
        \mathbf{P}^{\otimes n}_{\epsilon^+}\bigl( \hat{f}_n(c) < 1/2 \bigr)
- \sqrt{\frac{1}{2} \, \operatorname{KL} \bigl( \mathbf{P}^{\otimes n}_{-\epsilon^+} \,\Vert\, \mathbf{P}^{\otimes n}_{\epsilon^+} \bigr)}
    = 1 - \sqrt{\frac{n}{2} \, \operatorname{KL} (\mathbf{P}_{-\epsilon^+} \,\Vert\, \mathbf{P}_{\epsilon^+})}\,.
\end{align*}
We last show that
$\operatorname{KL} \bigl( \mathbf{P}_{-\epsilon^+} \,\Vert \, \mathbf{P}_{\epsilon^+} \bigr) \leq 6\,(\epsilon^+)^2$;
    together with the inequalities above, we then get
    \[
    \max_{\sigma\in\{-1,+1\}}\mathbf{P}^{\otimes n}_{\sigma\epsilon^+}\!\left( \max_{g\in\cG} \cE_{\sigma\epsilon^+,g}\bigl(\hat f_n\bigr) \geq \frac{2 \eps^+}{3}
    \right) \geq \frac{1}{2} - \sqrt{3n\,(\epsilon^+)^2}\,;
     \]
    the proof for the value $n$ fixed is concluded by taking $\epsilon^+ = 1/(8\beta\sqrt{n}) \in (0,\,1/8]$,
    by noting that, for the groups $g \in \cG$, the equalities~\eqref{eq:prob34} ensure that
    \[
    \frac{2 \eps^+}{3} = \frac{1}{12\beta\sqrt{n}} \geq \frac{1}{14\beta\sqrt{n \,\mathbf{P}(X \in g)}}\,,
    \]
    and by substituting $\sqrt{3/64} \leq 0.22$ to get the claimed lower bound on the probabilities.

    It remains to verify the claimed Kullback-Leibler bound. By definition of the Kullback-Leibler divergence and of the distributions
    $\mathbf{P}_{-\epsilon^+}$ and $\mathbf{P}_{\epsilon^+}$,
    \begin{multline*}
        \operatorname{KL} \bigl( \mathbf{P}_{-\epsilon^+} \,\Vert\, \mathbf{P}_{\epsilon^+} \bigr)
    = \sum_{x\in\{a,b,c\}} \sum_{y\in\{0,1\}} p_{-\epsilon^+}(x,y) \log\frac{p_{-\epsilon^+}(x,y)}{p_{\epsilon^+}(x,y)}
    = \sum_{y\in\{0,1\}} p_{-\epsilon^+}(c,y) \log\frac{p_{-\epsilon^+}(c,y)}{p_{\epsilon^+}(c,y)} \\[.2cm]
    = \frac{1}{2}\left(\frac{1}{2}+\epsilon^+\right)
    \ln \frac{(1/2+\epsilon^+)/2}{(1/2-\epsilon^+)/2}
    + \frac{1}{2}\left(\frac{1}{2}-\epsilon^+\right)
    \ln \frac{(1/2-\epsilon^+)/2}{(1/2+\epsilon^+)/2}
    = \epsilon^+ \log \frac{1 + 2\epsilon^+}{1 - 2\epsilon^+}
    \leq 6\, (\epsilon^+)^2\,,
    \end{multline*}
    where we used, for the inequality,
    $\log\bigl( ({1+x})/({1-x}) \bigr) \leq 3x$ for $x\in [0,1/4]$.
\end{proof}

\subsection{Lower bound for ERM over group-stable region}\label{subsec:lower_ERM}
The purpose of this section is to highlight the key difficulty of the problem. Namely, we show here that intuitively and conceptually simple algorithm, consisting in performing ERM minimization on every group-stable region must incur $\sqrt{|\cG|}$ multiplicative factor in the rate even if $|\cF| = 2$. Meanwhile, the approach that we target here and that was proposed by~\cite{pmlr-v162-tosh22a} admits merely $\sqrt{\log|\cG|}$ inflation in this case. We actually, in the subsequent subsection show that $\sqrt{\log|\cG|}$ is unavoidable in general.

Let us describe this baseline strategy. Recall the $\cG$--equivalence relation of Definition~\ref{def:eqv}, $x, x' \in \cX$ are $\cG$--equivalent when they belong to exactly the same groups, and the associated equivalence classes, called group-stable regions.
Form a finite partition $\mathfrak{R}$ of $\cX$. Given a training sample $D_n$, the strategy performs an independent ERM step on every group-stable region $\cR \in \mathfrak{R}$, it computes
\[\widehat{f}_{\cR} \in \arg\min_{f \in \cF} \widehat{\risk}_{\cR}(f)\,,\]
where $\widehat{\risk}_{\cR}(f)$ denotes the empirical risk of $f$ over the training points of $D_n$ falling in $\cR$. The final predictor $\widehat{f}_n : \cX \mapsto [0,1]$ is then obtained by gluing these ERMs together.

In the construction below, $\mathfrak{R}$ consists of the singletons of $\cX$ and $\cF$ consists of two constant functions, so that $\widehat{f}_n$ is nothing but the majority-vote classifier over the singletons.

\begin{theorem}
	Assume the setup of standard binary classification and that $|\cX| = \infty$. For any $2\leq m\leq n$, there is a group structure $\cG$ with $\card{\cG}=m$, a classifier class $\cF=\{f_0,f_1\}$, two distributions $\mathbf{P}_{+},\mathbf{P}_{-}$ over $\cX \times \cY$ such that for a classifier $\widehat{f}_n$ coinciding with the ERM over every group-stable region, we have:
	\[
	\forall g\in \cG,\quad
	\max_{\sigma \in \{\pm\}} \mathbf{P}_{\sigma}^{\otimes n}\left(\cE_{\sigma,g}(\widehat{f}_n )\geq \frac{1}{256}\sqrt{\frac{\card{\cG}}{n\mathbf{P}_\sigma(X\in g)}}\right)\geq \frac{1}{3},
	\]  where $\mathbf{P}_\sigma^{\otimes n}$ denotes the
	probability with respect to the training
	sample $((X_i,Y_i))_{i = 1}^n$ i.i.d.\ according to $\mathbf{P}_\sigma$, and $\cE_{\sigma,g}$ denotes the excess risk over group $g$ under $\mathbf{P}_\sigma$.
\end{theorem}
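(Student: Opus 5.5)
We will build an instance in which the groups are large while the group-stable regions are tiny and numerous. The idea is that region-wise ERM must decide on each small region from only about $n\,\mathbf{P}(X\in\cR)$ points, and its errors accumulate over many regions. Concretely, we pick $N\asymp m$ distinct points $x_1,\dots,x_N\in\cX$ (possible since $|\cX|=\infty$). We choose $\cG$ so that every point is its own group-stable region, and so that each group $g$ contains a constant fraction of the points; this makes $\mathbf{P}(X\in g)\asymp 1$. The simplest way to separate $N$ points with $m$ groups is $N\le 2^m$ via binary codes, but we want $N\asymp m$. So we take $m-1$ groups $g_i=\{x_i\}\cup S$, where $S$ is a fixed half of the points, plus one group $g_0=\cX$. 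Then $\cF=\{f_0\equiv 0,f_1\equiv 1\}$, and $X$ is uniform over the $N$ points.

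Under $\mathbf{P}_\sigma$ we set $\mathbf{P}_\sigma(Y=1\mid X=x_j)=1/2+\sigma\epsilon$ for all $j$, with $\epsilon\asymp\sqrt{N/n}$ (note $N\le m\le n$, shrunk by a constant so that $\epsilon\le 1/4$). Then $f^\star_g=f_1$ for $\sigma=+$ and $f_0$ for $\sigma=-$, for every group. The excess risk of $\widehat f_n$ on $g$ is
\[
\cE_{\sigma,g}(\widehat f_n)=2\epsilon\cdot\frac{\#\{x_j\in g:\widehat f_n(x_j)\ \text{wrong}\}}{|g|}.
\]

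The key step is to show that, with constant probability, a constant fraction of the points in $g$ are misclassified. Each point $x_j$ receives $N_j\sim\mathrm{Bin}(n,1/N)$ samples, roughly $n/N\asymp\epsilon^{-2}$ of them. Conditionally on $N_j$, the majority vote errs with probability at least a constant $c_0>0$ by an anti-concentration argument: a lower bound on a binomial tail $\P(\mathrm{Bin}(k,1/2+\epsilon)\le k/2)$ when $k\epsilon^2\lesssim 1$, e.g.\ via Slud's inequality or a normal approximation, with ties broken adversarially or counted as half. The regions with $N_j\le 2n/N$ form a constant fraction, and the errors on distinct points are conditionally independent given the counts. Bounding the variance then shows that, with probability at least $1/3$, the error fraction in $g$ is at least $c_0/4$, at least for $g=S$-type groups and for $g_0$. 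This gives
\[
\cE_{\sigma,g}\gtrsim\epsilon\asymp\sqrt{m/n}\asymp\sqrt{\card{\cG}/(n\,\mathbf{P}(X\in g))},
\]
and the statement is checked group by group, with constants tuned to reach $1/256$.

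The main obstacle is the requirement that the bound hold for \emph{every} $g$ simultaneously in the stated form, including the singleton-augmented groups $g_i$. For these, $\mathbf{P}(X\in g_i)\approx 1/2$ and the error fraction over $g_i$ is governed by $S$, so they are fine. The other delicate point is the regime $n$ close to $m$, where $n/N$ is $O(1)$ and many points have no samples. There we rely on an arbitrary but fixed rule for empty regions, which errs with probability $1/2$ under one of the two $\sigma$. This is why the statement takes a maximum over $\sigma$ and only claims probability $1/3$.
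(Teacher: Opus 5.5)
Your group family does not produce singleton group-stable regions. With $g_i=\{x_i\}\cup S$ and $g_0=\cX$, every point of $S$ belongs to every group, so all of $S$ is one $\cG$-equivalence class. The group-stable regions are therefore $S$ together with singletons $\{x_i\}$, $x_i\notin S$ (any remaining points outside $S$ are merged too), and not the individual points as you claim.

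Region-wise ERM then takes a single majority vote over the roughly $n/2$ sample points falling in $S$. Since $\cE_{\sigma,g}\le 2\epsilon$ always, you need $\epsilon\gtrsim\sqrt{m/n}$ to reach the target rate. With such $\epsilon$, Hoeffding shows this vote is wrong with probability $e^{-\Theta(m)}$. So with high probability the only possible mistake inside $g_i$ is at $x_i$, and $\cE_{\sigma,g_i}(\widehat f_n)\le 2\epsilon/(|S|+1)\asymp\sqrt{m/n}/m$. This is far below $\frac{1}{256}\sqrt{m/(n\,\mathbf{P}_\sigma(X\in g_i))}\asymp\sqrt{m/n}$ once $m$ is large. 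The groups $g_i=S$ (when $x_i\in S$) fail in the same way. Your remark that these groups are fine because they are governed by $S$ is exactly backwards: being dominated by one large, well-sampled region is what makes them easy.

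The fix removes the obstacle you perceived with separating $N\asymp m$ points. Take $N=m$ points and $g_j=\cX\setminus\{x_j\}$. Each point is the unique point missing from its own group, so all regions are singletons, $\card{\cG}=m$, and $\mathbf{P}(X\in g_j)=(m-1)/m\ge 1/2$. This is the paper's construction, and it gives $\cE_{\sigma,g_j}(\widehat f_n)=\frac{2\epsilon}{m-1}\sum_{k\neq j}\ind{\widehat f_n(k)\neq \frac{1+\sigma}{2}}$.

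Even after this repair, your probabilistic step is only sketched, and it is harder than it needs to be. Slud's inequality or a normal approximation, plus Chebyshev over conditionally independent errors, gives constant probability only when $g$ contains many points. It does not deliver probability $1/3$ for small $m$ (for $m=2$, $g_j$ is a single point). Also, a fixed tie-breaking rule on an empty region errs with probability $1$ under one $\sigma$ and $0$ under the other, not $1/2$; that asymmetry must be handled explicitly.

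The paper sidesteps all of this. For each region $k$ it proves $\mathbf{P}_-^{\otimes n}(\widehat f_n(k)=1)+\mathbf{P}_+^{\otimes n}(\widehat f_n(k)=0)\ge 1-3\epsilon\sqrt{n/m}$ by Pinsker's inequality. It applies Pinsker to auxiliary distributions that reveal only the labels falling in region $k$, so the per-sample KL is $O(\epsilon^2/m)$. Summing over $k\neq j$ gives $\max_\sigma\mathbf{E}_\sigma[\cE_{\sigma,g_j}]\ge 13\epsilon/16$ with $\epsilon=\frac{1}{16}\sqrt{m/n}$. The bound $\cE_{\sigma,g_j}\le 2\epsilon$ then converts this into a probability. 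No independence or anti-concentration is needed, and the constants $1/256$ and $1/3$ come out for every $2\le m\le n$.
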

\begin{proof}
	Recall that $\cY=\{0,1\}$. The exact nature of the objects in $\cX$ does not matter and, since $|\cX|=\infty$, we can take $\cX=\{1,2,\ldots,m\}$ (the distributions below are supported on this subset, and the groups and classifiers are extended arbitrarily outside of it) and the group structure $\cG=\{g_1,g_2,\ldots,g_m\}$ with, for all $1 \leq j \leq m$, $g_j=\cX \setminus \{j\}$.
	The group-stable regions are exactly the $m$ singletons $\{1\},\{2\},\ldots,\{m\}$.

	Let $\epsilon\in(0,1/4]$ to be fixed later. For $\sigma \in\{\pm\}$, define the probability distribution
	\[
	\mathbf{P}_\sigma(X=x,Y=y)=\frac{1}{m}\biggl(\frac{1}{2}+\sigma\epsilon(2y-1)\biggr),
	\]
	which is simply described as follows: under $\mathbf{P}_\sigma$, the marginal distribution of $X$ is uniform, and $\mathbf{P}_{\sigma}(y=1|X=x)=\frac{1}{2}+\sigma\epsilon$. This immediately implies that for $1 \leq j \leq m$, the best benchmark predictor $f^\star_{g_j}$, minimizing the risk under $\mathbf{P}_{\sigma}$ over $\cF=\{f_0,f_1\}$, where $f_0 \equiv 0$ and $f_1 \equiv 1$ are the two constant classifiers, is given by $f^\star_{g_j} = f_{\frac{1+\sigma}{2}} \equiv \frac{1+\sigma}{2}$.

	Running ERM for binary loss over all group-stable regions results in the majority voting predictor $\widehat{f}_n$, that is
	\[
	\forall 1 \leq j \leq m,\quad \widehat{f}_n(j)=\ind{\sum_{i=1}^n \ind{X_i=j}(2Y_i-1)\geq 0}\,,
	\]

	Denoting by $\risk_{\sigma,g}$ the risk on group $g$ under $\mathbf{P}_{\sigma}$, the excess risk of $\widehat{f}_n$ for any group $g_j$, $1 \leq j \leq m$, is thus
	\begin{align}\label{eq:risk_to_f_hat}
		& \cE_{\sigma,g_j}(\widehat{f}_n) =\risk_{\sigma,g_j}(\widehat{f}_n)-\risk_{\sigma,g_j}(f_{\frac{1+\sigma}{2}}) \nonumber \\
		& = \frac{1}{\mathbf{P}_{\sigma}(X \in g_j)}  \Exp_{\sigma}\bigg[\ind{\widehat{f}_n(X) \neq Y, X \in g_j} - \ind{\frac{1+\sigma}{2} \neq Y, X \in g_j} \bigg] \nonumber \\
		& = \frac{m}{m-1} \sum_{k\neq j} \frac{1}{m} \ind{\widehat{f}_n(k)\neq \frac{1+\sigma}{2}}\left( \mathbf{P}_{\sigma}\left(Y = \frac{1+\sigma}{2} \, \bigg| \, X=k\right) - \mathbf{P}_{\sigma}\left(Y = \frac{1-\sigma}{2} \, \bigg| \, X=k\right)  \right) \nonumber \\
		&=\frac{2\epsilon}{m-1}\sum_{k\neq j}\ind{\widehat{f}_n(k)\neq \frac{1+\sigma}{2}}
	\end{align}

	Let us now define an auxiliary distribution. For a context $1 \leq k \leq m$, we consider the distribution $\mathbf{Q}_{\sigma,k}$ defined on $[m] \setminus \{ k \} \cup \{(k,0),(k,1)\} $ such that
	\[
	\mathbf{Q}_{\sigma,k}(j) = 1-1/m \mbox{ for } j \neq k, \quad \mathbf{Q}_{\sigma,k}((k,0)) = \frac{1/2-\sigma\eps}{m}, \mbox{ and } \mathbf{Q}_{\sigma,k}((k,1)) = \frac{1/2+\sigma\eps}{m} \, .
	\]

	Note that by definition of $\mathbf{P}_\sigma$ and $\mathbf{Q}_{\sigma,k}$, we have, for all $1 \leq k \leq m$,
	\[
	\mathbf{P}_\sigma^{\otimes n}(\widehat{f}_n(k)=1) = \mathbf{Q}_{\sigma,k}^{\otimes n}(\widehat{f}_n(k)=1),
	\] so that $\mathbf{P}_\sigma$ and $\mathbf{Q}_{\sigma,k}$ agree on an event of interest, while $\mathbf{Q}_{+,k}$ and $\mathbf{Q}_{-,k}$ are closer to one another than $\mathbf{P}_{+}$ and $\mathbf{P}_{-}$ are. This will be useful in the sequel.

	By Pinsker's inequality and the above remark, we have:
	\begin{align}\label{eq:two-point}
		\mathbf{P}^{\otimes n}_{-}(\widehat{f}_n(k)=1)+\mathbf{P}^{\otimes n}_{+}(\widehat{f}_n(k)=0)& = 1- (\mathbf{Q}^{\otimes n}_{+,k}(\widehat{f}_n(k)=1)- \mathbf{Q}^{\otimes n}_{-,k}(\widehat{f}_n(k)=1)) \nonumber \\
		& \geq 1-\sqrt{\frac{1}{2} \operatorname{KL}(\mathbf{Q}^{\otimes n}_{-,k}\|\mathbf{Q}^{\otimes n}_{+,k})} \nonumber \\
		& =1-\sqrt{\frac{n}{2}\operatorname{KL}(\mathbf{Q}_{-,k}\|\mathbf{Q}_{+,k})} \, .
	\end{align}
	$\operatorname{KL}(\mathbf{Q}_{-,k}\|\mathbf{Q}_{+,k})$ can be easily bounded as follows:
	\begin{align*}
	\operatorname{KL}(\mathbf{Q}_{-,k}\|\mathbf{Q}_{+,k})& =\frac{1/2+\eps}{m} \log \frac{1/2+\eps}{1/2-\eps} + \frac{1/2-\eps}{m} \log \frac{1/2-\eps}{1/2+\eps} \\
	& =  \frac{2\epsilon}{m}\log\frac{1+2\epsilon}{1-2\epsilon} = \frac{2\epsilon}{m}\log\left(1+\frac{4\eps}{1-2\epsilon} \right)\leq \frac{8\epsilon^2}{m(1-2\epsilon)}\leq \frac{16\epsilon^2}{m},
	\end{align*}
	where we used $\log(1+x)\leq x$ for the first inequality, and $\epsilon\leq 1/4$ for the second inequality.

	Substituting the above bound into~\eqref{eq:two-point}, we get:
	\begin{equation}\label{eq:lower_bound_sum_of_probas}
		\mathbf{P}^{\otimes n}_{-}(\widehat{f}_n(k)=1)+\mathbf{P}^{\otimes n}_{+}(\widehat{f}_n(k)=0) \geq 1-\epsilon\sqrt{\frac{8n}{m}}\geq 1-3\epsilon\sqrt{\frac{n}{m}}\, .
	\end{equation}

	For any group $g_j$, $1 \leq j \leq m$, \eqref{eq:lower_bound_sum_of_probas} together with \eqref{eq:risk_to_f_hat} gives
	\begin{multline*}
		\mathbf{E}^{\otimes n}_{+}\bigl[\cE_{+,g_j}(\widehat{f}_n)\bigr]+\mathbf{E}^{\otimes n}_{-}\bigl[\cE_{-,g_j}(\widehat{f}_n)\bigr]=\frac{2\epsilon}{m-1}\sum_{k\neq j}\left(\mathbf{P}^{\otimes n}_{-}(\widehat{f}_n(k)=1)+\mathbf{P}^{\otimes n}_{+}(\widehat{f}_n(k)=0) \right)\\
		\geq \frac{2\epsilon}{m-1}\sum_{k\neq j}\left(1-3\epsilon\sqrt{\frac{n}{m}}\right)=2\epsilon\left(1-3\epsilon\sqrt{\frac{n}{m}}\right)
	\end{multline*}
	Choosing $\epsilon=\frac{1}{16}\sqrt{\frac{m}{n}}\in(0,1/4]$, entails
	\[
	\max_{\sigma\in\{\pm\}} \mathbf{E}^{\otimes n}_{\sigma}\bigl[\cE_{\sigma,g_j}(\widehat{f}_n)\bigr]\geq \epsilon \left(1-3\epsilon\sqrt{\frac{n}{m}}\right) \geq \frac{13\epsilon}{16} \, .
	\]
	Moreover, by \eqref{eq:risk_to_f_hat}, the excess risk is at most $2 \epsilon$, so we have for $\sigma\in\{\pm\}$,
	\[
	\mathbf{E}^{\otimes n}_{\sigma}\bigl[\cE_{\sigma,g_j}(\widehat{f}_n)\bigr]\leq \frac{\epsilon}{8}+2\epsilon \mathbf{P}^{\otimes n}_{\sigma}\left(\cE_{\sigma,g_j}(\widehat{f}_n)\geq\frac{\epsilon}{8}\right) \, .
	\]
	Taking maximum on both sides, and using the previous inequality, we get
	\[
	\frac{\epsilon}{8}+2\epsilon \max_{\sigma\in\{\pm\}}\mathbf{P}^{\otimes n}_{\sigma}\left(\cE_{\sigma,g_j}(\widehat{f}_n)\geq\frac{\epsilon}{8}\right)\geq \frac{13\epsilon}{16},
	\] which finally implies
	\[
	\max_{\sigma\in\{\pm\}} \mathbf{P}^{\otimes n}_{\sigma}\left(\cE_{\sigma,g_j}(\widehat{f}_n)\geq\frac{\epsilon}{8}\right)\geq \frac{11}{32}\geq \frac{1}{3} \, .
	\]
	The proof is concluded since, using $\mathbf{P}(X\in g_j)\geq 1/2$, we get
	\[
	\frac{1}{256}\sqrt{\frac{\card{\cG}}{n\mathbf{P}_\sigma(X\in g_j)}} = \frac{ \epsilon}{16}\sqrt{\frac{1}{\mathbf{P}_\sigma(X\in g_j)}} \leq  \frac{\sqrt{2} \epsilon}{16} \leq \frac{\epsilon}{8} \, .
	\]
\end{proof}

\subsection{Lower bound for all multi-group predictors}\label{subsec:lower_log}

The next lower bound shows a rather intuitive fact that $\sqrt{\log|\cG|}$ inflation of the rate is unavoidable even when $|\cF| = 2$. We, as before, assume the standard binary classification settings. To the best of our knowledge, this fact was not explicitly proven in the prior literature, though it is pretty clear that it should hold. We include the statement and its proof mostly for the sake of the completeness of the picture.

\begin{theorem}
\label{th:31}
    Assume the setup of standard binary classification and   that $|\cX| = \infty$.
    For every integer $10 \leq m \leq n$ such that $n \geq \log(2m)$, there is a benchmark class $\cF = \{f_0, f_1\}$ a group class $\cG$ with $|\cG| = 2m$, and $m$ distributions $\mathbf{P}_1,\ldots,\mathbf{P}_m$ over $\cX \times \cY$ such that, for every data-driven and possibly randomized classifier $\hat{f}$,
    \[
    \max_{j = 1, \ldots, m}\mathbf{P}^{\otimes n}_{j}\left(\exists g \in \cG \quad\text{s.t.}\quad\cE_{\mathbf{P}_j,g}(\hat{f})\geq \frac{1}{64}\sqrt{\frac{\log (2|\cG|)}{n\,\mathbf{P}_j(X\in g)}}\right)\geq \frac14\,,
    \]
    where $\mathbf{P}^{\otimes n}_{j}$ denotes the probability with respect to an i.i.d. sample distributed according to $\mathbf{P}_j$ and $\cE_{\mathbf{P}_j,g}(\hat{f})$ is the excess risk under distribution $\mathbf{P}_j$ for the group $g \in \cG$.
\end{theorem}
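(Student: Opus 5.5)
We will use a Fano-type construction with $m$ hypotheses, modelled on the two-point arguments of Theorem~\ref{tm:sharpness} and of Appendix~\ref{subsec:lower_ERM}, but designed so that a single test reveals which one of $m$ unknown locations carries a signal. The natural choice is $\cX \supseteq [m]$ with the uniform marginal on $[m]$, the constant classifiers $\cF = \{f_0,f_1\}$, and the $2m$ groups $\cG = \bigl\{\{j\} : j\in[m]\bigr\} \cup \bigl\{[m]\setminus\{j\} : j \in [m]\bigr\}$; the complements are kept mainly to match $\card{\cG} = 2m$, and the singletons do the work.

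For $j\in[m]$, define $\mathbf{P}_j$ by
\[
\mathbf{P}_j(Y=1\mid X=k) = \tfrac12 - \epsilon \quad (k\neq j), \qquad \mathbf{P}_j(Y=1\mid X=j) = \tfrac12+\epsilon .
\]
Then on the group $\{j\}$ the oracle is $f_1$, and on every other singleton $\{k\}$ it is $f_0$. For a randomized $\hat f : [m]\to[0,1]$, the excess risk on singleton $\{k\}$ is exactly $2\epsilon\,\hat f(k)$ for $k\neq j$ and $2\epsilon\,(1-\hat f(j))$ for $k=j$. Consider the event that all singleton excess risks are below $\epsilon$. On this event $\hat f(k)<1/2$ for all $k\neq j$ and $\hat f(j)>1/2$, so the estimator $\hat J = \argmax_k \hat f(k)$ equals $j$. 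Consequently
\[
\max_j \mathbf{P}_j^{\otimes n}\bigl(\exists g:\ \cE_{\mathbf{P}_j,g}\geq \epsilon\bigr) \;\geq\; \max_j \mathbf{P}_j^{\otimes n}(\hat J\neq j).
\]

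We then apply Fano's inequality, comparing against the reference $\mathbf{P}_0$ under which all conditionals are $1/2-\epsilon$. Only the cell $j$ differs, so
\[
\operatorname{KL}(\mathbf{P}_j\Vert\mathbf{P}_0) = \tfrac1m \operatorname{KL}\bigl(\mathrm{Ber}(\tfrac12+\epsilon)\,\Vert\,\mathrm{Ber}(\tfrac12-\epsilon)\bigr) \le \frac{16\epsilon^2}{m}
\]
for $\epsilon\le1/4$, which gives $n\,\mathrm{KL}\le 16 n\epsilon^2/m$. Fano's inequality then yields
\[
\max_j \mathbf{P}_j(\hat J\neq j) \;\geq\; 1 - \frac{16n\epsilon^2/m + \log 2}{\log m}.
\]
We choose $\epsilon = c\sqrt{m\log m / n}$ with $c$ small, so that this is at least $1/4$ once $m\geq 10$.

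The resulting scale is compared with the target. For a singleton, $n\,\mathbf{P}_j(X\in g) = n/m$, so the target rate is $\frac1{64}\sqrt{m\log(4m)/n}$. This matches $\epsilon$ up to constants, using $\log(4m)\le 2\log m$ for $m\ge10$. The requirement $\epsilon\le 1/4$ requires $m\log m\lesssim n$, and this is the main obstacle: the hypotheses only give $m\le n$ and $n\geq\log(2m)$. If needed, I would instead let the singleton groups have mass $p$ rather than $1/m$ (putting the remaining mass on a noiseless dummy point), and choose $p$ with $np \asymp \log m$ so that the rate $\sqrt{\log m/(np)}$ stays bounded. This should use exactly the assumption $n\ge \log(2m)$. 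Checking this rescaling and tracking the constant $1/64$ are the delicate steps; the Fano and KL steps are routine.
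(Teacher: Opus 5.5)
There is a genuine gap, and it is the one you flagged yourself: your argument proves the statement only when $m\log m\lesssim n$, while the hypotheses allow any $10\le m\le n$. With your constants you need $c\ge\sqrt2/64$ to reach the threshold, $c\le 0.16$ for Fano, and $\epsilon\le 1/4$ for the KL bound; together these require $m\log m\le 128\,n$. The repair you sketch cannot cover the remaining range. The $m$ singletons are disjoint, so each has mass $p\le 1/m$, and then $np\gtrsim\log m$ already forces $n\gtrsim m\log m$, which is exactly the regime where your original construction works. Moving mass to a dummy point only makes $p$ smaller. The failure is not just an artefact of the analysis. On a singleton the threshold is $\frac1{64}\sqrt{m\log(4m)/n}$, whereas every excess risk in your model is at most $2\epsilon\le 1$. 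Take for instance $m=n$ with $\log(4n)>64^2$. Then the predictor $\hat f\equiv 0$ never triggers the event under any $\mathbf{P}_j$: its excess risk is $0$ on every complement group and on every singleton $\{k\}$ with $k\ne j$, and it is $2\epsilon\le 1$ on $\{j\}$. Note also that $n\ge\log(2m)$ already follows from $m\le n$, so it cannot be the hypothesis that handles the case $m\log m\gg n$.

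What is missing is a family of $2m$ groups whose masses do not shrink with $m$, which forces the groups to overlap. The paper takes $\cX=\{-1,1\}^m$ with uniform marginal $\mu$ and groups $g_{j,\sigma}=\{x: x_j=\sigma\}$, each of mass $1/2$. It sets $\mathbf{P}_j(Y=1\mid X=x)=\frac12+\epsilon x_j$, so every observation carries information about $j$. The sum of the excess risks on $g_{j,1}$ and $g_{j,-1}$ equals $2\epsilon\bigl(1-L_j(\hat f)\bigr)$, where $L_j(f)=\int(2f-1)x_j\,d\mu$. Orthogonality of the coordinates under $\mu$, together with Cauchy--Schwarz, gives $L_j+L_k\le\sqrt2$ for $j\ne k$. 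Hence a maximizer $\hat j$ of $k\mapsto L_k(\hat f)$ is a test, and $\hat j\ne j$ forces an excess risk of at least $\epsilon(1-\sqrt2/2)$ on one of the two groups. The per-sample KL divergence to the pure-noise reference is at most $4\epsilon^2$, with no $1/m$ dilution. Fano therefore permits $\epsilon=\frac18\sqrt{\log(2m)/n}$, which is at most $1/8$ precisely because $n\ge\log(2m)$, and this reaches the threshold $\frac1{64}\sqrt{2\log(4m)/n}$. Within your own framework, another possible repair is to place the signal on only $m'\asymp n/\log n$ singletons, using $\log m'\gtrsim\log(4m)$ (since $m\le n$), and to pad $\cG$ with further groups. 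That is not the rescaling you describe, though.
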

    \begin{proof}
        The exact nature of the objects in $\cX$ does not matter and, since $|\cX|=\infty$, we can take $\cX = \{-1, 1\}^m$ (the distributions below are supported on this subset, and the groups and classifiers are extended arbitrarily outside of it). So, every $x = (x_1, \ldots, x_m) \in \cX$ is a binary vector.
        We now construct $2m$ different groups: for $j = 1,\ldots, m$ and $\sigma \in \{-1, +1\}$ let
        \[
        g_{j, \sigma}\eqdef \{(x_1, \ldots, x_m) \in \{-1, 1\}^m\,:\, x_j = \sigma\}\,.
        \]
        The group structure is then defined as $\cG = \{g_{j, \sigma} \,:\, j = 1, \ldots, m, \sigma \in \{-1, +1\}\}$, and the benchmark class consists of the two constant classifiers, $\cF = \{f_0, f_1\}$ with $f_0 \equiv 0$ and $f_1 \equiv 1$. We now build $m$ different joint distributions $\mathbf{P}_{j}$ of the couple $(X, Y) \in \{-1, +1\}^m \times \{0, 1\}$. All joint distributions $\mathbf{P}_{j}$ share the same marginal $\mu$ over $\cX$, which is simply the uniform distribution on it. They only differ in their conditional distribution of $Y$ given $X$. More precisely, for any $x = (x_1, \ldots, x_m) \in \cX$
        \begin{align}
            \mathbf{P}_j(Y = 1 \mid X = x) = \frac{1}{2} + \eps x_j\,,
        \end{align}
        for some $\eps < 1/2$ to be specified. Note that $\mathbf{P}_j(X \in g) = 1/2$ for all $j = 1, \ldots, m$ and $g \in \cG$.

        We now compute the excess risk on the two groups associated with the coordinate $j$ biased by $\mathbf{P}_j$. We claim that for any $f : \cX \mapsto [0, 1]$,
        \begin{align}
            \label{eq:excess-two-groups}
            \cE_{\mathbf{P}_j,g_{j,1}}(f) + \cE_{\mathbf{P}_j,g_{j,-1}}(f)= 2\eps\left(1-\int (2f(x)-1)x_j \mu(dx)\right)\,.
        \end{align}
        Indeed, under $\mathbf{P}_j$, one has $\mathbf{P}_j(Y = 1 \mid X_j = \sigma) = \frac{1}{2} + \eps\sigma$, so that the best benchmark predictor on $g_{j,\sigma}$ is the constant one equal to $\ind{\sigma = 1}$, with risk $\frac12 - \eps$. In addition, a direct computation of the risk of any $f$ gives
        \[
        \risk_{\mathbf{P}_j,g_{j,\sigma}}(f) = \frac{1}{2}+\eps\sigma-2\eps\sigma\int f(x)\,\mu\bigl(dx\,\bigm|\,x_j = \sigma\bigr)\,,
        \]
        where $\mu\bigl(\,\cdot\,\bigm|\,x_j = \sigma\bigr)$ denotes the uniform distribution on $g_{j,\sigma}$. The identity
        \[
        \int (2f(x)-1)x_j\, \mu(dx) = \int f(x)\,\mu\bigl(dx\,\bigm|\,x_j = 1\bigr) - \int f(x)\,\mu\bigl(dx\,\bigm|\,x_j = -1\bigr)\,,
        \]
        yields \eqref{eq:excess-two-groups}.

        Introduce the following shorthand notation $L_j(f) = \int (2f(x)-1)x_j\mu(dx)$. We now perform the classical reduction to a multiple testing problem~\citep[see e.g.,][Section 2.2]{tsybakov2009introduction}. For any data-driven and potentially randomized classifier $\hat{f}$, define
        \begin{align*}
            \hat{j} \in \argmax_{j = 1, \ldots, m} L_j(\hat{f})\,.
        \end{align*}
        Now, whenever $\hat{j} \neq j$, since $L_{\hat j}(\hat f) \geq L_j(\hat f)$, we have by \eqref{eq:excess-two-groups}
        \begin{align*}
             \frac{\cE_{\mathbf{P}_j,g_{j,1}}(\hat f) + \cE_{\mathbf{P}_j,g_{j,-1}}(\hat f)}{2} = \eps\left(1-L_{j}(\hat f)\right)
             \geq \eps\left(1 - \frac{L_{j}(\hat f) + L_{\hat{j}}(\hat f)}{2}\right)\,.
        \end{align*}
        Furthermore, we have by the Cauchy-Schwarz inequality, using that $(2\hat f(x)-1)^2 \leq 1$ and that the coordinates of $X$ are independent and centered under $\mu$,
        \begin{align*}
            L_{j}(\hat f) + L_{\hat{j}}(\hat f) &= \int (2\hat f(x)-1)(x_j + x_{\hat{j}}) \mu(dx)\\
            &\leq \sqrt{\int (2\hat f(x)-1)^2 \mu(dx)}\sqrt{\int (x_j + x_{\hat{j}})^2 \mu(dx)}
            \leq \sqrt{2}\,,
        \end{align*}
        where the last inequality also uses $\hat j \neq j$. That is,
        \begin{align*}
            \left\{\hat{j} \neq j \right\} \subseteq \left\{\max_{g \in \cG}\cE_{\mathbf{P}_j,g}(\hat{f}) \geq \eps(1 - \sqrt{2}/2) \right\}\,.
        \end{align*}
        Thus, setting $\mathbf{P}^{\otimes n}_j$ the joint distribution of $(X_i, Y_i)_{i = 1}^n$ i.i.d. from $\mathbf{P}_j$, we have from the above, for any data-driven classifier $\hat{f}$,
        \begin{align*}
           \max_{j = 1, \ldots, m} \mathbf{P}^{\otimes n}_j\left(\max_{g \in \cG}\cE_{\mathbf{P}_j,g}(\hat{f}) \geq \eps(1 - \sqrt{2}/2)\right)
           \geq
          \max_{j = 1, \ldots, m}\mathbf{P}^{\otimes n}_j(\hat{j} \neq j)\,.
        \end{align*}

        We conclude using standard Fano's inequality. Let $\mathbf{P}_0$ be such that under it $X$ is independent of $Y$ and both are distributed uniformly over their respective domains. Let $\mathbf{P}^{\otimes n}_0$ be the joint distribution of $n$ i.i.d. copies from $\mathbf{P}_0$. One has
        \begin{align*}
            \operatorname{KL}(\mathbf{P}^{\otimes n}_j, \mathbf{P}^{\otimes n}_0) \leq 4n\eps^2\,.
        \end{align*}
        Thus, Fano's inequality gives
        \begin{align*}
          \max_{j = 1, \ldots, m} \mathbf{P}^{\otimes n}_j(\hat{j} \neq j) \geq \frac{1}{m}\sum_{j=1}^m \mathbf{P}^{\otimes n}_j(\hat j \neq j)
          \geq 1 - \frac{4n\epsilon^2 + \log(2)}{\log m}\,.
        \end{align*}
        Choosing $\epsilon=\frac18 \sqrt{\tfrac{\log(2m)}{n}} \leq 1/8$, and using that $\log(2m)\leq 2\log(m)$ for $m\geq 2$ and $\log(2)\leq \frac{\log(m)}{3}$ for $m \geq 10$, we conclude that
        \begin{align*}
           \max_{j = 1, \ldots, m} \mathbf{P}^{\otimes n}_j\left(\max_{g \in \cG}\cE_{\mathbf{P}_j,g}(\hat{f}) \geq \epsilon(1 - \sqrt{2}/2)\right)
           \geq 1 - \frac{1}{8} - \frac{1}{3} = \frac{13}{24} \geq \frac{1}{4}\,.
        \end{align*}
        Since $1-\sqrt{2}/2\geq 1/4$, $\mathbf{P}_j(X\in g)=1/2$ for all $g\in\cG$, and $\log(2|\cG|)=\log(4m)\leq 2\log(2m)$, we get
        \begin{align*}
            \epsilon\bigl(1-\sqrt{2}/2\bigr) \geq \frac{\epsilon}{4}
            = \frac{1}{32}\sqrt{\frac{\log(2m)}{n}}
            \geq \frac{1}{64}\sqrt{\frac{2\log(4m)}{n}}
            = \frac{1}{64}\sqrt{\frac{\log(2|\cG|)}{n\,\mathbf{P}_j(X\in g)}}\,,
        \end{align*}
        which is the claimed rate.
\end{proof}

\clearpage
\section{Intermediate-rate results}
\label{sec:interm-rates}

In this section, we show that intermediate
rates are possible on the excess risk, between the $1/\sqrt{n}$ slow rate of Section~\ref{sec:reduc}
and the fast $1/n$ rate of Section~\ref{sec:fast rate}.
To do so, we generalize the Bernstein condition
of Definition~\ref{Condition:group bernstein} as follows, with $\alpha_g=1$ corresponding to
the statement of the latter and to fast rates,
and no condition, i.e., $\alpha_g=0$, corresponding to slow rates.
We then show that in classification, a Mammen-Tsybakov-type margin condition ensures that the generalized condition is satisfied.

\subsection{Generalized Bernstein condition}

\begin{definition}\label{Con:General Bernstein}
    Distribution $\Prob^{(X,Y)}$ and group $g\in \cG$ satisfy the general Bernstein condition with coefficient $b_g>0$ and exponent $\alpha_g\in [0,1]$ if, for any weight vector $\bq\in \Delta(\cF\times \cG)$, we have, with the notation of Section~\ref{sec:reduc}:
    \begin{multline*}
    \Exp\!\left[\ind{X\in g} \Biggl(\sum_{(f,g')\in \cF\times \cG_X}q_{(f,g')|\cF\times \cG_X}\,\ell\bigl(f(X),Y\bigr)-
        \ell\bigl(f^\star_g(X),Y\bigr) \Biggr)^{\! \! 2}  \right]\\
        \leq b_g \,\bigl(\Prob(X\in g)\bigr)^{\! 1-\alpha_g}\,\,\Exp\!\left[\ind{X\in g} \Biggl(\sum_{(f,g')\in \cF\times \cG_X}q_{(f,g')| \cF \times \cG_X}\,\ell\bigl(f(X),Y\bigr)-\ell\bigl(f^\star_g(X),Y\bigr)\Biggr)\right]^{\! \alpha_g}.
    \end{multline*}
\end{definition}

\begin{theoremsketch}\label{tm:general fast rate}
In the setting and under the assumptions
of Corollary~\ref{cor:main fast rate},
except for the replacement
of the Bernstein condition by the one of Definition~\ref{Con:General Bernstein},
with high probability, the excess risk
    \[
    \cE_g(\hat{\bq}_n)
    \quad \mbox{is at most of order} \quad
    \biggl(\frac{b_g }{n\, \Prob(X\in g)}\biggr)^{\!1/(2-\alpha_g)}\,,
    \]
up to (multiplicative and additive) logarithmic terms.
\end{theoremsketch}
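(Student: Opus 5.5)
Write $N \eqdef n\,\Prob(X\in g)$, $E \eqdef N\,\cE_g(\hat{\bq}_n) = \sum_{t\le n}\E[W_{t,g}\mid D_{t-1}]$ and $L \eqdef \log(\log_2 n/\delta)$. We follow the two-step structure of Theorem~\ref{tm:fast rate fixed point} and Corollary~\ref{cor:main fast rate}. Only the variance control changes: the linear bound~\eqref{eq:filtered bernstein} is replaced by a concave one. Conditioning on $D_{t-1}$ freezes $\bp_t(D_{t-1})$, so Definition~\ref{Con:General Bernstein} gives
\[
\E\bigl[W_{t,g}^2\bigm|D_{t-1}\bigr]\le b_g\,\Prob(X\in g)^{1-\alpha_g}\,\bigl(\E[W_{t,g}\mid D_{t-1}]\bigr)^{\alpha_g}.
\]
Summing over $t$ and using Jensen's inequality for the concave map $u\mapsto u^{\alpha_g}$ (sum of $n$ terms equals $n$ times their mean) yields
\[
S_{n,g}\eqdef\sum_{t}\E\bigl[W_{t,g}^2\bigm|D_{t-1}\bigr]\le b_g\,n\,\Prob(X\in g)^{1-\alpha_g}\,(E/n)^{\alpha_g}=b_g\,N^{1-\alpha_g}E^{\alpha_g}.
\]
This is the key inequality. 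For $\alpha_g=1$ it recovers $\gamma_g E$.

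\emph{Step 1 (analogue of Theorem~\ref{tm:fast rate fixed point}).} Apply Lemma~\ref{lm:strong Freedman} to $\Delta_{t,g}$. Since $V_{n,g}\le S_{n,g}$, with probability at least $1-\delta$,
\[
E\le \regret_{n,(f^\star_g,g)}+4\sqrt{b_gN^{1-\alpha_g}E^{\alpha_g}L}+4L.
\]

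\emph{Step 2 (analogue of Corollary~\ref{cor:main fast rate}).} Apply Lemma~\ref{lm:strong Freedman} again, to $W_{t,g}^2-\E[W_{t,g}^2\mid D_{t-1}]$, whose conditional variance sum is at most $S_{n,g}$. This gives $\sum_t W_{t,g}^2\le 2S_{n,g}+8L$. Plugging into Lemma~\ref{lm:second-order regret} gives
\[
\regret_{n,(f^\star_g,g)}\lesssim C_n\sqrt{b_gN^{1-\alpha_g}E^{\alpha_g}}+C_n\sqrt{L}+C_n^2.
\]

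\emph{Combining.} With $\Lambda \eqdef C_n^2+L$ and a union bound, we obtain with probability at least $1-2\delta$
\[
E\lesssim \sqrt{\Lambda\,b_gN^{1-\alpha_g}}\,E^{\alpha_g/2}+\Lambda.
\]

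\emph{Solving the implicit inequation.} This is where the argument departs from the case $\alpha_g=1$: the trick $2\sqrt{uv}\le u+v$ no longer absorbs the term. Instead we use Young's inequality with exponents $2/\alpha_g$ and $2/(2-\alpha_g)$:
\[
\bigl(\Lambda b_gN^{1-\alpha_g}\bigr)^{1/2}E^{\alpha_g/2}\le \tfrac12 E+c_{\alpha_g}\bigl(\Lambda b_gN^{1-\alpha_g}\bigr)^{1/(2-\alpha_g)}.
\]
Equivalently, one distinguishes the cases according to which term on the right dominates. This gives
\[
E\lesssim (\Lambda b_g)^{1/(2-\alpha_g)}N^{(1-\alpha_g)/(2-\alpha_g)}+\Lambda.
\]
Dividing by $N$ yields
\[
\cE_g(\hat{\bq}_n)\lesssim\Bigl(\frac{\Lambda b_g}{N}\Bigr)^{1/(2-\alpha_g)}+\frac{\Lambda}{N},
\]
which is the claimed order up to logarithmic factors. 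The term $\Lambda/N$ is lower order whenever $b_g\gtrsim\Lambda/N$, and otherwise it is simply kept as an additive logarithmic term.

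The main obstacle is the Jensen step. It requires $E\ge 0$ and each conditional mean to be nonnegative for the power to make sense. This does not follow automatically, because $\E[W_{t,g}\mid D_{t-1}]$ need not be nonnegative: $\hat{\bq}$ is improper and may beat $f^\star_g$. However, Definition~\ref{Con:General Bernstein} implicitly forces the right-hand expectation to be nonnegative when $\alpha_g>0$, since its left-hand side is nonnegative. This holds for every $\bq$, hence for each $\bp_t$. If $E<0$ the bound is trivial anyway. The case $\alpha_g=0$ reduces directly to the slow rate of Corollary~\ref{cor:simple-reduc}.
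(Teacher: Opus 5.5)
Your proposal is correct and follows the paper's sketch essentially step by step. It uses the same per-round consequence of Definition~\ref{Con:General Bernstein}, and your Jensen step is the paper's H\"older step, both giving $\sum_t \E[W_{t,g}^2\mid D_{t-1}] \le b_g N^{1-\alpha_g}E^{\alpha_g}$; it then applies Lemma~\ref{lm:strong Freedman} twice and solves $x \le a\,x^{\alpha_g/2}+y$ in the same way, with your Young inequality being exactly what justifies the paper's constant $\xi_{\alpha}$, and your explicit treatment of the nonnegativity of the conditional means, which the paper leaves implicit, is a welcome addition.
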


\begin{proof}\textbf{sketch}~~We use the notation of the proofs of Theorem~\ref{tm:fast rate fixed point} and Corollary~\ref{cor:main fast rate} explain only where and how their arguments should be modified.
The key inequality, similar to \eqref{eq:filtered bernstein}, is that, now,
\[
    \E\bigl[W_{t,g}^2\,|\,D_{t-1}\bigr]
    \leq b_g \, \Prob(X\in g)^{1-\alpha_g}  \, \bigl(\E[W_{t,g}\,|\,D_{t-1}]\bigr)^{\alpha_g}\,.
\]
First, by Hölder's inequality with $p = 1/\alpha_g$ and $q = 1/(1-\alpha_g)$ when $\alpha_g \not\in \{0,1\}$, and nothing to be proved
when $\alpha_g \in \{0,1\}$,
\begin{align*}
\sum_{t=1}^n \E\bigl[W_{t,g}^2\,|\,D_{t-1}\bigr] & \leq b_g \, \Prob(X\in g)^{1-\alpha_g}\sum_{t=1}^n \bigl( \E[W_{t,g}\,|\,D_{t-1}] \bigr)^{\alpha_g} \\ & \leq b_g \, \bigl(n\, \Prob(X\in g)\bigr)^{1-\alpha_g}\Biggl( \underbrace{\sum_{t=1}^n\E[W_{t,g}\,|\,D_{t-1}]}_{ = n \, \Prob(X \in g) \, \cE_g(\widehat \bq_n)}\Biggr)^{\!\alpha_g}
= b_g\, n \,
\Prob(X\in g) \, \bigl( \cE_g(\widehat \bq_n) \bigr)^{\alpha_g} \,.
\end{align*}
Second, the same arguments as for~\eqref{eq:squared term in main fast rate} thus
show that with probability at least~$1-\delta$,
\[
\sum_{t=1}^n W_{t,g}^2 \leq 2b_g \, n\,\Prob(X\in g)\,\,\bigl(\cE_g(\hat{\bq}_n) \bigr)^{\alpha_g} + 8\log\biggl(\frac{\log_2 n}{\delta}\biggr)\,.
\]
We then substitute the rewriting of regret
offered in~\eqref{eq:regret in main result of fast rate 0} and the application of the Bernstein-Freedman inequality in~\eqref{eq:EWFreed} together with the discussion that follows it; for the convenience of the reader, we restate both here, the second holding with
probability at least $1-\delta$:
\begin{align*}
\regret_{n,(f^*_g,g)} & = \sum_{t=1}^n W_{t,g} \leq C_n\sqrt{\sum_{t=1}^n W_{t,g}^2}+C_n^2\,, \\
n\,\Prob(X\in g)\,\,\cE_g(\hat{\bq}_n) & =
\sum_{t=1}^n \E[W_{t,g} \mid D_{t-1}] \\
& \leq \sum_{t=1}^n W_{t,g} +
    4 \sqrt{\sum_{t=1}^n \E\bigl[W_{t,g}^2\mid D_{t-1}\bigr]\log\left(\frac{\log_2 n}{\delta}\right)}+4\log\left(\frac{\log_2 n}{\delta}\right).
\end{align*}
Combining all previous inequalities,
we get an inequality of the form
\[
n\,\Prob(X\in g)\,\,\cE_g(\hat{\bq}_n)
\leq L \sqrt{b_g \, n\,\Prob(X\in g)\,\,\bigl(\cE_g(\hat{\bq}_n) \bigr)^{\alpha_g}}
+ L'\,,
\]
where $L$ and $L'$ are terms that
depend on $C$ and on the logarithms
of other quantities, including the $\ln \ln n$ such that $C_n \leq C \ln \ln n$.

The proof is concluded by dividing both members by $n\,\Prob(X\in g)$ and by solving for $\cE_g(\hat{\bq}_n)$, using that
$x \leq a\,x^{\alpha/2}+y$, for $x,y,a \geq 0$ and $\alpha \in [0,1]$ entails $x \leq \xi_\alpha\bigl( a^{2/(2-\alpha)}+y \bigr)$,
for some constant $\xi_\alpha$ only depending on $\alpha$. We get
\[
\cE_g(\hat{\bq}_n)
\leq \xi_\alpha \left( \Biggl( \frac{L\sqrt{b_g}}{\sqrt{n\,\Prob(X\in g)}} \Biggr)^{2/(2-\alpha_g)}
+ \frac{L'}{n\,\Prob(X\in g)} \right),
\]
which is exactly the result claimed.
\end{proof}

\subsection{Mammen-Tsybakov-type margin condition
in classification}

We consider the setting of binary classification
as in Section~\ref{sec:Massart}.
The following condition relaxes the Bayes-Massart
condition of Definition~\ref{Con:Bayes--Massart}
(which corresponds to $\alpha_g=1$).

\begin{definition}\label{Con:Tsybakov}
     We say that group $g\in\cG$ satisfies the Bayes-Mammen-Tsybakov margin condition
    with exponent $\alpha_g \in [0,1)$, constant $C_g>0$, and threshold $u_g \in (0,1]$ if, first,
    \[
    \forall u\in[0,u_g],\qquad \Prob\bigl( |2\eta(X)-1|\leq u \bigm| X\in g \bigr)
    \leq C_g \, u^{\alpha_g/(1-\alpha_g)}\,,
    \]
    and second, there exists a classifier in $\cF$ that is a group-wise Bayes classifier for $g$.
\end{definition}

\begin{theorem}
\label{tm:Tsybakov-implies-generalized-Bernstein}
    If group $g\in\cG$ satisfies the Bayes-Mammen-Tsybakov margin condition
    of Definition~\ref{Con:Tsybakov}, then it satisfies the
    general Bernstein condition of Definition~\ref{Con:General Bernstein} with the same exponent $\alpha_g$.
\end{theorem}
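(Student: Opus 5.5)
We follow the proof of Theorem~\ref{tm:Bayes--Massart implies Bernstein}. The only change is that the almost-sure lower bound $|2\eta(X)-1|\geq 2\beta_g$ is replaced by a truncation argument on the level sets of $|2\eta-1|$. Fix $\bq\in\Delta(\cF\times\cG)$. As in that proof, Jensen's inequality and~\eqref{eq:rem-class-1} bound the squared term pointwise by $\rho_{\bq,g}$. The tower rule and~\eqref{eq:rem-class-2} identify the linear term with $\E\bigl[|2\eta(X)-1|\,\rho_{\bq,g}\bigr]$. Here $\rho_{\bq,g}\in[0,1]$ is a $\sigma(X)$-measurable quantity vanishing outside $\{X\in g\}$; this uses that the activation weights depend only on $X$. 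It therefore suffices to show the purely deterministic inequality
\[
\E[\rho] \leq b_g\,\P(X\in g)^{1-\alpha_g}\,\Bigl(\E\bigl[|2\eta(X)-1|\,\rho\bigr]\Bigr)^{\alpha_g}
\]
for every measurable $\rho:\cX\to[0,1]$ supported on $g$.

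The case $\alpha_g=0$ is trivial with $b_g=1$, so assume $\alpha_g\in(0,1)$ and set $\kappa=\alpha_g/(1-\alpha_g)$. For any $u\in(0,u_g]$ we split according to the level of the margin:
\[
\E[\rho]\leq \P\bigl(X\in g,\ |2\eta(X)-1|\leq u\bigr) + \E\bigl[\rho\,\ind{|2\eta(X)-1|>u}\bigr]
\leq C_g\,\P(X\in g)\,u^{\kappa} + \frac{1}{u}\,\E\bigl[|2\eta(X)-1|\,\rho\bigr].
\]
Write $m=\P(X\in g)$ and $E=\E[|2\eta-1|\rho]$, and note $E\leq m$. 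Balancing $C_g m u^\kappa$ against $E/u$ suggests the choice $u^\ast=(E/m)^{1-\alpha_g}$, since $\kappa+1=1/(1-\alpha_g)$. With this choice both terms are of order $m^{1-\alpha_g}E^{\alpha_g}$, giving $\E[\rho]\leq (C_g+1)\,m^{1-\alpha_g}E^{\alpha_g}$.

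The main obstacle is the threshold constraint $u^\ast\leq u_g$, which may fail when $E/m$ is large. In that case, $E/m> u_g^{1/(1-\alpha_g)}$, and the trivial bound gives
\[
\E[\rho]\leq m = m^{1-\alpha_g}m^{\alpha_g}\leq u_g^{-\kappa}\,m^{1-\alpha_g}E^{\alpha_g}.
\]
Hence the general Bernstein condition of Definition~\ref{Con:General Bernstein} holds with exponent $\alpha_g$ and
\[
b_g=\max\bigl\{C_g+1,\;u_g^{-\alpha_g/(1-\alpha_g)}\bigr\}.
\]
This should be substituted into the displayed inequality with $\rho=\rho_{\bq,g}$ and combined with the Jensen step to conclude. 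One minor point should also be checked: the case $E=0$, where $u^\ast=0$. It is handled by letting $u\to0$ in the split, which shows $\E[\rho]=0$ since $C_g m u^\kappa\to0$ and $E/u=0$.
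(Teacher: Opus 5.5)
Your proof is correct and follows the paper's route. You reduce to the $\sigma(X)$-measurable weight $\rho_{\bq,g}$ via the proof of Theorem~\ref{tm:Bayes--Massart implies Bernstein}, then split according to whether $|2\eta(X)-1|$ exceeds a threshold that balances the two resulting terms. The only difference is the threshold: the paper uses $\tau = u_g\,(E/m)^{1-\alpha_g}$, which lies in $[0,u_g]$ automatically because $E\le m$. This avoids your case split and gives $b_g = C_g + 1/u_g$ (using $u_g\le 1$) instead of your $\max\{C_g+1,\,u_g^{-\alpha_g/(1-\alpha_g)}\}$. Your explicit handling of $E=0$ also covers a case the paper glosses over, where it would divide by $\tau=0$.
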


\begin{proof}
We follow the proof of Theorem~\ref{tm:Bayes--Massart implies Bernstein}
up to its last two lines, where $|2\eta-1|$ is lower bounded. In particular, we extract that, under no specific condition,
for all distributions $\bq$, there exists a random variable $\rho_{\bq,g}$, bounded by $\ind{X \in g}$,  such that
\begin{align*}
    \E\!\left[ \ind{X\in g} \Biggl( \sum_{(f,g')\in\cF\times\cG_X} q_{(f,g')|\cF\times\cG_X} \ell\bigl(f(X),Y\bigr)-
    \ell\bigl(f^\star_g(X),Y\bigr) \Biggr)^{\! 2} \right]
    & \leq \E[\rho_{\bq,g}] \qquad\quad \mbox{and} \\
    \E\!\left[ \ind{X\in g} \Biggl( \sum_{(f,g')\in\cF\times\cG_X} q_{(f,g')|\cF\times\cG_X} \ell\bigl(f(X),Y\bigr)-
    \ell\bigl(f^\star_g(X),Y\bigr) \Biggr) \right]
    & = \E\Bigl[\bigl|2\eta(X)-1\bigr|\rho_{\bq,g}\Bigr]\,,
\end{align*}
so it suffices to show that
there exists a constant $b_g > 0$ independent of $\bq$ such that
\begin{align*}
b_g \,
\bigl(\Prob(X\in g)\bigr)^{\! 1-\alpha_g} \,
\biggl( \E\Bigl[\bigl|2\eta(X)-1\bigr|\rho_{\bq,g}\Bigr] \biggr)^{\alpha_g}
& \geq \E[\rho_{\bq,g}]\,, \\
\mbox{or equivalently,}
\qquad \qquad
b_g \, \biggl( \E\Bigl[\bigl|2\eta(X)-1\bigr|\rho_{\bq,g} \Bigm| X \in g \Bigr] \biggr)^{\alpha_g}
& \geq \E[\rho_{\bq,g} \mid X \in g]\,,
\end{align*}
which we do next, for $b_g = C_g + 1/u_g$.
The end of the proof follows a standard methodology.
For a threshold $\tau \leq u_g$
defined below, we have,
by total probabilities and by substituting the bound of Definition~\ref{Con:Tsybakov}, keeping in mind
that $\rho_{\bq,g} \leq \ind{X \in g}$,
\begin{align*}
\lefteqn{\E[\rho_{\bq,g} \mid X\in g]} \\
& = \E\Bigl[ \rho_{\bq,g} \,\, \indb{|2\eta(X)-1| > \tau} \Bigm| X\in g \Bigr]
  + \E\Bigl[ \rho_{\bq,g} \,\, \indb{|2\eta(X)-1| \leq \tau} \Bigm| X\in g\Bigr] \\
    & \leq \frac{1}{\tau} \,\, \E\Bigl[ \rho_{\bq,g} \,\, \bigl|2\eta(X)-1\bigr| \Bigm| X\in g \Bigr] +
    C_g \, \tau^{\alpha_g/(1-\alpha_g)} \\
& \leq \left( \frac{1}{u_g} + C_g \right) \biggl( \E\Bigl[ \rho_{\bq,g} \,\, \bigl|2\eta(X)-1\bigr| \Bigm| X\in g \Bigr] \biggr)^{\alpha_g}
\end{align*}
where we substituted
\quad $\smash{\displaystyle{\tau = u_g \, \biggl( \E\Bigl[ \rho_{\bq,g} \,\, \bigl|2\eta(X)-1\bigr| \Bigm|
X\in g \Bigr] \biggr)^{1-\alpha_g} \in [0,u_g]}}$.
\end{proof}

\clearpage
\section{Multi-group binary classification for infinite class}
\label{sec:infinite}

In this appendix, we consider binary classification with $\cY=\{0,1\}$ and \[\ell(f(X),Y)=\ind{f(X)\neq Y}\,,\] while allowing the group class $\cG$ and the benchmark class $\cF$ both to be infinite but with finite VC dimension.

\begin{assumption}
Throughout this appendix, we assume that every infimum and supremum over $\cG$ and $\cF$ is attained.
If this is not the case, then we rather consider approximate optimizers, up to some slack $\zeta$, which should then
be added to excess-risk bounds.
\end{assumption}

Some background theory is recalled in Appendix~\ref{app:infinite-classes} and may be useful to read before proceeding.

\subsection{Extension of the reduction of Section~\ref{sec:reduc} for infinite classes}

Let us first explain how to solve the infinite class problem by the reduction technique we explain in Section~\ref{sec:reduc}.

\paragraph{Solve the infinite class problem by splitting the samples.}\label{pg:infinite class strategy}
Given $2n$ i.i.d. observations, the strategy is to split them into two groups $D^{(1)},D^{(2)}$, each of size $n$, whose roles are as follows.

\begin{enumerate}[leftmargin=0.5cm]
    \item \emph{Block $D^{(1)}$: find group representatives and corresponding ERMs.} We use the first block of data to find a finite subset $\widehat{\cG} \subset \cG$ of groups with following properties: for any $g \in \cG$ there exists $g' \in \widehat{\cG}$ so that
    \begin{align}\label{eq:def_g_hat}
        \ind{X_i \in g} = \ind{X_i \in g'} \qquad\forall i = 1, \ldots, n\,.
    \end{align}
    Clearly, since data from block $D^{(1)}$ is finite, such a finite $\widehat{\cG} \subset \cG$ meeting the above properties exists. Among all such $\widehat{\cG}$, select one with smallest cardinality. We relate each group $g\in\cG$ to a fixed $\hat{g} = \hat{g}(g) \in \widehat{\cG}$ satisfying the above criterion \eqref{eq:def_g_hat}.

    We will also use the data from the first block $D^{(1)}$ to learn a group-wise ERM for every group in $\widehat{\cG}$.

    For every group $g \in \widehat{\cG}$, define for all $f \in \cF$,
    \[
     N_g^{(1)}=\sum_{i=1}^n\ind{X_i^{(1)}\in g}\quad\text{and}\quad
     \widehat\risk_g^{(1)}(f) = \frac{{\sum_{i=1}^n\ind{X_i^{(1)}\in g}
              \ell(f(X_i^{(1)}),Y_i^{(1)})}}{N_g^{(1)}}\,.
    \]
    Note that $N_g^{(1)}> 0$ due to minimality of $\widehat{\cG}$. For $g \in \widehat{\cG}$ define
    \[
     \widehat f_{g}\in\mathop{\rm argmin}_{f\in\cF}
           \widehat\risk_{g}^{(1)}(f)\,,
    \]
     Construct the finite benchmark class as $\widehat{\cF}\eqdef\{\widehat{f}_g \,:\, g\in \widehat{\cG}\}$.
     \item \emph{Block $D^{(2)}$: apply the finite sleeping-experts reduction.} Run the reduction introduced in Section~\ref{sec:reduc} for finite benchmark class
    $\widehat{\cF}$ and finite group class $\widehat{\cG}$ with samples in $D^{(2)}$. We denote the output randomized predictor as $\hat{\bq}_n$.
\end{enumerate}

The main theorem is an analogue of Corollary~\ref{cor:simple-reduc}, where the finiteness assumption is lifted.

\begin{theorem}\label{tm:Infinite class MGC}
Assume $n\geq2$ and that the benchmark class $\cF$ (resp. the collection of groups $\cG$) has finite VC dimension $d_{\cF} < \infty$ (resp. $d_{\cG} < \infty$).
Let $\widehat \bq_n$ be the randomized predictor produced by the construction
described in Section~\ref{pg:infinite class strategy}. If the sleeping experts algorithm satisfies the regret bound of Lemma~\ref{lm:regr-bd} with some constant $C$, then, for all $\delta\in(0,1)$, with probability at least $1-\delta$,
    \[
    \forall g\in \cG,\quad
    \cE_g(\hat{\bq}_n) \leq KC\sqrt{\frac{(d_{\cF}+d_{\cG})\log\bigl(n/\delta\bigr)}{n\, \Prob(X\in g)}},
    \]
    where $K$ is a universal constant.
\end{theorem}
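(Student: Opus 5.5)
The plan is to condition on the first block $D^{(1)}$ and apply Corollary~\ref{cor:simple-reduc} on $D^{(2)}$ with the finite classes $\widehat{\cF}$ and $\widehat{\cG}$. Two additional errors must then be controlled. The first is the \emph{approximation error} incurred by replacing each $g\in\cG$ with its representative $\hat g(g)\in\widehat{\cG}$. The second is the gap between $\min_{f\in\widehat\cF}\risk_{\hat g}(f)$ and $\min_{f\in\cF}\risk_{g}(f)$.

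\emph{Cardinalities.} By Sauer's lemma, the number of distinct traces of $\cG$ on the $n$ points of $D^{(1)}$ is at most $(en/d_\cG)^{d_\cG}$. Minimality of $\widehat{\cG}$ then gives $\log|\widehat\cG|\lesssim d_\cG\log n$. Since $|\widehat\cF|\le|\widehat\cG|$, also $\log(|\widehat\cF||\widehat\cG|)\lesssim d_\cG\log n$.

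\emph{Finite reduction.} Conditionally on $D^{(1)}$, Corollary~\ref{cor:simple-reduc} applied to $D^{(2)}$ yields, with probability $1-\delta/3$,
\[
\risk_{g'}(\hat\bq_n)-\min_{f\in\widehat\cF}\risk_{g'}(f)\lesssim C\sqrt{\frac{d_\cG\log(n/\delta)}{n\,\Prob(X\in g')}}\quad\text{for all } g'\in\widehat\cG .
\]
(The $\log\log n$ term and the $1/(n\Prob)$ term are absorbed; the latter only matters when $n\Prob(X\in g)\gtrsim \log$, and otherwise the bound is trivial since the excess risk is at most $1$.)

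\emph{Transfer from $g$ to $\hat g$.} The central tool is a uniform relative VC deviation bound on $D^{(1)}$ (Appendix~\ref{app:infinite-classes}). It should be applied to the class of sets $\{g\triangle g' : g,g'\in\cG\}$ and to the loss classes $\{(x,y):x\in g,\ f(x)\ne y\}$, whose VC dimension is $O(d_\cF+d_\cG)$. A normalized (Bernstein-type) version of the bound is needed: with probability $1-\delta/3$, every such set $A$ satisfies
\[
\bigl|\Prob(A)-\widehat\Prob^{(1)}(A)\bigr|\lesssim\sqrt{\Prob(A)\,\varepsilon_n}+\varepsilon_n,
\qquad \varepsilon_n=\frac{(d_\cF+d_\cG)\log(n/\delta)}{n}.
\]
Because $g\triangle\hat g$ contains no point of $D^{(1)}$, this gives $\Prob(X\in g\triangle\hat g)\lesssim\varepsilon_n$.

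Combined with $\Prob(X\in g)\gtrsim\varepsilon_n$ (otherwise the claimed bound exceeds $1$ and holds trivially), this implies $\Prob(X\in\hat g)\asymp\Prob(X\in g)$. Then for any predictor $h$,
\[
|\risk_g(h)-\risk_{\hat g}(h)|\lesssim \frac{\Prob(X\in g\triangle \hat g)}{\Prob(X\in g)}\lesssim\frac{\varepsilon_n}{\Prob(X\in g)},
\]
which is dominated by $\sqrt{\varepsilon_n/\Prob(X\in g)}$.

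\emph{ERM step.} By the same relative deviation bound applied to the conditional loss classes on $\hat g$, the empirical risk minimizer $\hat f_{\hat g}$ satisfies
\[
\risk_{\hat g}(\hat f_{\hat g})-\min_{f\in\cF}\risk_{\hat g}(f)\lesssim\sqrt{\frac{\varepsilon_n}{\Prob(X\in \hat g)}}.
\]
Here $N^{(1)}_{\hat g}\gtrsim n\Prob(X\in\hat g)$ is obtained from the same event, which lets the normalization by $N^{(1)}_{\hat g}$ be handled.

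\emph{Assembly.} Chain the four comparisons
\[
\risk_g(\hat\bq_n)\to\risk_{\hat g}(\hat\bq_n)\to\risk_{\hat g}(\hat f_{\hat g})\to\min_{\cF}\risk_{\hat g}\to\min_{\cF}\risk_{g},
\]
where the last step uses the transfer bound again, now applied to $f^\star_g$. A union bound over the three events gives probability $1-\delta$ and the rate $KC\sqrt{(d_\cF+d_\cG)\log(n/\delta)/(n\Prob(X\in g))}$, using $C\ge1$.

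\emph{Main obstacle.} The hard step is obtaining the \emph{relative} (multiplicative) uniform deviation bound that scales with $\sqrt{\Prob(X\in g)}$ rather than with $1$. This is what makes the rate adapt to the group mass. It must hold uniformly over group-restricted loss classes simultaneously with the symmetric-difference control. I would take it from the classical relative VC bounds of Vapnik–Chervonenkis or Anthony–Shawe-Taylor type, presumably those recalled in Appendix~\ref{app:infinite-classes}.
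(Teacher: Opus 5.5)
Your proposal is correct and follows essentially the same route as the paper: the same decomposition $\risk_g(\hat\bq_n)\to\risk_{\hat g}(\hat\bq_n)\to\risk_{\hat g}(\hat f_{\hat g})\to\min_{\cF}\risk_{\hat g}\to\min_{\cF}\risk_g$, Sauer--Shelah to bound $\log\card{\widehat\cG}$, Corollary~\ref{cor:simple-reduc} applied conditionally on $D^{(1)}$, and a zero-empirical-mass relative VC bound on $\cG\triangle\cG$ giving $\Prob(X\in g\triangle\hat g)\lesssim\varepsilon_n$. It also shares the paper's group-conditional uniform deviation bound for the ERM step and its case split on whether $\Prob(X\in g)\gtrsim\varepsilon_n$; the differences are minor. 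You use a single $\log n$-type two-sided relative deviation bound, where the paper uses a Panchenko-type localized bound with $\log\bigl(e/\Prob(X\in g)\bigr)$ and then converts it, and your transfer bound stated for arbitrary $[0,1]$-valued (randomized) predictors $h$ handles $\risk_g(\hat\bq_n)-\risk_{\hat g}(\hat\bq_n)$ directly, even though the mixture weights of $\hat\bq_n$ depend on $x$.
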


\begin{proof}\textbf{of Theorem~\ref{tm:Infinite class MGC}.}
Throughout the proof, $K$ denotes a universal constant whose value may change from line to line and which may depend on the constant $C$ of Lemma~\ref{lm:regr-bd}.
The proof proceeds via a suitable decomposition that allows to make use of the introduced data-split.

\paragraph{Step 1: Decomposition of excess risk.}
Fix a group $g \in \cG$. By construction of $\widehat{\cG}$ and the definition of $\hat{g}$ in \eqref{eq:def_g_hat}, we have
\begin{align*}
    \ind{X_i^{(1)} \in g} = \ind{X_i^{(1)} \in \hat{g}} \qquad\text{for all }
 i = 1, \ldots, n\,.
 \end{align*}

 Consider the following decomposition: for $f^\star_{g} \in \argmin_{f \in \cF}\risk_g(f)$,
\begin{equation}
    \label{eq:infinite risk decom}
\begin{aligned}
    &\cE_g(\hat{\bq}_n) = \risk_g(\hat{\bq}_n) \pm \risk_{\hat{g}}(\hat{\bq}_n) \pm \risk_{\hat{g}}(\widehat{f}_{{\hat{g}}}) \pm \risk_{\hat g}(f^\star_{\hat g}) - \risk_g(f^\star_g)\\
    &\leq
    2\sup_{f \in \cF}\left|\risk_{{\hat{g}}}(f) - \risk_{{g}}(f)\right|
    {+}
     \bigl(\risk_{\hat{g}}(\hat{\bq}_n)-\risk_{\hat{g}}(\widehat{f}_{{\hat{g}}})\bigr)
     {+}
     \bigl(\risk_{\hat{g}}(\widehat{f}_{{\hat{g}}})-\risk_{{\hat{g}}}(f^\star_{\hat g})\bigr)\,,
\end{aligned}
\end{equation}
where the above inequality follows from the fact that $\risk_g(\cdot)$ is extended linearly when applied to a randomized estimator and from \[\inf_{f \in \cF} \risk_{g}(f) - \inf_{f \in \cF} \risk_{\hat g}(f) \leq \sup_{f \in \cF}\left|\risk_{{\hat{g}}}(f) - \risk_{{g}}(f)\right| \, . \]

The three terms are controlled separately. Intuitively, the first one is small since $g$ and $\hat{g}$ are indistinguishable on $D^{(1)}$; the second one is controlled conditionally on $D^{(1)}$ by Corollary~\ref{cor:simple-reduc} in the finite-class case; the third one corresponds to a classical excess risk-bound.

\paragraph{Step 2: Control of each risk-term.}
For the first term in~\eqref{eq:infinite risk decom}, note that for any events $A, A', B$:
\begin{align*}
    \left|\frac{\Prob(A \cap B)}{\Prob(A)} - \frac{\Prob(A' \cap B)}{\Prob(A')}\right|
    &\leq
    \left|\frac{\Prob(A \cap B)}{\Prob(A)} - \frac{\Prob(A' \cap B)}{\Prob(A)}\right| + \left|\frac{\Prob(A' \cap B)}{\Prob(A')} - \frac{\Prob(A' \cap B)}{\Prob(A)}\right| \\
    & \leq \frac{\Prob((A \cap B) \triangle (A' \cap B))}{\Prob(A)} + \Prob(A' \cap B) \frac{\left|\Prob(A) - \Prob(A')\right|}{\Prob(A)\Prob(A')}\\
    &\leq
    \frac{2 \Prob(A \triangle A')}{\Prob(A)}\,.
\end{align*}
Thus, we have for any $f \in \cF$,
\begin{align*}
    \left|\risk_g(f) - \risk_{\hat g}(f)\right|
    &=
    \left|\frac{\Prob(X \in g,\, f(X) \neq Y)}{\Prob(X \in g)} - \frac{\Prob(X \in \hat g, f(X) \neq Y)}{\Prob(X \in \hat g)}\right|
    \leq
    2\frac{\Prob(X\in g\triangle \hat{g})}{\Prob(X\in g)}\,.
\end{align*}
Appealing to Lemma~\ref{lm:zero empirical} applied to class $\cG\triangle \cG$ together with a control on the VC dimension of  $\cG\triangle \cG$ by that of $\cG$ (Lemma~\ref{lm:VC of symmetric difference}), we obtain that event $E_1$ on which
\begin{align}\label{eq:first term control}
\forall g\in {\cG},\quad \frac{\Prob(X\in g\triangle \hat{g})}{\Prob(X\in g)}\leq \frac{4}{n\Prob(X\in g)}\left(10ed_{\cG}\log\left(1+\frac{2en}{10d_{\cG}}\right)+\log(4/\delta)\right)
\end{align} holds with probability at least $1-\delta$.

For the second term in~\eqref{eq:infinite risk decom}, we define the event $E_2$ on which the following holds: $\forall g\in \cG$
\begin{align*}
    \risk_{\hat{g}}(\hat{\bq}_n) \leq \risk_{\hat{g}}(\widehat{f}_{{\hat{g}}})+ 3C \, \frac{\sqrt{\log\bigl(2|\widehat{\cF}|\card{\widehat{\cG}}/\delta\bigr) + \log\log n}}{\sqrt{n\, \Prob(X \in \hat g)}} + 3C \frac{\log\bigl(2|\widehat{\cF}|\card{\widehat{\cG}}/\delta\bigr) + \sqrt{\log\log n}}{n\, \Prob(X \in \hat g)} \,.
\end{align*}
The result on finite class of Corollary~\ref{cor:simple-reduc} guarantees that $\Prob(E_2\,|\, D^{(1)})\geq 1-\delta$ almost surely. Moreover, $|\widehat{\cF}| \leq |\widehat{\cG}|$ and Sauer-Shelah lemma implies that $\log|\widehat{\cG}| \leq e d_{\cG} \log(1 + n / d_{\cG})$, which finally gives that under $E_2$, $\forall g\in {\cG}$
\begin{equation}
    \label{eq:second term control}
\begin{aligned}
\risk_{\hat{g}}(\hat{\bq}_n)\leq\risk_{\hat{g}}(\widehat{f}_{{\hat{g}}}) &+
KC \sqrt{\frac{d_{\cG} \log\bigl(2(1 {+} n / d_{\cG})/\delta\bigr)}{n\, \Prob(X \in \hat g)}}
+
KC \frac{d_{\cG} \log\bigl(2(1 {+} n / d_{\cG})/\delta\bigr)}{n\, \Prob(X \in \hat g)}\,.
\end{aligned}
\end{equation}

For the third term in~\eqref{eq:infinite risk decom}, we consider the event $E_3$ on which the following holds:
\begin{align}\label{eq:third term control}
\forall g\in {\cG}, \quad \risk_{\hat{g}}(\widehat{f}_{{\hat{g}}}) \leq \risk_{\hat{g}}(f^*_{\hat{g}}) +
2K\sqrt{\frac{(d_{\cF}+d_{\cG})\log\!\bigl(e/\Prob(X\in {\hat{g}})\bigr)
		+\log(4/\delta)}{n\,\Prob(X\in {\hat{g}})}}
\end{align}
The standard empirical risk minimization decomposition
\begin{align*}
\risk_{\hat{g}}(\widehat{f}_{{\hat{g}}})-\risk_{\hat{g}}(f^*_{\hat{g}}) & = \risk_{{\hat{g}}}(\widehat{f}_{{\hat{g}}})-\widehat{\risk}_{\hat{g}}(\widehat{f}_{{\hat{g}}}) + \widehat{\risk}_{\hat{g}}(\widehat{f}_{\hat{g}})-\risk_{\hat{g}}(f^*_{\hat{g}}) \\ & \leq  \risk_{\hat{g}}(\widehat{f}_{\hat{g}})-\widehat{\risk}_{\hat{g}}(\widehat{f}_{\hat{g}}) + \widehat{\risk}_{\hat{g}}(f^*_{\hat{g}})-\risk_{\hat{g}}(f^*_{\hat{g}}),
\end{align*} together with Lemma~\ref{lm:group-conditional-concentration}, implies that $E_3$ holds with probability at least $1-\delta$.

\paragraph{Concluding the proof.} By a union bound, $E_1\cap E_2\cap E_3$ has probability at least $1-3\delta$. Two issues remain: \emph{i)} both~\eqref{eq:second term control} and~\eqref{eq:third term control} involve $\Prob(X \in \hat{g})$ instead of $\Prob(X \in g)$ \emph{ii)}~\eqref{eq:third term control} involves division by $\Prob(X \in \hat{g})$ under the logarithm, which we would like to transform into $\log(n)$-type of behaviour. Both issues are handled by case switching argument.

Choose a small universal $c > 0$. If for some $g\in\cG$
\[
 \frac{(d_{\cF} + d_{\cG})\log(n/\delta)}{n\Prob(X \in g)} > c\,,
\]
the claimed bound is trivial for this $g$ since $\cE_g(\hat{\bq}_n)\leq 1$ (enlarge $K$ if necessary). Otherwise, for sufficiently small $c > 0$,~\eqref{eq:first term control} ensures \[\frac{\Prob(X \in g)}{2}\leq\Prob(X\in\hat g)\leq \frac{3\Prob(X \in g)}{2}\,.\] Thus, every occurrence of $\Prob(X\in\hat g)$ can be replaced by $\Prob(X\in g)$.
Also, $ \tfrac{(d_{\cF} + d_{\cG})\log(n/\delta)}{n\Prob(X \in g)} \leq c$ implies $\log(e/\Prob(X\in\hat g))\leq K\log(n/\delta)$. This concludes the argument and the proof.
\end{proof}

\subsection{Reminder: concentration tools for infinite classes}
\label{app:infinite-classes}

In this final subsection, we recall and derive some basic concentration inequalities which are instrumental in the proof of Theorem~\ref{tm:Infinite class MGC}. Recall that we consider binary classification with feature space $\cX$, label space $\cY = \{0, 1\}$ and loss $ \ell(\widehat y,y) =  \ind{\widehat y\neq y}$; the classifier class $\cF$ and the group class $\cG$ both have finite VC dimensions. We denote $d_{\cF} := \operatorname{VC}(\cF)$ and $d_{\cG} := \operatorname{VC}(\cG)$.


We define the symmetric-difference class
\begin{align*}
    \cG \triangle \cG \eqdef \{g \triangle g' \, : \, g, g' \in \cG\}\,,
\end{align*}
where $g \triangle g'$ stands for the symmetric difference of two sets. We define the group-loss class as
\begin{align*}
    (\cG \times \cF)_\ell \eqdef \{(x, y) \mapsto \ind{x \in g,\, f(x) \neq y} \,:\, g \in \cG, f \in \cF\}\,.
\end{align*}
The next result gives a control on the VC-dimension of $\cG \triangle \cG$ and $(\cG \times \cF)_\ell$ based on  $d_{\cG}$ and $d_{\cF}$.
\begin{lemma}\label{lm:VC of symmetric difference}
    For any $\cG,\cF$ with finite VC dimension $d_{\cG}, d_{\cF}$ respectively, we have:
    \[
    \operatorname{VC}(\cG \triangle \cG)\leq 10d_{\cG} \qquad \mbox{and} \qquad\operatorname{VC}((\cG \times \cF)_\ell)\leq 5 \bigl(d_{\cG} + d_{\cF})\,.
    \]
\end{lemma}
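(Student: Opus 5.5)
The plan is to bound both VC dimensions through growth functions (shatter coefficients) and Sauer--Shelah, and then convert a growth-function bound back into a VC bound.

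\emph{Symmetric-difference class.} Fix $m$ points $x_1,\ldots,x_m$. The trace of $g\triangle g'$ on these points is determined by the pair of traces of $g$ and of $g'$, since $\ind{x\in g\triangle g'}=\ind{x\in g}\oplus\ind{x\in g'}$. Hence the growth function of $\cG\triangle\cG$ is at most the square of that of $\cG$, i.e.\ $S_{\cG\triangle\cG}(m)\leq S_\cG(m)^2$. For $m\geq d_\cG$, Sauer--Shelah gives $S_\cG(m)\leq (em/d_\cG)^{d_\cG}$, so shattering $m$ points by $\cG\triangle\cG$ forces
\[
2^m\leq (em/d_\cG)^{2d_\cG}.
\]
It then suffices to show this fails at $m=10d_\cG$, i.e.\ $2^{10}>(10e)^2$. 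This is true, because $1024>738.9$. (The case $d_\cG=0$ is trivial, since then $\cG$ has a single trace and so does $\cG\triangle\cG$.)

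\emph{Group-loss class.} Fix $m$ points $(x_i,y_i)$. The trace of $(x,y)\mapsto\ind{x\in g}\,\ind{f(x)\neq y}$ is the coordinatewise product of the trace of $g$ on the $x_i$ and the trace of $f$ on the $x_i$ (XORed with the fixed $y_i$). It is therefore determined by these two traces, which gives
\[
S_{(\cG\times\cF)_\ell}(m)\leq S_\cG(m)\,S_\cF(m)\leq (em/d_\cG)^{d_\cG}(em/d_\cF)^{d_\cF}\leq \bigl(e m/d\bigr)^{d}\cdot 2^{d},
\]
with $d=d_\cG+d_\cF$. For the last step one can use that $d_\cG^{-d_\cG}d_\cF^{-d_\cF}\leq (d/2)^{-d}$ by convexity of $t\log t$. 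Alternatively, bound each factor by $(em)^{d_\cdot}/d_\cdot^{d_\cdot}$ and handle the product crudely. At $m=5d$ shattering would require $2^{5d}\leq (10e)^d$, i.e.\ $32\leq 10e\approx27.2$, which is false. Hence $\operatorname{VC}\leq 5d$.

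\emph{Main obstacle.} The only delicate point is the constant bookkeeping. One must apply Sauer--Shelah only in the regime $m\geq d_\cdot$, which holds at $m=5d$ or $10d_\cG$. One must also treat the degenerate cases where one of the VC dimensions is $0$; there the growth function equals $1$, so the corresponding factor is simply dropped. Finally, one must verify that the numeric inequalities survive the factor $2^d$ from the convexity step. If they did not, I would fall back on the cruder bound $S_\cG S_\cF\leq (em)^{d}$ and check directly that $2^{5d}>(5ed)^d$ fails only for small $d$. Those small-$d$ cases would then need separate checking, so the convexity route is preferable.
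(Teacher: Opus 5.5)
Your proof is correct and follows essentially the same route as the paper. Both arguments bound the shatter coefficient of $\cG\triangle\cG$ by the square of that of $\cG$, and that of $(\cG\times\cF)_\ell$ by the product of those of $\cG$ and $\cF$. Both then apply Sauer--Shelah with the convexity bound $d_\cG^{d_\cG}d_\cF^{d_\cF}\geq \bigl((d_\cG+d_\cF)/2\bigr)^{d_\cG+d_\cF}$ and compare with $2^m$. The only difference is cosmetic: you test $m=10d_\cG$ and $m=5(d_\cG+d_\cF)$, where the checks are $1024>100e^2$ and $32>10e$, and this gives a marginally stronger conclusion. The paper tests $10d_\cG+1$ and $5(d_\cG+d_\cF)+1$ and ends with $32>11e$.
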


\begin{proof}\textbf{of Lemma~\ref{lm:VC of symmetric difference}}

    \paragraph{Bound on $\operatorname{VC}(\cG \triangle \cG)$.}
    Assume by contradiction that $\operatorname{VC}(\cG \triangle \cG) > 10d_{\cG}$: there exist $n = 10d_{\cG} + 1$ points in $\cX$ that are shattered by $\cG \triangle \cG$. That is, using elementary set-theoretic identities and Sauer-Shelah lemma we deduce
    \begin{align*}
        2^{10d_{\cG} + 1} &= |\{G \cap \{x_1, \ldots, x_n\} \,:\, G \in \cG \triangle \cG\}|\\
            &= |\{(g \triangle g') \cap \{x_1, \ldots, x_n\} \,:\, g, g' \in \cG\}|\\
            &= |\{(g \cap \{x_1, \ldots, x_n\}) \triangle (g'\cap \{x_1, \ldots, x_n\})  \,:\, g, g' \in \cG\}|\\
            &\leq |\{(g \cap \{x_1, \ldots, x_n\}) \,:\, g \in \cG\}|^2 \leq \left(\frac{e (10d_{\cG} + 1)}{d_{\cG}}\right)^{2d_{\cG}} \leq (11e)^{2d_{\cG}},
    \end{align*} where we used the convention $0^0=1$ in the last bound.
    Note that $2^{10d_{\cG} + 1} = 2 (32)^{2d_{\cG}}$, yielding a contradiction.
    \paragraph{Bound on $\operatorname{VC}((\cG \times \cF)_\ell)$. } Assume by contradiction that $\operatorname{VC}(\cG \times \cF)_{\ell} > 5(d_{\cG}+d_{\cF})$: there exist $n = 5(d_{\cG}+d_{\cF}) + 1$ points in $\cX\times \cY$ that are shattered by $\cG \times \cF$. Recall that any binary classifier on $\cX\times \cY$ can be identified as a subset of $\cX\times \cY$. From Sauer-Shelah lemma we deduce
    \begin{align*}
    2^{5(d_{\cG}+d_{\cF}) + 1} &=
\left|\left\{\{i\in[n]:x_i\in g,\ f(x_i)\neq y_i\}
:\ g\in\cG,\ f\in\cF\right\}\right|\\
&=
\left|\left\{\{i\in[n]:x_i\in g\}
\cap\{i\in[n]:f(x_i)\neq y_i\}:\ g\in\cG,\ f\in\cF\right\}\right|
\\
&\leq
\left|\left\{\{i\in[n]:x_i\in g\}:g\in\cG\right\}\right|
\times
\left|\left\{\{i\in[n]:f(x_i)\neq y_i\}:f\in\cF\right\}\right|
\\
&=
\left|\left\{\{i\in[n]:x_i\in g\}:g\in\cG\right\}\right|
\times
\left|\left\{\{i\in[n]:f(x_i)=1\}:f\in\cF\right\}\right|
\\
    & \leq \left(\frac{e (5(d_{\cG}+d_{\cF}) + 1)}{d_{\cG}}\right)^{d_{\cG}} \times \left(\frac{e (5(d_{\cG}+d_{\cF}) + 1)}{d_{\cF}}\right)^{d_{\cF}}\\
    & \leq \frac{(e(5(d_{\cG}+d_{\cF}) + 1))^{d_{\cG}+d_{\cF}}}{\left(\frac{ (d_{\cG}+d_{\cF})}{2}\right)^{d_{\cG}+d_{\cF}}} \leq (11e)^{d_{\cG}+d_{\cF}},
    \end{align*} where we used $a^a b^b \geq ((a+b)/2)^{a+b}$ which follows by AM-GM inequality, and the convention $0^0=1$ in the last bound. Here again, $2^{5(d_{\cG}+d_{\cF}) + 1} = 2(32)^{d_{\cG}+d_{\cF}}$ yields a contradiction.
\end{proof}

In the following, we let $((X_i, Y_i))_{i = 1}^n,(X,Y)$ be an i.i.d. sample from $\Prob$, and use $N_g\eqdef\sum_{i=1}^n \ind{X_i\in g}$ to denote empirical mass of $g \in \cG$. We have following concentration result from~\citet[Eq.~(1.1)]{panchenko2002extensions}.

\begin{lemma}\label{lm:zero empirical}
 We have with probability at least $1-\delta$:
    \[
 \forall g\in\cG,\qquad
    \Prob(X\in g)-\frac{N_g}{n}
    \leq
    2\sqrt{
        \frac{\Prob(X\in g)}{n}
        \left(
            ed_{\cG}\log\biggl(1+\frac{2en}{d_{\cG}}\biggr)
            +\log\frac{4}{\delta}
        \right)
    }.
    \]
\end{lemma}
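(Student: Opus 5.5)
The bound is a relative (normalized) uniform deviation inequality of Vapnik--Chervonenkis type. The plan is to prove
\[
\sup_{g\in\cG}\frac{\Prob(X\in g)-N_g/n}{\sqrt{\Prob(X\in g)}}\leq 2\sqrt{\frac{\log S_{\cG}(2n)+\log(4/\delta)}{n}}
\]
with probability at least $1-\delta$, where $S_{\cG}(m)$ is the growth function (the maximal number of traces of $\cG$ on $m$ points). The displayed form then follows by multiplying through by $\sqrt{\Prob(X\in g)}$ and bounding $\log S_{\cG}(2n)$ with the Sauer--Shelah lemma, $\log S_{\cG}(2n)\leq d_{\cG}\log(2en/d_{\cG})\leq e\,d_{\cG}\log(1+2en/d_{\cG})$, which leaves ample slack. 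This is exactly \citet[Eq.~(1.1)]{panchenko2002extensions}, so one may simply invoke it. For a self-contained argument I would carry out the classical three steps below.

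\emph{Step 1 (symmetrization with a ghost sample).} Introduce an independent copy $X'_1,\ldots,X'_n$ with counts $N'_g$ and fix $\epsilon>0$. On the event that some $g$ has $\Prob(X\in g)-N_g/n>\epsilon\sqrt{\Prob(X\in g)}$, two cases arise. If $\Prob(X\in g)$ is so small that $n\epsilon^2\leq 4$, say, the target bound exceeds $\Prob(X\in g)$ itself and the event is impossible once constants are tuned. Otherwise, a one-sided Chebyshev or Bernstein bound shows that with conditional probability at least $1/2$ the ghost frequency satisfies $N'_g/n\geq \Prob(X\in g)-\tfrac{\epsilon}{2}\sqrt{\Prob(X\in g)}$. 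Combining the two inequalities, and using that $\sqrt{\Prob(X\in g)}\gtrsim\sqrt{(N_g+N'_g)/(2n)}$ on this event, reduces the problem to bounding
\[
2\,\P\Bigl(\exists g:\ \frac{N'_g-N_g}{\sqrt{N_g+N'_g}}>\epsilon\sqrt{n/2}\Bigr).
\]
The same argument also yields the factor 4 under the logarithm.

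\emph{Step 2 (conditioning and random swaps).} Conditionally on the unordered $2n$ points, the restriction of $\cG$ to them takes at most $S_{\cG}(2n)$ distinct traces, so a union bound over traces is allowed. For a fixed trace containing $k=N_g+N'_g$ points, swapping $X_i\leftrightarrow X'_i$ with independent Rademacher signs makes $N'_g-N_g$ a sum of at most $k$ bounded symmetric terms. Hoeffding's inequality then gives a tail of at most $\exp(-n\epsilon^2/4)$ after normalization by $\sqrt{k}$. Setting the resulting total $2\,S_{\cG}(2n)\exp(-n\epsilon^2/4)$ equal to $\delta$ and solving for $\epsilon$ gives the claimed form.

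I expect the main obstacle to be Step 1. The normalization by $\sqrt{\Prob(X\in g)}$ is not symmetric in the two samples, and the ghost-sample concentration fails for groups of tiny mass. Replacing $\Prob(X\in g)$ by the symmetric proxy $(N_g+N'_g)/2n$ and disposing of the small-mass case separately both require care with constants. If tracking the exact constants $2$ and $4/\delta$ becomes too heavy, I would fall back on citing \citet{panchenko2002extensions} for the final constants.
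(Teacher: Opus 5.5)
Your primary route, invoking \citet[Eq.~(1.1)]{panchenko2002extensions} and then bounding $\log S_{\cG}(2n)$ via Sauer--Shelah, is exactly what the paper does; the paper gives no argument beyond that citation. Eq.~(1.1) is the Vapnik--Chervonenkis relative-deviation inequality $\Prob\bigl(\sup_{g\in\cG}(\Prob(X\in g)-N_g/n)/\sqrt{\Prob(X\in g)}>t\bigr)\leq 4S_{\cG}(2n)e^{-nt^2/4}$.

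Your self-contained sketch, however, does not deliver the stated constants, and the defect is in Step~1. Suppose the ghost sample only satisfies $N'_g/n\geq\Prob(X\in g)-\frac{\epsilon}{2}\sqrt{\Prob(X\in g)}$. Then on the bad event you only get $(N'_g-N_g)/\sqrt{N_g+N'_g}>\frac{\epsilon}{2}\sqrt{n/2}$, which is half the threshold you claim. Step~2 then yields $2S_{\cG}(2n)e^{-n\epsilon^2/16}$, giving a leading constant $4$ instead of $2$. The $\log(1/\delta)$ term cannot absorb this, since the Sauer--Shelah slack only concerns the $d_{\cG}$ term. Two further points are off:
\begin{itemize}
\item Your accounting is inconsistent: setting $2S_{\cG}(2n)e^{-n\epsilon^2/4}=\delta$ would produce $\log(2/\delta)$, not $\log(4/\delta)$.
\item The comparison $\sqrt{\Prob(X\in g)}\gtrsim\sqrt{(N_g+N'_g)/(2n)}$ fails when $N'_g/n\gg\Prob(X\in g)$. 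The right device is that $x\mapsto(x-a)/\sqrt{x+a}$ is nondecreasing.
\end{itemize}

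The missing ingredient is a lower bound on the probability that a binomial exceeds its mean: $\Prob\bigl(N'_g\geq n\Prob(X\in g)\bigr)\geq 1/4$ whenever $\Prob(X\in g)>1/n$ (Greenberg and Mohri). This costs a factor $4$ in probability but nothing in $\epsilon$. Indeed, with $N_g/n<\Prob(X\in g)\leq N'_g/n$ and the monotonicity above, $(N'_g-N_g)/\sqrt{N_g+N'_g}\geq\sqrt{n}\,(\Prob(X\in g)-N_g/n)/\sqrt{2\Prob(X\in g)}>\epsilon\sqrt{n/2}$. The requirement $\Prob(X\in g)>1/n$ is automatic in the nontrivial regime. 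The bad event forces $\Prob(X\in g)>\epsilon^2$, and the bound is vacuous unless $n\epsilon^2>4\log 4$. Your Step~2 (one-sided Hoeffding plus a union bound over the $S_{\cG}(2n)$ traces) then gives $4S_{\cG}(2n)e^{-n\epsilon^2/4}$. That is precisely Eq.~(1.1), and it is where the $\log(4/\delta)$ comes from.
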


The next lemma presents a concentration result for conditional risk.  Let us first recall and specify the notation for true and empirical risks for the zero-one loss: for any $(f, g) \in \cF \times \cG$, set
\begin{align*}
&\risk_g(f) = \Prob(f(X) \neq Y \mid X \in g)\\
&\widehat{\risk}_g(f) = \begin{cases}
\frac{1}{N_g}\sum_{i=1}^n \ind{X_i\in g} \ind{f(X_i)\neq Y_i} & \mbox{if } N_g > 0 \\
0 & \mbox{if } N_g = 0 \, .
\end{cases}
\end{align*}

The next result is essentially a straightforward corollary of \citet{panchenko2003symmetrization}. We only provide the proof for the key adaptations needed to arrive to the statement.
\begin{lemma}
    \label{lm:group-conditional-concentration}
    There exists a universal constant $K$ such that, for every
$\delta\in(0,1)$, with probability at least $1-\delta$, simultaneously
for all $(g,f)\in\cG\times\cF$ with $\Prob(X\in g)>0$, it holds that
\[
 |\widehat\risk_g(f)-\risk_g(f)|
 \leq K\sqrt{\frac{
 (d_{\cF}+d_{\cG})\log\!\bigl(e/\Prob(X\in g)\bigr)
       +\log(4/\delta)}{n\,\Prob(X\in g)}}.
\]
\end{lemma}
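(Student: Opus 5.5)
The statement is a uniform deviation bound for the ratio-type estimator $\widehat\risk_g(f)=\widehat\mu_{g,f}/\widehat\mu_g$. Here
\[
\mu_{g,f}=\Prob\bigl(X\in g,\,f(X)\neq Y\bigr) \qquad\mbox{and}\qquad \mu_g=\Prob(X\in g),
\]
and $\widehat\mu_{g,f}$, $\widehat\mu_g=N_g/n$ are the corresponding empirical frequencies. The plan is to control numerator and denominator separately with relative (normalized) VC deviation inequalities, and then combine them through the ratio identity. For the class $(\cG\times\cF)_\ell$, whose VC dimension is at most $5(d_\cG+d_\cF)$ by Lemma~\ref{lm:VC of symmetric difference}, and for the class $\cG$, I would invoke the relative deviation inequality of \citet{panchenko2003symmetrization}, in the same spirit as Lemma~\ref{lm:zero empirical} but two-sided. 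It gives, with probability at least $1-\delta/2$ each, for all $h$ in the class,
\[
|\widehat{\mathbb{P}}_n h-\mathbb{P}h|\le 2\sqrt{\mathbb{P}h\,\varepsilon_n}+K\varepsilon_n,
\qquad
\varepsilon_n=\frac{d\log(en/d)+\log(8/\delta)}{n}.
\]

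The combination step uses
\[
\widehat\risk_g(f)-\risk_g(f)=\frac{(\widehat\mu_{g,f}-\mu_{g,f})-\risk_g(f)(\widehat\mu_g-\mu_g)}{\widehat\mu_g}.
\]
Since $\mu_{g,f}\le\mu_g$ and $\risk_g(f)\le1$, the numerator is at most of order $\sqrt{\mu_g\varepsilon_n}+\varepsilon_n$. I would then split into two cases, as at the end of the proof of Theorem~\ref{tm:Infinite class MGC}.
\begin{itemize}
\item If $\mu_g\le c^{-1}\varepsilon_n$, the claimed right-hand side is at least $1$ once $K$ is large, so the bound is trivial, because $|\widehat\risk_g-\risk_g|\le1$ (including the convention $\widehat\risk_g=0$ when $N_g=0$).
\item Otherwise the deviation bound for $\cG$ gives $\widehat\mu_g\ge\mu_g/2$, and dividing yields $|\widehat\risk_g(f)-\risk_g(f)|\lesssim\sqrt{\varepsilon_n/\mu_g}$.
\end{itemize}

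The main obstacle is the form of the logarithm: the statement has $\log(e/\Prob(X\in g))$ rather than $\log n$. This calls for a localized version of the inequality, which I expect is the ``key adaptation'' of Panchenko's result. The intended approach is peeling. Partition the groups by mass into shells $\mu_g\in(2^{-k-1},2^{-k}]$, $k=0,1,\dots$. On shell $k$, use the localized symmetrization bound of \citet{panchenko2003symmetrization}. There the relevant complexity, the expected log shatter coefficient of the restricted class, is computed on the roughly $n2^{-k}$ sample points falling in $g$ (up to a multiplicative constant, with high probability). This gives entropy of order $(d_\cF+d_\cG)\log(e\,n2^{-k}/d)$, rather than $\log n$, and after normalization by $n\mu_g$ this should leave $\log(e/\mu_g)$ up to adjustments absorbed in $K$.

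For the confidence budget, allocate $\delta_k=\delta 2^{-k-1}$ to shell $k$. This adds $\log(2^{k+1}/\delta)\lesssim\log(e/\mu_g)+\log(4/\delta)$, which fits the stated form. If the localization turns out too delicate, I would fall back on the nonlocalized bound with $\log n$, which suffices for the use made in Theorem~\ref{tm:Infinite class MGC}, since that proof converts the logarithm to $\log(n/\delta)$ anyway.
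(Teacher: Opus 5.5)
Your ratio decomposition and case split are fine. The paper divides by the true mass $\Prob(X\in g)$ rather than by $\widehat\mu_g$, so it only needs an upper bound on $N_g/n$, but that difference is cosmetic.

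The gap is the logarithm, which is exactly what makes the lemma nontrivial. The relative VC inequality you invoke, with $\varepsilon_n=(d\log(en/d)+\log(8/\delta))/n$, gives $|\widehat\risk_g(f)-\risk_g(f)|\lesssim\sqrt{(d\log(en/d)+\log(8/\delta))/(n\mu_g)}$. This is not bounded by the claimed right-hand side: for $\mu_g=1/2$ and $n\to\infty$, the claim has the constant $\log(2e)$ where you have $\log n$.

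Your peeling step does not repair this. Suppose the log shatter coefficient could be computed on the roughly $n2^{-k}$ sample points in $g$. (This is doubtful: groups in one shell have different supports, so the localized class lives on all $n$ points.) You would still get an entropy $d\log(en2^{-k}/d)\approx d\log(en\mu_g/d)$, the log of the effective sample size. Normalizing by $n\mu_g$ divides the entropy but leaves this logarithm unchanged. Since $\log(en\mu_g/d)$ grows with $n$ at fixed $\mu_g$, it is not $O(\log(e/\mu_g))$ and cannot be absorbed into $K$. More generally, any argument that counts patterns on the sample (Sauer--Shelah plus a single-scale finite-class bound) carries a $\log n$.

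The missing idea is a multi-scale (chaining) bound whose complexity depends on the variance level and not on $n$. Combine Dudley's entropy integral with Haussler's packing bound, $\log N(\epsilon)\lesssim d\log(1/\epsilon)$ in $L_2$ of the empirical measure, uniformly in $n$. For indicators with empirical mean at most $V$, this gives a Rademacher complexity of order $\sqrt{dV\log(e/V)/n}$. The paper imports exactly this from \citet[Corollary~3]{panchenko2003symmetrization} as Lemma~\ref{lem:panchenko}. Writing $P_nh$ and $Ph$ for empirical and true means, and $V_n(h)=4\min\{1,Ph+P_nh\}$, it states that uniformly over a VC class
$|P_nh-Ph|\lesssim\sqrt{dV_n(h)\log(1+1/V_n(h))/n}+\sqrt{V_n(h)\log(1/\delta)/n}$.
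This bound is already self-normalized, so you need neither peeling nor a $\delta_k$ allocation.

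Apply it to $\cG$ and to $(\cG\times\cF)_\ell$. Use that $v\mapsto v\log(1+1/v)$ is nondecreasing, so the numerator's term is dominated by its value at $4\min\{1,\mu_g+N_g/n\}$. Also use $\log\bigl(1+1/(4\mu_g)\bigr)\le\log(e/\mu_g)$. Setting $L_g=(d_{\cF}+d_{\cG})\log(e/\mu_g)+\log(4/\delta)$, you then get deviations of order $\sqrt{(\mu_g+N_g/n)L_g/n}$ for both numerator and denominator. After the self-bounding step $N_g/n\le 3\mu_g+O(L_g/n)$, your combination argument yields the statement.

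Your $\log n$ fallback would suffice for Theorem~\ref{tm:Infinite class MGC}, which converts everything to $\log(n/\delta)$ anyway. However, it proves a strictly weaker lemma than the one stated.
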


\begin{proof}\textbf{of Lemma~\ref{lm:group-conditional-concentration}.}
For some universal, sufficiently large $C > 0$ and $(f, g) \in \cF \times \cG$, define the following event
\begin{align*}
    E_{(f, g)} = \bigg\{\max\biggl\{
  &\left|\frac{N_g}{n}-\Prob(X\in g)\right|,\,
  \left|\frac{N_g}{n}\widehat\risk_g(f)-\Prob(X\in g)\risk_g(f)\right|
 \biggr\}\\
 &\leq C\sqrt{\frac{
\bigl(\Prob(X\in g)+N_g/n\bigr)
\bigl[(d_{\cF}+d_{\cG})\log(e/\Prob(X\in g))
+\log(4/\delta)\bigr]}{n}}\bigg\}\,.
\end{align*}
Finally, let $E = \bigcap_{(f, g) \in \cF \times \cG} E_{(f, g)}$. We claim that on $E$, the resulting bound holds, then we show that $E$ holds with probability at least $1 - \delta$.

\paragraph{Step 1: $E$ implies our bound}
Assume that $E$ holds. Thus, in particular, for any $g \in \cG$
\begin{align*}
 \frac{N_g}{n}
 &\leq
 \Prob(X\in g)
 +C\sqrt{\frac{
 (\Prob(X\in g)+N_g/n)
[(d_{\cF}+d_{\cG})\log(e/\Prob(X\in g))+\log(4/\delta)]}{n}}\\
 &\leq
 \Prob(X\in g)
 +\frac12\left(\Prob(X\in g)+\frac{N_g}{n}\right)
 +\frac{C^2}{2}\frac{(d_{\cF}+d_{\cG})\log(e/\Prob(X\in g))+\log(4/\delta)}{n}.
\end{align*}
This gives after re-arranging
\[
 \frac{N_g}{n}\leq3\Prob(X\in g)
+C^2\frac{(d_{\cF}+d_{\cG})\log(e/\Prob(X\in g))+\log(4/\delta)}{n}\,.
\]

Now, using the definition of $E$ as well as the above bound on $\frac{N_g}{n}$ and the sub-additivity of the square root, we can write for every $(f, g) \in \cF \times \cG$ that
\begin{align*}
 |\widehat\risk_g&(f)-\risk_g(f)|
 \leq
 \frac{1}{\Prob(X\in g)}
 \biggl(
  \left|\frac{N_g}{n}\widehat\risk_g(f)
  -\Prob(X\in g)\risk_g(f)\right|
  +\left|\frac{N_g}{n}-\Prob(X\in g)\right|
 \biggr)\\
 &\leq 4C\sqrt{\frac{(d_{\cF}+d_{\cG})\log(e/\Prob(X\in g))+\log(4/\delta)}
 {n\Prob(X\in g)}}
+
2C^2\frac{(d_{\cF}+d_{\cG})\log(e/\Prob(X\in g))+\log(4/\delta)}
 {n\Prob(X\in g)}.
\end{align*}
Thus, on $E$, the claim holds for sufficiently large universal $C$.

\paragraph{Step 2: $E$ holds with high probability}
This result follows from Lemma~\ref{lem:panchenko}, which is a direct consequence of~\citet[Corollary~3]{panchenko2003symmetrization}. Indeed, one controls the probability of $E$, applying Lemma~\ref{lem:panchenko} to $\cG$ and $(\cG \times \cF)_{\ell}$ and using Lemma~\ref{lm:VC of symmetric difference} to control the VC dimension of the latter.
\end{proof}

\begin{lemma}
    \label{lem:panchenko}
    There exists a universal $K > 0$ such that for any $\cF$ with $\mathrm{VC}(\cF) = d < \infty$ and $\delta \in (0, 1)$, we have with probability at least $1 - \delta$ that
   \begin{align*}
       \forall f \in \cF \qquad \left|\frac{1}{n}\sum_{i = 1}^nf(X_i) - \Exp[f(X_1)]\right| \leq K \left(\sqrt{\frac{dV_n(f)\log(1 + 1/V_n(f))}{n}} + \sqrt{\frac{V_n(f)\log(1/\delta)}{n}}\right)\,,
   \end{align*}
   where
   \begin{align*}
       V_n(f) \eqdef 4\cdot\min\left\{1,\, \Exp[f(X_1)] + \frac{1}{n}\sum_{i = 1}^n f(X_i)\right\}\,.
   \end{align*}
\end{lemma}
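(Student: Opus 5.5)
The plan is to derive Lemma~\ref{lem:panchenko} from the ratio-type symmetrization inequality of \citet[Corollary~3]{panchenko2003symmetrization}, which controls normalized deviations $\bigl|\frac1n\sum_i f(X_i)-\E f(X_1)\bigr|/\sqrt{V_n(f)}$ uniformly over a class, in terms of the (empirical) covering or shatter structure at scale $V_n(f)$. First, I would reduce to a statement in which the $\{0,1\}$-valued class $\cF$ (or the indicator class it induces) is treated as a class of functions with values in $[0,1]$. For such functions the variance is bounded by the mean, so $\mathrm{Var}\,f(X_1)\le \E f(X_1)\le V_n(f)/4$ whenever the minimum in $V_n(f)$ is not $1$. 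This is exactly the regime in which Panchenko's localized bound applies with variance proxy $V_n(f)$. When the minimum in $V_n(f)$ equals $1$, the claim follows from the standard uniform deviation bound for VC classes, namely $K\sqrt{(d\log n+\log(1/\delta))/n}$, up to checking that $\log(1+1/V_n)$ is then of constant order. That residual $\log n$ must be avoided, so in fact I would use Panchenko's bound in this regime as well.

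Second, I would bound the entropy term appearing in Panchenko's corollary. That term is the logarithm of the number of distinct restrictions of $\cF$ to the doubled sample, localized to functions with empirical mass of order $V_n(f)$, at resolution $\sqrt{V_n(f)}$. By Haussler's packing bound (or the Sauer--Shelah lemma combined with a Dudley-type chaining at scale $\sqrt{V_n}$), the covering number in $L_2(P_{2n})$ at radius $\epsilon$ is at most $(K/\epsilon)^{2d}$. Taking $\epsilon^2\asymp V_n(f)$ gives entropy $\lesssim d\log(1+1/V_n(f))$. The square root of this term multiplied by $\sqrt{V_n(f)/n}$ yields the first summand. The confidence term $\sqrt{V_n(f)\log(1/\delta)/n}$ comes directly from the exponential tail in Panchenko's inequality.

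Third, I would make the bound simultaneous over all values of $V_n(f)$ by a peeling argument over the dyadic shells $V_n(f)\in(2^{-k-1},2^{-k}]$. On shell $k$ the confidence is set to $\delta_k=\delta/(k+1)^2$. The extra $\log(k+1)\lesssim\log\log(1+1/V_n)$ cost is absorbed into $d\log(1+1/V_n)$, and the lower shells below $1/n$ are handled trivially. Panchenko's Corollary~3 is already stated with random normalization, so this step may simply be quoted.

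I expect the main obstacle to be the bookkeeping of the localized entropy. Panchenko states his result in terms of a random entropy integral, and one has to verify that it evaluates to $d\log(1+1/V_n)$ rather than $d\log n$ uniformly in the sample. I would handle this by using Haussler's bound, which gives covering numbers independent of $n$, and by treating the $\min\{1,\cdot\}$ in the definition of $V_n(f)$ as a cap that only affects constants.
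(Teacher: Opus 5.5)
Your proposal is correct and follows essentially the same route as the paper: quote \citet[Corollary~3]{panchenko2003symmetrization}, bound the Dudley-type entropy integral up to $\sqrt{V_n(f)}$ by $\sqrt{d\,V_n(f)\log(1+1/V_n(f))}$ using Haussler's $n$-free covering bound $N(\epsilon)\le (K/\epsilon)^{2d}$, and dominate Panchenko's variance quantity by $V_n(f)$ using that $\{0,1\}$-valued functions have second moments bounded by their means. Two small cleanups: your dyadic peeling step is redundant, since, as you note, Panchenko's bound is already self-normalized; and the parenthetical Sauer--Shelah alternative should be dropped, because it gives $n$-dependent counts and would reintroduce the $\log n$ you want to avoid.
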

\begin{proof}
    The result follows directly from~\citet[Corollary~3]{panchenko2003symmetrization}, and uses Haussler’s entropy bound for VC-classes and consequently bound the Dudley-like integral. We also use an obvious upper bound on the variance quantity appearing in the initial reference.
\end{proof}

\clearpage
\section{Background and literature review on online adversarial learning with sleeping experts}\label{app:Sleeping-experts liteerature review}

This appendix provides some additional details on the setting of and classic results for
online adversarial learning with sleeping experts, as described in Section~\ref{sec:aux-sleeping}.

\paragraph{On the setting.}
First, we disclose the slight restriction alluded at in Footnote~\ref{fn:sleeping}:
for the sake of simplicity, we assumed that the strategies do not access $\vec{I}_t$
for picking their distributions. This is because typical strategies,
and in particular, all strategies discussed below, maintain probability weights $\bp_t$
over all experts and actually do not use $\vec{I}_t$ to update $\bp_t$, but only
to normalize it into the distribution $\bp_{t|\cA_t}$ to be actually used at round $t$.
However, in the original descriptions of the settings of sleeping experts,
$\vec{I}_t$ is always revealed to the strategies at the beginning of round $t\geq 1$
and could, in principle, be exploited more heavily to make decisions.
To simplify exposition in the main body, we rather assumed that both $\vec{I}_t$ and $\vec{\ell}_t$
are revealed only at the end of round~$t$.

The so-called time-selection-function model extends the setting of sleeping experts by
allowing activity vectors that are not in a one-to-one correspondence
with experts; see, for instance, \citet{blum2007external}. We do not need this extension
for the present article.

\paragraph{Early bounds of order $\sqrt{T}$.}
\citet{Blu97} and \citet{freund1997using} respectively introduce and formalize the framework of specialized experts.
\citet{CV09} noted a generic reduction
from the setting of
sleeping experts to the standard setting of prediction with expert advice,
by imputation of losses: it suffices to consider that all inactive experts
suffer the same loss as the algorithm.
These references and techniques provide
$\sqrt{T}$ bounds on the sleeping regret, using the standard regret guarantees for prediction with expert advice (see, e.g.,
\citealp{littlestone1994weighted}, \citealp{freund1997decision}, \citealp{cesa2006prediction}).
However, a natural target in online adversarial learning with sleeping experts
is to bound the sleeping regret $R_{T,a}$ by something of the order of
the root of the number of non-zero terms in its definition, $\sqrt{\displaystyle{\sum_{t \leq T} I_{t,a}}}$,
as in Lemma~\ref{lm:regr-bd}.

\paragraph{Bounds of the form of Lemma~\ref{lm:regr-bd}.}
The landmark article by
\citet[Section 6]{blum2007external}
considers a version of the exponentially weighted average forecaster (a.k.a.\ the Hedge algorithm) with multiple fixed parameters $\beta_a = \e^{-\eta_a}$, where $\eta_a > 0$.
They obtain (see also \citealp[Appendix~B.2]{pmlr-v35-gaillard14}) regret bounds of the following form. For all $a \in \cA$,
\[
\regret_{T,a} \mbox{ is of order} \quad
\frac{\log\bigl(|\cA|\bigr)}{\eta_a} + \eta_a \sum_{t=1}^T I_{t,a} \ell_{t,a}\,,
\qquad \mbox{suggesting taking
$\eta_a$ of order}
\smash{\sqrt{\frac{\log\bigl(|\cA|\bigr)}{\displaystyle{\sum_{t \leq T} I_{t,a}}}}}
\medskip
\]
to bound each $\regret_{T,a}$
by a quantity of order \quad
$\smash{\displaystyle{\sqrt{\log\bigl(|\cA|\bigr)
\sum_{t \leq T} I_{t,a} \ell_{t,a}}}}$\,,
entailing the bound of Lemma~\ref{lm:regr-bd}. \medskip

This of course raises the question of
an adaptive choice of the learning rates
$\eta_{t,a}$ over time.
\citet[Section~6]{blum2007external} do not
discuss this issue in detail and simply refer
to~\citet{auer2002adaptive} for this online tuning. However, the techniques developed by
the latter work for adaptive choices of a single learning parameter.
\citet[Appendix~B.2]{pmlr-v35-gaillard14}
revisits this issue and suggests applying the techniques developed in the main body
to tune online per-expert learning parameters.
In particular,
\citet[Remark~17, referring to the proof of
Theorem~3]{pmlr-v35-gaillard14}
indicate that the desired per-expert
regret bound stated above holds, up to
an additional multiplicative factor of
order $\log \log T$, but omits the immediate details thereof.
For the sake of completeness, we provide
a simplified derivation below, not shooting
for the $\ell_{t,a}$ terms in the root,
i.e., corresponding to the bound of
Lemma~\ref{lm:regr-bd}. Our only contribution here is the modest improvement of $\log \log T$ to $\sqrt{\log \log T}$.

\paragraph{Bounds of the form of Lemma~\ref{lm:second-order regret}.}
The main sleeping-expert result by
\citet[Section~4]{pmlr-v35-gaillard14}
is actually a bound of the form
contemplated in Lemma~\ref{lm:second-order regret}, involving the squared differences between the losses of the algorithm
and the loss of the expert.
It turns out that \citet[Section~4]{pmlr-v35-gaillard14} state this bound without any specific application: the present
article provides, perhaps for the first time,
such an application. We again very slightly improve their bound, replacing $\log \log T$ by $\sqrt{\log \log T}$. We do not claim any significant novelty in either case of improvements as the key ideas and steps were already present in~\citet{pmlr-v35-gaillard14}.

\subsection{Derivation of a bound of the form of Lemma~\ref{lm:regr-bd}}
We recall that we provide no conceptually new result here but only implement, with modest improvements, \citet[Remark~17]{pmlr-v35-gaillard14}, itself implementing
the online tuning alluded at by
\citet[Section~6]{blum2007external}.
We however provide a concise proof and the $\sqrt{\log\log(T)}$ improvement:
the algorithm below generalizes
the one by \citet[Section~6]{blum2007external},
which corresponds to learning rates
constant over time.
It was introduced and analyzed by \citet{pmlr-v35-gaillard14},
under the name MLC-Hedge
(for Hedge with multiple learning rates for experts reporting confidences).
We keep the same algorithm and the main difference in our analysis is
the introduction of non-decreasing factors $D_t$
in the considered tunings, see Lemma~\ref{lm:Adaptvie Hedge regr_bis}.
These factors $D_t$ do not necessarily entail non-increasing sequences
of learning rates $\eta_{t, a}$ for each expert~$a$, which was a key
assumption in the analysis by \citet{pmlr-v35-gaillard14}.

\paragraph{Algorithm MLC-Hedge.}
Initialize $w_{0,a} = 1/|\cA|$.
Consider sequences $(\eta_{t,a})_{t \geq 1}$
of positive learning rates to be specified later. At round $t \geq 1$,
play with distributions
$\bp_t$ and $\bp_{t|\cA_t}$
with components
\[
p_{t,a} = \frac{\bigl( 1 - \e^{-\eta_{t,a}}\bigr) w_{t-1,a}}{\displaystyle{\sum_{a' \in \cA}
\bigl( 1 - \e^{-\eta_{t,a'}}\bigr) w_{t-1,a'}}} \qquad \mbox{and thus} \qquad
p_{t,a|\cA_t} = \frac{I_{t,a} \bigl( 1 - \e^{-\eta_{t,a}}\bigr) w_{t-1,a}}{\displaystyle{\sum_{a' \in \cA}
I_{t,a'} \bigl( 1 - \e^{-\eta_{t,a'}}\bigr) w_{t-1,a'}}}\,,
\]
then suffer the loss $\hl_t = \displaystyle{\sum_{a' \in \cA} p_{t,a'|\cA_t} \ell_{t,a'}}$, and update the weights as
\[
\tw_{t,a} = w_{t-1,a} \, \exp \Bigl( I_{t,a}\, \eta_{t,a} \bigl( \e^{-\eta_{t,a}} \hl_t - \ell_{t,a} \bigr) \Bigr) \qquad \mbox{and} \qquad
w_{t,a} = \bigl( \tw_{t,a} \bigr)^{\eta_{t+1,a}/\eta_{t,a}}\,.
\]
The analysis is decomposed into a general performance guarantee
given by Lemma~\ref{lm:Adaptvie Hedge regr_bis}
and a specific instantiation provided in Corollary~\ref{cor:root_loglog}.

\begin{lemma}\label{lm:Adaptvie Hedge regr_bis}
The MLC-Hedge strategy, run with learning rates
$\eta_{t, a} = D_t / \lambda_{t, a}$ obtained from non-decreasing sequences $D_t$ and $\lambda_{t, a}$, guarantees,
for $T \geq 1$:
\[
\forall a \in \cA, \qquad \regret_{T,a}
\leq \frac{\lambda_{T, a}\log \widetilde{W}_T}{D_T}
+ \frac{\lambda_{1, a}\log \bigl( |\cA| \bigr)}{D_1}
+ D_T\sum_{t=1}^T \frac{I_{t,a}}{\lambda_{t, a}}\,,
\]
where the $\widetilde{W}_t$ satisfy $\widetilde{W}_1 \in [0,1]$ and, for $t \geq 2$,
\begin{align*}
 \widetilde{W}_t \leq \left(\widetilde{W}_{t-1} + \sum_{a \in \cA}\left(1 - \frac{\lambda_{t-1, a}}{\lambda_{t, a}}\right)\right)^{D_t/D_{t-1}}\,.
\end{align*}
\end{lemma}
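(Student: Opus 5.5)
The plan is a potential-based analysis, following the standard template for Hedge with time-varying learning rates, adapted to per-expert rates and sleeping experts. Define the potential $\widetilde{W}_t \eqdef \sum_{a\in\cA} \tw_{t,a}$ and, for the unnormalized weights, the quantities $W_{t-1} \eqdef \sum_a w_{t-1,a}$.

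\emph{Step 1: one-round potential bound.} Fix round $t$. For an active expert $a$, use the inequality $\exp\bigl(\eta x - \eta^2 x^2\bigr)$-type bound, or more precisely $\e^{\eta(\e^{-\eta}\hl - \ell)} \leq 1 + (1-\e^{-\eta})(\hl - \ell)$ for $\ell,\hl \in[0,1]$ (this is the classical MLC-Hedge inequality of \citealp{pmlr-v35-gaillard14}, obtained from convexity of $\ell \mapsto \e^{-\eta \ell}$ and $\e^{-\eta}\le 1$). Summing over active $a$ with weights $w_{t-1,a}$, the linear term $\sum_a I_{t,a}(1-\e^{-\eta_{t,a}})w_{t-1,a}(\hl_t - \ell_{t,a})$ vanishes exactly by definition of $\bp_{t|\cA_t}$ and $\hl_t$; sleeping experts keep $\tw_{t,a}=w_{t-1,a}$. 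Hence $\widetilde W_t \le W_{t-1}$. I will, however, need a version retaining a second-order slack; to get the term $D_T \sum_t I_{t,a}/\lambda_{t,a}$ I will instead lower bound $\log \tw_{T,a}$ using $\eta(\e^{-\eta}\hl-\ell) \ge \eta(\hl - \ell) - \eta^2$, so the per-round loss of accuracy is at most $\eta_{t,a} I_{t,a} = D_t I_{t,a}/\lambda_{t,a} \le D_T I_{t,a}/\lambda_{t,a}$.

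\emph{Step 2: the renormalization step.} Go from $\tw_{t-1}$ to $w_{t-1} = \tw_{t-1}^{\eta_{t,a}/\eta_{t-1,a}}$, where $\eta_{t,a}/\eta_{t-1,a} = (D_t/D_{t-1})(\lambda_{t-1,a}/\lambda_{t,a})$. Write the exponent as a product $r_a s$ with $r_a = \lambda_{t-1,a}/\lambda_{t,a}\le 1$ and $s = D_t/D_{t-1}\ge 1$. For $r_a\le 1$ use $x^{r} \le 1 + r(x-1) \le x + (1-r)$ (concavity, valid for $x\ge0$), giving $\sum_a \tw_{t-1,a}^{r_a} \le \widetilde W_{t-1} + \sum_a(1-r_a)$; then for the power $s\ge1$ use $\sum_a y_a^{s} \le (\sum_a y_a)^{s}$. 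Combined with Step 1 ($\widetilde W_t \le W_{t-1}$), this yields precisely the claimed recursion for $\widetilde W_t$, with $\widetilde W_1 \le W_0 = 1$.

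\emph{Step 3: telescoping for a fixed expert.} For fixed $a$, unroll $\log$ of its own weight: $\frac{\lambda_{t,a}}{D_t}\log \tw_{t,a} = \frac{\lambda_{t,a}}{D_t}\log w_{t-1,a} + I_{t,a}\lambda_{t,a}\eta_{t,a}(\ldots)/D_t$ and $\frac{\lambda_{t,a}}{D_t}\log w_{t-1,a} = \frac{\lambda_{t-1,a}}{D_{t-1}}\log \tw_{t-1,a}$, since $\lambda/D = 1/\eta$. So $\frac{1}{\eta_{T,a}}\log\tw_{T,a} = \frac{1}{\eta_{1,a}}\log(1/|\cA|) + \sum_t I_{t,a}(\e^{-\eta_{t,a}}\hl_t - \ell_{t,a})$. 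Using $\e^{-\eta}\hl \ge \hl - \eta$ and $\tw_{T,a}\le \widetilde W_T$ gives $\regret_{T,a} \le \frac{\lambda_{T,a}}{D_T}\log\widetilde W_T + \frac{\lambda_{1,a}}{D_1}\log|\cA| + \sum_t \eta_{t,a}I_{t,a}$, and $\eta_{t,a}\le D_T/\lambda_{t,a}$ since $D$ is non-decreasing. The main obstacle is Step 2: the two power operations must be ordered correctly (concave-per-expert first, then the common convex power), and one has to check that positivity of weights and the bound $x^r \le x + 1 - r$ hold without requiring $\widetilde W_{t-1}\le 1$, which is exactly why $\log\widetilde W_T$ is left in the statement rather than bounded by $0$.
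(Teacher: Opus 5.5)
Your proposal is correct and follows the paper's proof essentially step by step. It uses the same telescoping identity $\log w_{t,a}/\eta_{t+1,a} = \log \widetilde{w}_{t,a}/\eta_{t,a}$, closed with $\mathrm{e}^{-\eta}\widehat{\ell}_t \geq \widehat{\ell}_t - \eta$ (the paper's $1-\mathrm{e}^{-\eta}\leq \eta$ with $\widehat{\ell}_t \leq 1$). It relies on the same one-round inequality giving $\widetilde{W}_t \leq \sum_{a} w_{t-1,a}$, and it handles the exponent $(D_t/D_{t-1})(\lambda_{t-1,a}/\lambda_{t,a})$ in the same way, via superadditivity of $y\mapsto y^{D_t/D_{t-1}}$ and the concavity bound $x^{r}\leq x+(1-r)$. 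The remark in Step~1 about needing a ``second-order slack'' in the potential bound is unnecessary: the $D_T\sum_t I_{t,a}/\lambda_{t,a}$ term comes only from the telescoping in Step~3, exactly as you then do it.
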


\begin{proof}
\citet[beginnings of the proofs of Theorems~3 and~16]{pmlr-v35-gaillard14}
note that by definitions of weights,
for all $t \geq 2$ and $a \in \cA$,
\[
\frac{\log w_{t,a}}{\eta_{t+1,a}}
= \frac{\log \tw_{t,a}}{\eta_{t,a}}
= \frac{\log w_{t-1,a}}{\eta_{t,a}}
+ I_{t,a} \bigl( \e^{-\eta_{t,a}} \hl_t - \ell_{t,a} \bigr)\,.
\]
Thus, by telescoping,
\[
\regret_{T,a} = \sum_{t=1}^T I_{t,a} (\hl_t - \ell_{t,a})
= \frac{\log \tw_{T,a}}{\eta_{T,a}}
+ \frac{\log \bigl( |\cA| \bigr)}{\eta_{1,a}}
+ \sum_{t=1}^T I_{t,a} \bigl( \underbrace{1 - \e^{-\eta_{t,a}}}_{\leq \eta_{t,a}} \bigr) \underbrace{\hl_t}_{\leq 1}\, .
\]
The proof continues by bounding any
$\tw_{T,a}$ by some $\widetilde{W}_T$ recursively defined, which we do now.

First, \citet[proof of Claim~12]{blum2007external},
see also
\citet[middle of the proof of Theorem~16]{pmlr-v35-gaillard14}, use the inequalities
$\e^{-\eta x} \leq 1 - (1-\e^{-\eta})x$ and
$\e^{\eta x} \leq 1 + (\e^\eta - 1) x$ for
$\eta > 0$ and $x \in [0,1]$ to get that
for all $t \geq 1$ and $a' \in \cA$,
\begin{align*}
\exp \Bigl( I_{t,a'}\, \eta_{t,a'} \bigl( \e^{-\eta_{t,a'}} \hl_t - \ell_{t,a'} \bigr) \Bigr)
& \leq \bigl( 1 + I_{t,a'}(\e^{\eta_{t,a'}}-1) \e^{-\eta_{t,a'}} \hl_t \bigr) \bigl(
1 - I_{t,a'}(1-\e^{-\eta_{t,a'}})\ell_{t,a'} \bigr) \\
& \leq 1 + I_{t,a'} (1-\e^{-\eta_{t,a'}}) \bigl( \hl_t - \ell_{t,a'} \bigr)\,,
\end{align*}
so that by definition of $\hl_t$
and given that $I_{t,a'} (1-\e^{-\eta_{t,a'}}) w_{t-1,a'}$ is proportional
to $p_{t,a'|\cA_t}$,
\[
\sum_{a' \in \cA} \tw_{t,a'}
\leq \sum_{a' \in \cA} w_{t-1,a'}
+ \sum_{a' \in \cA} w_{t-1,a'} \, I_{t,a'} (1-\e^{-\eta_{t,a'}}) \bigl( \hl_t - \ell_{t,a'} \bigr)
= \sum_{a' \in \cA} w_{t-1,a'} \,.
\]

Second, \citet[middle of the proof of Theorem~3]{pmlr-v35-gaillard14} use the convexity inequality
$x \leq x^\alpha +(\alpha-1)/\e$ for $x > 0$ and $\alpha \geq 1$,
and we rather resort below to
the concavity inequality $x^{\alpha} \leq 1 + \alpha(x-1) \leq x + (1-\alpha)$ for $\alpha \in (0,1]$
and $x > 0$.
Doing so, and since $D_t/D_{t-1} \geq 1$ and $\lambda_{t-1, a}/\lambda_{t, a} \leq 1$,
we have that, for $t \geq 2$,
\begin{multline*}
  \sum_{a' \in \cA} \tw_{t, a'}
  \leq
  \sum_{a' \in \cA} w_{t-1,a'}
  =
  \sum_{a' \in \cA} \bigl(\tw_{t-1,a'} \bigr)^{\eta_{t, a'}/\eta_{t-1,a'}}
  =
  \sum_{a' \in \cA} \bigl(\tw_{t-1,a'} \bigr)^{(D_t/D_{t-1}) (\lambda_{t-1, a'}/\lambda_{t, a'})} \\
  \leq
  \left(\sum_{a' \in \cA} \bigl(\tw_{t-1,a'} \bigr)^{\lambda_{t-1, a'}/\lambda_{t, a'}}\right)^{D_t/D_{t-1}}
  \leq
  \left(\sum_{a' \in \cA}\tw_{t-1,a'} + \sum_{a \in \cA}\left(1 - \frac{\lambda_{t-1, a'}}{\lambda_{t, a'}}\right)\right)^{D_t/D_{t-1}}\, ,
\end{multline*}
where we also used $\displaystyle{\Biggl(\sum_{a' \in \cA} x_{a'} \Biggr)^{\! b} \geq \sum_{a' \in \cA} x_{a'}^b}$ for $x_{a'} \geq 0$ and $b \geq 1$
in the second inequality.

This second step is somewhat new but the idea of potentially growing sequence of learning rates, has already appeared in parameter free online linear optimization literature in~\citet{mcmahan2012no} and pushed further in convex optimization context in~\citet{chzhen2023parameter}.

Collecting all elements, we proved so far
\[
\regret_{T,a} \leq \frac{\log \widetilde{W}_{T}}{\eta_{T,a}}
+ \frac{\log \bigl( |\cA| \bigr)}{\eta_{1,a}}
+ \sum_{t=1}^T I_{t,a} \eta_{t,a}\,, \qquad \mbox{where} \qquad
\widetilde{W}_T \eqdef \sum_{a' \in \cA} \tw_{T, a'}
\]
indeed satisfies the stated induction inequalities.
We conclude the proof by substituting the value of $\eta_{t,a}$ and the fact that $D_t$ is non-decreasing, and by
noting that from the first step above,
\[
0 \leq \widetilde{W}_1 = \sum_{a \in \cA} \tw_{1, a} \leq \sum_{a \in \cA} w_{0, a} = 1 \, .
\]
\vspace{-1cm}

\end{proof}

\begin{corollary}
  \label{cor:root_loglog}
    The choices
    \begin{align*}
      D_t = \sqrt{\log\!\left(|\cA| + \frac{1}{2}\sum_{a' \in \cA}\log\Biggl(1 + \sum_{t' \leq t-1} I_{t',a'}\Biggr)\right)}\qquad\text{and}\qquad\lambda_{t, a} = \sqrt{1 + \sum_{t' \leq t-1} I_{t',a}}
    \end{align*} lead to the following regret bound, for $T \geq 3$:
    \begin{align*}
      \forall a \in \cA, \qquad \regret_{T,a} \leq 5\,K_T \sqrt{\sum_{t \leq T} I_{t,a}} + \sqrt{\log \bigl( |\cA| \bigr)} \,,
    \end{align*}
    where $K_T = \sd{\sqrt{\log\bigl(|\cA|\bigr) + \log\bigl(1 + 0.5\log(1 + T)\bigr)}}$.
    This bound is indeed of the form stated in Lemma~\ref{lm:regr-bd}.
\end{corollary}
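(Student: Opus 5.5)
The plan is to instantiate Lemma~\ref{lm:Adaptvie Hedge regr_bis} with the stated choices and bound its three terms one at a time. Write $N_{t,a} = \sum_{t'\leq t} I_{t',a}$, so that $\lambda_{t,a} = \sqrt{1+N_{t-1,a}}$ and $D_t^2 = \log U_t$ with $U_t \eqdef |\cA| + \frac12\sum_{a'}\log(1+N_{t-1,a'})$. Both sequences are non-decreasing, so the lemma applies.

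\emph{Second term.} Since $N_{0,a}=0$, we have $\lambda_{1,a}=1$ and $D_1=\sqrt{\log|\cA|}$. The second term therefore equals $\sqrt{\log|\cA|}$. The degenerate case $|\cA|=1$, where the regret is identically $0$, is handled separately.

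\emph{Third term.} On active rounds $N$ increments by one, so
\[
\sum_{t\leq T} I_{t,a}/\lambda_{t,a} = \sum_{k=0}^{N_{T,a}-1} (1+k)^{-1/2} \leq 2\sqrt{N_{T,a}}.
\]
Moreover $D_T \leq K_T$ because $N_{T-1,a'} \leq T$. This term is thus at most $2K_T\sqrt{N_{T,a}}$.

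\emph{First term.} Here $\lambda_{T,a}\leq \sqrt{1+N_{T,a}}\leq \sqrt{2N_{T,a}}$ whenever $N_{T,a}\geq1$; if $N_{T,a}=0$ the regret against $a$ vanishes. It therefore suffices to show $\log \widetilde W_T \leq c\,D_T^2$ for a numerical $c$, which gives a contribution $c\sqrt2\,K_T\sqrt{N_{T,a}}$. The total constant then has to fit into $5$.

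The main obstacle is the recursion for $\widetilde W_t$. The additive increments are controlled by $1-x\leq -\log x$, which gives
\[
1-\lambda_{t-1,a}/\lambda_{t,a} \leq \tfrac12\log\frac{1+N_{t-1,a}}{1+N_{t-2,a}}.
\]
Summed over $a$, these increments are exactly $U_t-U_{t-1}$, so with exponents equal to $1$ the recursion would telescope to $\widetilde W_t\leq U_t$, i.e.\ $\log\widetilde W_t\leq D_t^2$.

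The difficulty is the exponents $D_t/D_{t-1}\geq1$. I would first try the induction hypothesis $\widetilde W_{t}\leq U_{t}^{\,D_{t}/D_{t-1}}$, or a variant $\log \widetilde W_t \leq D_t D_{t-1}$. Two facts make this plausible.
\begin{itemize}
\item The ratios $D_t/D_{t-1}$ are uniformly close to $1$. Indeed $U_t/U_{t-1}\leq 1+\frac{\log 2}{2|\cA|}$, since a single round adds at most $\frac12\log 2$ per active expert. One must be careful, because many experts can be active at once, so the increment $U_t-U_{t-1}$ is only bounded by $\frac12|\cA|\log 2$. Still $\log U_t/\log U_{t-1}$ stays bounded by a numerical constant once $|\cA|\geq2$.
\item The inequality $(x^{\alpha}+s)\leq (x+s)^{\alpha}$ for $\alpha\geq1$ and $x,s\geq 0$ lets one absorb the increment inside the power.
\end{itemize}
If this careful induction does not close cleanly, the fallback is crude: bound $\log\widetilde W_T \leq (D_T/D_{T-1})\,\log U_T$, using the single-step bound on the last exponent, and show that the ratio $D_T/D_{T-1}$ is at most a numerical constant. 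That yields $\log\widetilde W_T\leq c D_T^2$.

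Collecting the three bounds and using $n\geq 3$ to compare $\log(1+\frac12\log(1+T))$ with the $\ln\ln T$ of Lemma~\ref{lm:regr-bd} then gives the claimed form.
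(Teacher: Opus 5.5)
Your handling of the second and third terms matches the paper, as does the reduction of the first term to proving $\log\widetilde W_T\le c\,D_T^2$ for a numerical $c$ (any $c\le 3/\sqrt2$ fits the constant $5$). The bound $\sum_a(1-\lambda_{t-1,a}/\lambda_{t,a})\le U_t-U_{t-1}$ also matches. The gap is the induction on $\widetilde W_t$, which is the whole content of the first term, and none of your routes closes it.

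\emph{Exponent-one hypotheses fail.} From $\widetilde W_{t-1}\le U_{t-1}$ the recursion returns exactly $\widetilde W_t\le U_t^{D_t/D_{t-1}}$, which is your first hypothesis. Feeding that back in would require $U_t^{D_t/D_{t-1}}\le U_t$, which is false whenever $D_t>D_{t-1}$. The variant $\log\widetilde W_t\le D_tD_{t-1}$ is stronger still and fails likewise.

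\emph{Absorbing increments makes exponents compound.} Using $x^\alpha+s\le(x+s)^\alpha$ (true here only because $x=U_t\ge1$), the induction that does close is $\widetilde W_t\le U_t^{D_t/D_1}$, that is, $\log\widetilde W_T\le D_T^3/D_1$. A uniform bound on each ratio $D_t/D_{t-1}$ does not help, because the product $D_T/D_1$ is what appears. This product can be of order $K_T/\sqrt{\log|\cA|}\asymp\sqrt{\log\log T}$ for fixed $|\cA|$. You would therefore only recover the earlier $\log\log T$ factor that this corollary is designed to improve to $\sqrt{\log\log T}$.

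\emph{The fallback is circular.} The bound $\log\widetilde W_T\le(D_T/D_{T-1})\log U_T$ presupposes $\widetilde W_{T-1}\le U_{T-1}$. You only derived that with all earlier exponents equal to one.

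The missing idea is to build slack into the hypothesis. The paper proves $\log\widetilde W_t\le 2D_t^2$, i.e.\ $\widetilde W_t\le U_t^2$. Using $\log(x+y)\le\log x+y/x$, the inductive step reduces to $(U_t-U_{t-1})/U_{t-1}^2\le 2D_{t-1}(D_t-D_{t-1})$. The exponent $D_t/D_{t-1}$ turns $2D_{t-1}^2$ into $2D_tD_{t-1}$, which leaves a margin $2D_t(D_t-D_{t-1})$ for the increment.

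Write $u=U_t/U_{t-1}-1$. The left side is $u/U_{t-1}\le u/2$, since $U_{t-1}\ge|\cA|\ge2$. The right side equals $\frac{2D_{t-1}}{D_t+D_{t-1}}\log(1+u)$, which is comparable to $u$. With $c=1$ the same comparison pits $u$ against at most $\frac12\log(1+u)$, which is why exponent one cannot close.

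The rigorous version of this step is the scalar inequality $e^{y-2x}\le e^{-x}+2\sqrt x(\sqrt y-\sqrt x)$ of Lemma~\ref{lem:for_recursion}, applied with $x=D_{t-1}^2$ and $y=D_t^2$. It needs both one-step facts of \eqref{eq:loglog2}: $D_{t-1}^2\ge\log 2$ and $D_t^2-D_{t-1}^2\le\log(1+\frac{\log 2}{2})$. The latter comes from $U_t/U_{t-1}\le 1+\frac{\log 2}{2}$, not $1+\frac{\log 2}{2|\cA|}$ as you first wrote. With $c=2$ the total constant is $2\sqrt2+2\le5$.
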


\begin{proof}
  Note that both $D_t$ and $\lambda_{t, a}$ are non decreasing. We bound the three terms of Lemma~\ref{lm:Adaptvie Hedge regr_bis}. We go from the simplest to the most difficult one. \medskip

  \noindent\emph{The second term.}
  One clearly has
  \begin{align*}
      \frac{\lambda_{1, a}\log \bigl( |\cA| \bigr)}{D_1} = \sqrt{\log \bigl( |\cA| \bigr)}\,.
  \end{align*}

  \noindent\emph{The third term.} Observing that $D_T \leq K_T$
  and using the classical bound of, e.g., \citet[Lemma~3.5]{auer2002adaptive} for the inequality below,
  \[
    \sum_{t=1}^T \frac{I_{t,a}}{\lambda_{t,a}} =
    \sum_{t=1}^T \frac{I_{t,a}}{\sqrt{1+\displaystyle{\sum_{t' \leq t-1} I_{t',a}}}}
    = \sum_{t=1}^T \frac{I_{t,a}}{\sqrt{\displaystyle{\sum_{t' \leq t} I_{t',a}}}}
    \leq 2 \sqrt{\sum_{t=1}^T I_{t,a}}\,,
  \]
  we deduce that the third term is bounded by
  \begin{align*}
    D_T\sum_{t=1}^T \frac{I_{t,a}}{\lambda_{t, a}} \leq 2\, K_T \sqrt{\sum_{t=1}^T I_{t,a}}\,.
  \end{align*}

  \noindent\emph{The first term.} This is the term that requires some work.
  We will show that
  \begin{equation}
  \label{eq:logwidetildeW}
      \log \widetilde{W}_T \leq 2D_T^2\,,
  \end{equation}
  which will imply that the first term is bounded by
  \begin{align*}
    2\,D_T\lambda_{T, a} \leq 2\,K_T \sqrt{1 + \sum_{t \leq T-1} I_{t,a}} \leq
    2\,K_T \sqrt{2 \sum_{t=1}^T I_{t,a}} \,,
  \end{align*}
  as we can assume, with no loss of generality, that at least one $I_{t,a} = 1$ (otherwise, the regret is null).

  Before showing the bound~\eqref{eq:logwidetildeW}, let us record two useful observations that only concern the behaviour of sequences $\lambda_{t, a}$ and $D_t$ and their relations.
  First of all, with our choices of $\lambda_{t, a}$ and $D_t$, we have, thanks to the simple inequality $1-u \leq \log(1/u)$ for $u \in (0,1)$: for $T \geq 2$,
  \begin{align}
    \label{eq:loglog1}
    \sum_{a \in \cA}\left(1 - \frac{\lambda_{T-1, a}}{\lambda_{T, a}}\right)
    \leq \frac{1}{2}\sum_{a \in \cA} \log \frac{1 + \sum_{t \leq T-1} I_{t,a}}{1 + \sum_{t \leq T-2} I_{t,a}}
    = \exp(D_T^2) - \exp(D_{T-1}^2)\,.
  \end{align}
  Secondly, using the exact definition of $D_T$, the fact that $I_{T,a} \leq 1$, the basic inequality $\log(2 + x) \leq \log(1 + x) + \log(2)$ for $x \geq 0$ and the observation that $\exp(D_{T-1}^2) \geq |\cA|$, we get
  \begin{align*}
    \exp(D_T^2)
    &\leq |\cA| + \frac{1}{2}\sum_{a \in \cA}\log\left(1 + \sum_{t \leq T-2} I_{t,a} + 1\right)\\
    &\leq |\cA| + \frac{1}{2}\sum_{a \in \cA}\left(\log\left(1 + \sum_{t \leq T-2} I_{t,a}\right) + \log(2)\right)\\
    &= \exp(D_{T-1}^2) + |\cA|\frac{\log(2)}{2}\\
    &\leq  \exp(D_{T-1}^2)\left(1 + \frac{\log(2)}{2}\right)\,.
  \end{align*}
  In particular, the above implies that for any $T \geq 2$, we have
  \begin{align}
    \label{eq:loglog2}
    \log(2) \leq \log(|\cA|) \leq D_{T-1}^2 \leq D_{T}^2 \leq D_{T-1}^2 + \log\left(1 + \frac{\log(2)}{2}\right)\,.
  \end{align}

  We are now ready to prove that $\log \widetilde{W}_T \leq 2D_T^2$, which goes by induction. For $T = 1$, $\widetilde{W}_1  \in [0,1]$ thus $\log \widetilde{W}_1 \leq 0 \leq 2D_1^2$.

  Now, assume $\log \widetilde{W}_{T-1} \leq 2D_{T-1}^2$. Using the recursive inequality for $\widetilde{W}_T$, our first observation in~\eqref{eq:loglog1} and the fact that $\log(x + y) \leq \log(x) + y/x$, we have
  \begin{align*}
    \log \widetilde{W}_T
    & \leq \frac{D_T}{D_{T-1}} \log \left(\exp(2D_{T-1}^2) +  \sum_{a \in \cA}\left(1 - \frac{\lambda_{T-1, a}}{\lambda_{T, a}}\right)\right)\\
    &\leq
    \frac{D_T}{D_{T-1}} \Bigl( 2D_{T-1}^2 + \left(\exp(D_T^2) - \exp(D_{T-1}^2)\right) \exp(-2D_{T-1}^2)\Bigr)\,.
  \end{align*}
  To conclude, we claim that
  \begin{align*}
    \exp(D_T^2 -2D_{T-1}^2) \leq \exp(-D_{T-1}^2) +  2 D_{T-1}(D_T - D_{T-1})\,,
  \end{align*}
  which, substituted in the above, exactly yields $\log \widetilde{W}_T \leq 2D_T^2$.
  This is done by a deterministic inequality that we derive in Lemma~\ref{lem:for_recursion} in conjunction with the observation in~\eqref{eq:loglog2}.
\end{proof}

\subsection{Reminder of a bound of the form of Lemma~\ref{lm:second-order regret}}

The following algorithm and corresponding regret guarantee are merely modest improvements of~\citet[Algorithm~2, Corollary~4, and Section~4]{pmlr-v35-gaillard14}
to the case of online adversarial learning with sleeping experts.
The algorithm is called Adapt-ML-Prod for sleeping experts.

\paragraph{Algorithm Adapt-ML-Prod.}
Initialize $w_{0,a}=1/\card{\cA}$. Consider sequences $(\eta_{t,a})_{t\geq 1}$ of positive learning rates to be specified later. At round $t\geq 1$, play with the distributions $\bp_t$ and $\bp_{t|\cA_t}$ with components
\[
p_{t,a}=\frac{\eta_{t,a}w_{t-1,a}}{\displaystyle{\sum_{a'\in \cA}\eta_{t,a'}w_{t-1,a'}}} \qquad \mbox{thus} \qquad p_{t,a|\cA_t}=\frac{I_{t,a}\eta_{t,a}w_{t-1,a}}{\displaystyle{\sum_{a'\in \cA}I_{t,a'}\eta_{t,a'}w_{t-1,a'}}}\,,
\]
suffer the loss $\hl_t = \displaystyle{\sum_{a' \in \cA} p_{t,a'|\cA_t} \ell_{t,a'}}$, and update the weights as
\[
\tw_{t,a} = w_{t-1,a}\bigl(1 + \eta_{t,a}\, I_{t,a}(\hl_t - \ell_{t,a})\bigr)
\qquad\mbox{and}\qquad
w_{t,a} = \bigl(\tw_{t,a}\bigr)^{\eta_{t+1,a}/\eta_{t,a}}\,,
\]
The following risk guarantee is an adaptation of~\citet[Corollary~4]{pmlr-v35-gaillard14}, to account for the non-monotonic $\eta_{t,a}$.

\begin{lemma}\label{lem:ml-prod}
The strategy above guarantees, when
\[\eta_{t, a} = \min\left\{\frac{1}{2},\, \frac{D_t}{\lambda_{t, a}}\right\}\,,\] with non-decreasing $D_t$ and $\lambda_{t, a}$,
\[
\forall a \in \cA, \qquad \regret_{T,a} = \sum_{t=1}^T I_{t,a} (\hl_t - \ell_{t,a})
\leq \frac{\log \widetilde{W}_T}{\eta_{T,a}}
+ \frac{\log \bigl( |\cA| \bigr)}{\eta_{1,a}}
+ D_T\sum_{t=1}^T \frac{I_{t,a}(\hl_t - \ell_{t,a})^2}{\lambda_{t, a}}\,,
\]
where $\widetilde{W}_T$ satisfies, for all $a \in \cA$,
\begin{align*}
  \widetilde{W}_T \leq \left(\widetilde{W}_{T-1} +  \sum_{a \in \cA}\Bigl(1 - \frac{\lambda_{T-1, a}}{\lambda_{T, a}}\Bigr) + |\cA|\frac{D_T - D_{T-1}}{D_T}\right)^{\frac{D_T}{D_{T-1}}}
  \quad\text{and}\quad \widetilde{W}_1 \leq 1\,.
\end{align*}
\end{lemma}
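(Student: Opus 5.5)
We follow the template of the proof of Lemma~\ref{lm:Adaptvie Hedge regr_bis} for MLC-Hedge, with the Prod update replacing the exponential one. The argument has two parts. First, a telescoping identity for $\log w_{t,a}/\eta_{t+1,a}$ yields a per-expert regret bound in terms of $\log \tw_{T,a}$. Second, the total weight $\widetilde W_t \eqdef \sum_{a'} \tw_{t,a'}$ is controlled recursively.

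\emph{Step 1 (per-expert bound).} By definition of the update, $\log w_{t,a}/\eta_{t+1,a} = \log \tw_{t,a}/\eta_{t,a}$, and
\[
\frac{\log \tw_{t,a}}{\eta_{t,a}} = \frac{\log w_{t-1,a}}{\eta_{t,a}} + \frac{1}{\eta_{t,a}}\log\bigl(1+\eta_{t,a} I_{t,a} r_{t,a}\bigr),
\qquad r_{t,a} \eqdef \hl_t-\ell_{t,a} \in [-1,1].
\]
Since $\eta_{t,a}\le 1/2$, we have $\eta_{t,a} r_{t,a}\ge -1/2$. We then use $\log(1+x)\ge x-x^2$ for $x\ge -1/2$. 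Telescoping from $w_{0,a}=1/|\cA|$ gives
\[
\sum_t I_{t,a} r_{t,a} \le \frac{\log \tw_{T,a}}{\eta_{T,a}} + \frac{\log|\cA|}{\eta_{1,a}} + \sum_t \eta_{t,a} I_{t,a} r_{t,a}^2 .
\]
We bound $\eta_{t,a}\le D_t/\lambda_{t,a}\le D_T/\lambda_{t,a}$, using that $D$ is non-decreasing. Finally $\tw_{T,a}\le \widetilde W_T$, which gives the displayed regret bound.

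\emph{Step 2 (recursion for $\widetilde W_T$).} First, summing the update and using the choice of $p_{t,a|\cA_t}\propto I_{t,a}\eta_{t,a}w_{t-1,a}$, we get
\[
\sum_{a}\tw_{t,a}=\sum_a w_{t-1,a}+\sum_a \eta_{t,a} w_{t-1,a} I_{t,a}(\hl_t-\ell_{t,a}) = \sum_a w_{t-1,a},
\]
since the second sum vanishes by definition of $\hl_t$. At $t=1$ this gives $\widetilde W_1\le 1$.

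Next, $w_{t-1,a}=\tw_{t-1,a}^{\eta_{t,a}/\eta_{t-1,a}}$, and the exponent is no longer a clean product because of the $\min\{1/2,\cdot\}$. We write
\[
\frac{\eta_{t,a}}{\eta_{t-1,a}} \le \frac{D_t}{D_{t-1}}\cdot \alpha_{t,a}, \qquad \alpha_{t,a}\le 1 .
\]
We then extract the power $D_t/D_{t-1}\ge 1$ via $\sum_a x_a^b\le(\sum_a x_a)^b$. Finally we apply the concavity bound $x^{\alpha}\le x+(1-\alpha)$ to each factor.

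\emph{Main obstacle: the clipping.} Handling the clipping at $1/2$ is the main obstacle, and it is where the extra term $|\cA|(D_T-D_{T-1})/D_T$ should come from. We split by cases.
\begin{itemize}
\item If neither $\eta_{t-1,a}$ nor $\eta_{t,a}$ is clipped, the ratio equals $(D_t/D_{t-1})(\lambda_{t-1,a}/\lambda_{t,a})$. This is handled exactly as for MLC-Hedge and contributes $1-\lambda_{t-1,a}/\lambda_{t,a}$.
\item If both are clipped, the ratio is $1$.
\item If only $\eta_{t-1,a}$ is clipped, the ratio is at most $1$.
\item If only $\eta_{t,a}$ is clipped, the ratio lies between $D_{t-1}/D_t$ and $1$.
\end{itemize}
For the exponents below $1$, we write each ratio as $(D_t/D_{t-1})\alpha_{t,a}$ with $\alpha_{t,a}\ge (D_{t-1}/D_t)(\lambda_{t-1,a}/\lambda_{t,a})$. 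Then
\[
1-\alpha_{t,a}\le \Bigl(1-\frac{\lambda_{t-1,a}}{\lambda_{t,a}}\Bigr)+\frac{D_t-D_{t-1}}{D_t}.
\]
Summing over $a$ gives the claimed recursion.

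One subtlety remains: in the case where only $\eta_{t-1,a}$ is clipped, the exponent factor $\alpha_{t,a}$ may fall below $D_{t-1}/D_t$ relative to the factored form. This needs checking, possibly by instead applying the monotone bound $x^{\beta}\le x^{\beta'}+(\beta'-\beta)$-type inequalities. If the direct factoring fails, I would first try bounding $\tw^{\beta}$ for $\beta\le 1$ by $\tw+(1-\beta)$ directly. I would then absorb $D_t/D_{t-1}$ only on the unclipped coordinates, using $(\cdot)^{b}\ge(\cdot)$ for quantities at least $1$.
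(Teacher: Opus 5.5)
Your Step~1, the identity $\sum_a \tw_{t,a} = \sum_a w_{t-1,a}$, and the architecture of the recursion are exactly the paper's. That architecture is: factor the exponent as $(D_t/D_{t-1})\alpha_{t,a}$, pull out the power via $\sum_a x_a^b \le (\sum_a x_a)^b$, then apply $x^\alpha \le x + (1-\alpha)$.

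The gap is at the step you yourself call the main obstacle. For every $a$ and $t$, the recursion needs both $\alpha_{t,a} \le 1$ and $\alpha_{t,a} \ge (D_{t-1}/D_t)(\lambda_{t-1,a}/\lambda_{t,a})$. You do not establish these in the mixed cases: you leave the lower bound open in your third case and misstate your fourth.

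In the fourth case, where only $\eta_{t,a}$ is clipped, the exponent ratio is $\eta_{t,a}/\eta_{t-1,a} = \lambda_{t-1,a}/(2D_{t-1}) > 1$. It does not lie in $[D_{t-1}/D_t,1]$. This is exactly where $x^\beta \le x + (1-\beta)$ fails, so extracting $D_t/D_{t-1}$ is indispensable. Your first fallback, bounding $\tw^\beta$ directly for $\beta \le 1$, does not cover this case. Your second fallback, absorbing $D_t/D_{t-1}$ via $(\cdot)^b \ge (\cdot)$, would need the base to be at least $1$, which is not known.

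Also, $\alpha_{t,a}$ must be defined by the equality $\alpha_{t,a} = \frac{\eta_{t,a}}{\eta_{t-1,a}}\cdot\frac{D_{t-1}}{D_t}$, not by an upper bound. Since $\tw_{t-1,a}$ can be below $1$, a larger exponent can give a smaller power.

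The missing idea is a pair of one-line monotonicity facts that make any case analysis unnecessary. Since $D_t$ and $\lambda_{t,a}$ are non-decreasing,
\[
\eta_{t,a} D_{t-1} = \min\Bigl\{\frac{D_{t-1}}{2},\,\frac{D_{t-1}D_t}{\lambda_{t,a}}\Bigr\} \le \min\Bigl\{\frac{D_t}{2},\,\frac{D_{t-1}D_t}{\lambda_{t-1,a}}\Bigr\} = \eta_{t-1,a}D_t\,,
\]
which is exactly $\alpha_{t,a}\le 1$.

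Next, $\eta_{t,a}\lambda_{t,a} = \min\{\lambda_{t,a}/2,\,D_t\}$ is non-decreasing in $t$. Hence $\eta_{t,a}/\eta_{t-1,a}\ge\lambda_{t-1,a}/\lambda_{t,a}$, and therefore $\alpha_{t,a} \ge \frac{D_{t-1}}{D_t}\cdot\frac{\lambda_{t-1,a}}{\lambda_{t,a}} \ge \frac{D_{t-1}}{D_t} + \frac{\lambda_{t-1,a}}{\lambda_{t,a}} - 1$.

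Your worry about the third case is therefore unfounded. There $\alpha_{t,a} = 2D_{t-1}/\lambda_{t,a} \ge \lambda_{t-1,a}/\lambda_{t,a}$, because clipping of $\eta_{t-1,a}$ means $\lambda_{t-1,a} \le 2D_{t-1}$. With these two facts in hand, your outline closes exactly as in the paper.
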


\begin{proof}
As in the first inequality of~\citet[Theorem~3]{pmlr-v35-gaillard14}, which does not yet rely on any monotonicity but uses the fact that $\eta_{t,a} \in (0,1/2]$, we have for any $t \geq 1$,
\[
  \log w_{t,a} \geq \frac{\eta_{t+1,a}}{\eta_{1,a}}\log w_{0,a} + \eta_{t+1,a}\sum_{s=1}^{t}\Bigl(I_{s,a}\bigl(\hl_s - \ell_{s,a}\bigr) - \eta_{s,a}\, I_{s,a}\bigl(\hl_s - \ell_{s,a}\bigr)^{2}\Bigr)\,.
\]
Re-arranging and using $\log w_{T,a} = \tfrac{\eta_{T+1,a}}{\eta_{T,a}} \log \tw_{T,a}$, we get
\[
  \regret_{T,a} = \sum_{t=1}^T I_{t,a} \bigl(\hl_t - \ell_{t,a}\bigr)
  \leq \frac{\log \tw_{T,a}}{\eta_{T,a}}
  + \frac{\log \bigl( |\cA| \bigr)}{\eta_{1,a}}
  + \sum_{t=1}^T \eta_{t,a}\, I_{t,a}\bigl(\hl_t - \ell_{t,a}\bigr)^{2}\,.
\]
This is essentially the ML-Prod counterpart of the starting point that we have had for the MLC-Hedge algorithm. As with the former, we deal with the $\tw_{T,a}$.
Set
\[
  \widetilde{W}_t \eqdef \sum_{a' \in \cA} \tw_{t,a'} = \sum_{a' \in \cA} w_{t-1,a'}\Bigl(1 + \eta_{t,a'}\, I_{t,a'}\bigl(\hl_t - \ell_{t,a'}\bigr)\Bigr) = \sum_{a' \in \cA}w_{t-1,a'} \quad\text{in particular}\quad \widetilde{W}_1 = 1\,.
\]
Since $\frac{D_T}{D_{T-1}} \geq 1$, we may apply the inequality $(\sum_{a} x_a)^b \geq \sum_{a} x_a^b$ for $b \geq 1$ deployed in Lemma~\ref{lm:Adaptvie Hedge regr_bis}, in the form
\begin{align*}
  \widetilde{W}_T = \sum_{a \in \cA} \bigl(\tw_{T-1, a}\bigr)^{\frac{\eta_{T, a}}{\eta_{T-1,a}}}
  = \sum_{a \in \cA} \left(\tw_{T-1, a}^{\,\frac{\eta_{T, a}}{\eta_{T-1,a}}\cdot\frac{D_{T-1}}{D_T}}\right)^{\frac{D_T}{D_{T-1}}}
  \leq \left(\sum_{a \in \cA} \bigl(\tw_{T-1, a}\bigr)^{\frac{\eta_{T, a}}{\eta_{T-1,a}}\cdot\frac{D_{T-1}}{D_T}}\right)^{\frac{D_{T}}{D_{T-1}}}\,.
\end{align*}
We then intend to use the concavity inequality $x^{\alpha} \leq 1 + \alpha(x-1) \leq x + (1-\alpha)$ for $\alpha \in (0,1]$ and $x > 0$. To this end it suffices to show that the exponent of $\tw_{T-1, a}$ is at most one.
Indeed, monotonicity of $\lambda_{T, a}$ and $D_T$ implies
\begin{align*}
  \eta_{T, a} D_{T-1}
= \min\left\{\frac{D_{T-1}}{2},\, \frac{D_{T-1} D_T}{\lambda_{T, a}}\right\}
\leq \min\left\{\frac{D_T}{2},\, \frac{D_{T-1} D_T}{\lambda_{T-1, a}}\right\}
= \eta_{T-1, a} D_T\,.
\end{align*}
Thus, we can use the recalled concavity inequality, to get
\begin{align*}
  \bigl(\tw_{T-1, a}\bigr)^{\frac{\eta_{T, a}}{\eta_{T-1,a}}\cdot\frac{D_{T-1}}{D_T}} \leq \tw_{T-1, a} + \left(1 - \frac{\eta_{T, a}}{\eta_{T-1,a}}\cdot\frac{D_{T-1}}{D_T}\right)\,.
\end{align*}
To conclude the bound claimed, it suffices to show that
\begin{align*}
   \left(1 - \frac{\eta_{T, a}}{\eta_{T-1,a}}\cdot\frac{D_{T-1}}{D_T}\right) \leq \left(1 - \frac{\lambda_{T-1, a}}{\lambda_{T,a}}\right) +  \left(1 - \frac{D_{T-1}}{D_T}\right)\,.
\end{align*}
This is the consequence of the following observation $\eta_{T, a}\,\lambda_{T, a} = \min\{\lambda_{T, a}/2,\, D_T\}$ is non-decreasing in $T$, so that $\frac{\eta_{T, a}}{\eta_{T-1, a}} \geq \frac{\lambda_{T-1, a}}{\lambda_{T, a}}$. Therefore, using $xy \geq x + y - 1$ for $x, y \leq 1$,
\begin{align*}
  \frac{\eta_{T, a}}{\eta_{T-1,a}}\cdot\frac{D_{T-1}}{D_T}
  \geq \frac{D_{T-1}}{D_T}\cdot\frac{\lambda_{T-1, a}}{\lambda_{T, a}}
  \geq \frac{D_{T-1}}{D_T} + \frac{\lambda_{T-1, a}}{\lambda_{T, a}} - 1,
\end{align*}
which is the claimed inequality after rearranging. This concludes the claimed bound.
\end{proof}

\begin{corollary}
\label{cor:root_loglog_Gaillard}
    Let
    \begin{align*}
      &D_t = \sqrt{\log\!\left(|\cA| + \frac{1}{2}\sum_{a' \in \cA}\log\Biggl(1 + \sum_{s \leq t-1} I_{s,a'}\bigl(\hl_s - \ell_{s,a'}\bigr)^{2}\Biggr)\right)}\,,\\
      &\lambda_{t, a} = \sqrt{1 + \sum_{s \leq t-1} I_{s,a}\bigl(\hl_s - \ell_{s,a}\bigr)^{2}}\,.
    \end{align*}
    Then, Adapt-ML-Prod guarantees, for any $T \geq 3$,
    \begin{align*}
      \forall a \in \cA, \qquad \regret_{T,a} \leq 4\,K_T \sqrt{1+\sum_{t=1}^T I_{t,a}\bigl(\hl_t - \ell_{t,a}\bigr)^{2}}
      + 4\,K_T^2 + 2 \log \bigl( |\cA| \bigr) \,, \smallskip
    \end{align*}
    where $K_T = \sd{\sqrt{\log\bigl(|\cA|\bigr) + \log\bigl(1 + 0.5\log(1 + T)\bigr)}}$.
    This bound is indeed of the form stated in Lemma~\ref{lm:second-order regret}.
\end{corollary}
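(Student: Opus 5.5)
The plan is to mirror the proof of Corollary~\ref{cor:root_loglog}, now starting from Lemma~\ref{lem:ml-prod} instead of Lemma~\ref{lm:Adaptvie Hedge regr_bis}. Write $S_{t,a} = \sum_{s \leq t} I_{s,a}(\hl_s-\ell_{s,a})^2$, so that $\lambda_{t,a} = \sqrt{1+S_{t-1,a}}$. Both $D_t$ and $\lambda_{t,a}$ are non-decreasing, so Lemma~\ref{lem:ml-prod} applies. The regret is then bounded by three terms, which we treat separately.

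\emph{Third term.} Each increment $I_{t,a}(\hl_t-\ell_{t,a})^2$ lies in $[0,1]$. Hence $1+S_{t-1,a} \geq \tfrac12(1+S_{t,a})$, and the classical bound of \citet[Lemma~3.5]{auer2002adaptive} (in its form $\sum_t x_t/\sqrt{1+\sum_{s<t}x_s} \leq 2\sqrt{2}\sqrt{1+\sum_t x_t}$ for $x_t \in [0,1]$) yields
\[
D_T\sum_{t=1}^T \frac{I_{t,a}(\hl_t-\ell_{t,a})^2}{\lambda_{t,a}} \leq c\,K_T\sqrt{1+S_{T,a}}\,.
\]
Here we use $D_T \leq K_T$, since $S_{T-1,a'} \leq T$.

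\emph{Second term.} We have $1/\eta_{1,a} = \max\{2, \lambda_{1,a}/D_1\} = \max\{2, 1/\sqrt{\log|\cA|}\}$. Hence $\log(|\cA|)/\eta_{1,a} \leq 2\log|\cA| + \sqrt{\log|\cA|}$, and the $\sqrt{\log |\cA|}$ part is absorbed into $K_T^2$.

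\emph{First term.} We need $\log\widetilde W_T \leq 2D_T^2$, by induction as in the proof of Corollary~\ref{cor:root_loglog}. Once this holds, $1/\eta_{T,a} \leq 2 + \lambda_{T,a}/D_T$ gives a contribution of at most $4D_T^2 + 2D_T\lambda_{T,a} \leq 4K_T^2 + 2K_T\sqrt{1+S_{T,a}}$.

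For the induction we need two facts. First, the analogue of~\eqref{eq:loglog1} holds verbatim with $S$ in place of counts:
\[
\sum_a\bigl(1-\lambda_{T-1,a}/\lambda_{T,a}\bigr) \leq \exp(D_T^2)-\exp(D_{T-1}^2)\,.
\]
Second, the analogue of~\eqref{eq:loglog2} also holds, because each increment of $S$ is at most $1$.

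\emph{Main obstacle.} Lemma~\ref{lem:ml-prod} has an extra term $|\cA|(D_T-D_{T-1})/D_T$ in the recursion, absent from the MLC-Hedge case. To handle it, I would use $|\cA| \leq \exp(D_{T-1}^2)$ and $(D_T-D_{T-1})/D_T \leq D_T - D_{T-1}$, valid since $D_T \geq 1$ once $|\cA| \geq 3$; otherwise I would rescale. After applying $\log(x+y) \leq \log x + y/x$ with $x = \exp(2D_{T-1}^2)$, this term contributes at most $\exp(-D_{T-1}^2)(D_T-D_{T-1})$. The target inequality thus becomes
\[
\exp(D_T^2-2D_{T-1}^2)+\exp(-D_{T-1}^2)(D_T-D_{T-1}) \leq \exp(-D_{T-1}^2)+2D_{T-1}(D_T-D_{T-1})\,.
\]
Since $\exp(-D_{T-1}^2) \leq 1/2 \leq D_{T-1}$, the added term is dominated by an extra $D_{T-1}(D_T-D_{T-1})$. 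I would then invoke the deterministic inequality of Lemma~\ref{lem:for_recursion}, which presumably has slack giving $\exp(D_T^2-2D_{T-1}^2)\leq \exp(-D_{T-1}^2)+D_{T-1}(D_T-D_{T-1})$ under~\eqref{eq:loglog2}. If it does not, I would instead aim for $\log\widetilde W_T\leq 3D_T^2$, which only changes constants.

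Summing the three terms and collecting constants gives the claimed form $4K_T\sqrt{1+S_{T,a}}+4K_T^2+2\log|\cA|$. This matches Lemma~\ref{lm:second-order regret}, with $C_T$ of order $K_T$ and $\log|\cA| \leq K_T^2$.
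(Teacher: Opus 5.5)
Your architecture matches the paper's: the three terms of Lemma~\ref{lem:ml-prod}, the induction $\log\widetilde W_T\le 2D_T^2$ via the analogues of \eqref{eq:loglog1}--\eqref{eq:loglog2}, then $|\cA|\le e^{D_{T-1}^2}$ and $\log(x+y)\le\log x+y/x$. But the step you flag as the main obstacle is not closed.

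After your two simplifications, $(D_T-D_{T-1})/D_T\le D_T-D_{T-1}$ and $e^{-D_{T-1}^2}\le D_{T-1}$, you need, with $x=D_{T-1}^2$ and $y=D_T^2$, the inequality $e^{y-2x}\le e^{-x}+\sqrt{x}\,(\sqrt{y}-\sqrt{x})$. This is Lemma~\ref{lem:for_recursion} with its factor $2$ halved. That slack does not exist on the range \eqref{eq:loglog2}: at $x=\log 2$ and $y=x+\log(1+\tfrac{\log 2}{2})$, the left-hand side is about $0.673$ and the right-hand side about $0.636$.

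The halved inequality does hold for $x\ge\log 3$. The difference of the two sides is convex in $y-x$, vanishes at $y=x$, is decreasing in $x$, and is negative at $x=\log 3$, $y-x=\log(1+\tfrac{\log 2}{2})$. So your chain can be completed when $|\cA|\ge 3$, but this has to be proved rather than presumed.

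For $|\cA|=2$ you have no argument. $D_T$ can stay below $1$, so your first simplification already fails. ``Rescaling'' (dummy experts, a modified $D_t$) changes the algorithm the statement is about.

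The missing idea is to keep the factor $1/D_T$. The extra term contributes $e^{-D_{T-1}^2}(D_T-D_{T-1})/D_T$; moving it to the right-hand side turns $e^{-D_{T-1}^2}$ into $e^{-D_{T-1}^2}D_{T-1}/D_T$. What is then needed is exactly $e^{y-2x}\le e^{-x}\sqrt{x/y}+2\sqrt{x}\,(\sqrt{y}-\sqrt{x})$, the sharper first inequality of Lemma~\ref{lem:for_recursion}, valid for all $x\ge\log 2$. This is why that lemma carries the $\sqrt{x/y}$ factor.

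The constants also do not come out as claimed.
\begin{itemize}
\item Your fallback $\log\widetilde W_T\le 3D_T^2$ does close the induction, but it makes the first term $6K_T^2+3K_T\sqrt{1+S_{T,a}}$.
\item A factor $2\sqrt{2}$ in the third term gives a total coefficient $2+2\sqrt{2}>4$ in front of $K_T\sqrt{1+S_{T,a}}$. The paper gets $2$ there from the integral comparison of \citet[Lemma~14]{pmlr-v35-gaillard14} with $f(u)=u^{-1/2}$ and $a_0=1$, namely $\sum_t x_t/\sqrt{1+\sum_{s<t}x_s}\le 2\sqrt{1+\sum_t x_t}$ for $x_t\in[0,1]$.
\item The second term is exactly $2\log|\cA|$, since $D_1=\sqrt{\log|\cA|}\ge\sqrt{\log 2}>1/2$ forces $\eta_{1,a}=1/2$. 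There is no $\sqrt{\log|\cA|}$ to absorb, and absorbing one would inflate the $4K_T^2$.
\end{itemize}
As written, your proposal yields at best a bound of the form of Lemma~\ref{lm:second-order regret} with larger constants, not the stated inequality.
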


This should be compared with \citet[Corollary~4]{pmlr-v35-gaillard14} where an extra $\sqrt{\log\log T}$ factor is present. \medskip

\begin{proof}
  We bound the three terms of Lemma~\ref{lem:ml-prod}, using that clearly, $D_T \leq K_T$. \medskip

  \noindent\emph{The second term.} The second term equals $2 \log |\cA|$. \medskip

  \noindent\emph{The third term.} Using $\eta_{t,a} \leq D_t/\lambda_{t,a} \leq D_T/\lambda_{t,a}$ and \citet[Lemma~14]{pmlr-v35-gaillard14} with $f(x) = 1/\sqrt{x}$,
  \begin{multline*}
    \sum_{t=1}^T \eta_{t,a}\, I_{t,a}\bigl(\hl_t - \ell_{t,a}\bigr)^{2}
    \leq D_T \sum_{t=1}^T \frac{I_{t,a}\bigl(\hl_t - \ell_{t,a}\bigr)^{2}}{\sqrt{1 + \sum_{s \leq t-1} I_{s,a}\bigl(\hl_s - \ell_{s,a}\bigr)^{2}}} \\
    \leq 2 D_T \sqrt{1 + \sum_{s \leq T} I_{s,a}\bigl(\hl_s - \ell_{s,a}\bigr)^{2}}\,.
  \end{multline*}

  \noindent\emph{The first term.}
  We will, as in Corollary~\ref{cor:root_loglog}, by induction on $T$, show that $\log \widetilde{W}_T \leq 2 D_T^2$.
  First, observe that both~\eqref{eq:loglog1} and~\eqref{eq:loglog2} hold verbatim as they have only relied on boundedness of $I_{t, a}$ between $[0, 1]$. So, in this proof, we refer to~\eqref{eq:loglog1} and~\eqref{eq:loglog2}, assuming the form and not the exact expressions.

  The proof goes by very similar induction, with extra correction that is still handled by Lemma~\ref{lem:for_recursion}.
  Equation~\eqref{eq:loglog1} and the inductive assumption gets us
 \begin{align*}
  \log \widetilde{W}_{T}
  \leq
  \frac{D_{T}}{D_{T-1}}\log\left(\exp(2D_{T-1}^2) + \exp(D_T^2) - \exp(D_{T-1}^2) + |\cA|\left(1 - \frac{D_{T-1}}{D_T}\right)\right)\,.
 \end{align*}
 Using as before $\log(x + y) \leq \log(x) + y/x$ and also $|\cA| \leq \exp(D_{T-1}^2)$ we deduce
 \begin{align*}
  \log \widetilde{W}_{T}
  \leq D_T\left(2 D_{T-1} + \frac{\exp(D_T^2) - \exp(D_{T-1}^2)}{D_{T-1}}\exp(-2D_{T-1}^2) + \frac{D_T - D_{T-1}}{D_{T-1}D_T}\, \exp(- D_{T-1}^2)\right)\,.
 \end{align*}
 To conclude, we claim
 \begin{align*}
  \exp(D_T^2 - 2 D_{T-1}^2) \leq \exp(- D_{T-1}^2)\, \frac{D_{T-1}}{D_T} + 2 D_{T-1} \bigl(D_T - D_{T-1}\bigr)\,,
 \end{align*}
 which, combined with the penultimate display, yields $\log \widetilde{W}_T \leq 2 D_T^2$. This is the precise statement of Lemma~\ref{lem:for_recursion} that needs to be used with~\eqref{eq:loglog2}. To conclude bounding the first term, we note that
 \begin{align*}
  \frac{\log\widetilde{W}_T}{\eta_{T, a}} = \max\left\{2\log\widetilde{W}_T,\, \frac{\lambda_{T, a}}{D_T}\log\widetilde{W}_T\right\} \leq 4D_T^2 + 2\lambda_{T,a}D_T\,.
 \end{align*}
 \vspace{-1cm}

\end{proof}

\subsection{A simple and useful inequality}

Here, we collect and prove some basic inequalities that are used in this section.

\begin{lemma}
  \label{lem:for_recursion}
For all $x \geq \log 2$ and $y \in [x, x + \log(1 + \tfrac{\log 2}{2})]$,
 \begin{align*}
    \exp(y - 2x) \leq \exp(-x)\sqrt{\frac{x}{y}} + 2 \sqrt{x}\,\bigl(\sqrt{y} - \sqrt{x}\bigr) \leq \exp(-x) + 2 \sqrt{x}\,\bigl(\sqrt{y} - \sqrt{x}\bigr)\,.
 \end{align*}
\end{lemma}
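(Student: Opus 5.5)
The second inequality is immediate, since $y \geq x > 0$ gives $\sqrt{x/y} \leq 1$. The whole work is in the first inequality. I would parametrize $y = x + h$ with $h \in [0,c]$, where $c \eqdef \log(1+\tfrac{\log 2}{2}) \approx 0.296$. Multiplying both sides by $\e^{x} > 0$, the first inequality becomes
\[
\e^{h} \;\leq\; \Phi(x,h) \eqdef \sqrt{\frac{x}{x+h}} \;+\; \e^{x}\,\frac{2h\sqrt{x}}{\sqrt{x+h}+\sqrt{x}}\,,
\]
where I rationalized $2\sqrt{x}\bigl(\sqrt{x+h}-\sqrt{x}\bigr) = 2h\sqrt{x}/\bigl(\sqrt{x+h}+\sqrt{x}\bigr)$. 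The left-hand side no longer depends on $x$, which suggests reducing to the smallest admissible $x$.

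\emph{Step 1: monotonicity in $x$.} For fixed $h \geq 0$, each piece of $\Phi(\,\cdot\,,h)$ is non-decreasing in $x$ on $(0,\infty)$:
\begin{itemize}
\item $\sqrt{x/(x+h)} = \bigl(1+h/x\bigr)^{-1/2}$ is non-decreasing;
\item $\e^{x}$ is increasing;
\item $\sqrt{x}/\bigl(\sqrt{x+h}+\sqrt{x}\bigr) = 1/\bigl(\sqrt{1+h/x}+1\bigr)$ is non-decreasing.
\end{itemize}
Hence $\Phi(x,h) \geq \Phi(\log 2,h)$ for all $x \geq \log 2$. It therefore suffices to prove the one-variable inequality
\[
\e^{h} \;\leq\; \sqrt{\frac{\log 2}{\log 2 + h}} + \frac{4h\sqrt{\log 2}}{\sqrt{\log 2+h}+\sqrt{\log 2}}\,, \qquad h\in[0,c].
\]

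\emph{Step 2: the one-variable check (main obstacle).} Equality holds at $h=0$. At $h=c$ the margin is small: numerically the left side is about $1.347$ and the right side about $1.376$. So crude linearizations fail; I checked that tangent and chord bounds applied separately to $\e^h$ and $\sqrt{x/(x+h)}$ lose too much. I would instead split $[0,c]$ into a few subintervals and use monotonicity of each piece on each subinterval:
\begin{itemize}
\item $\e^{h}$ is increasing, so on $[h_1,h_2]$ it is at most $\e^{h_2}$;
\item the first term on the right is decreasing in $h$, so it is at least its value at $h_2$;
\item the second term on the right is increasing in $h$, so it is at least its value at $h_1$.
\end{itemize}
Checking $\e^{h_2} \leq (\text{first term at } h_2) + (\text{second term at } h_1)$ on each piece then closes the argument.

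Near $h=0$ this interval bookkeeping cannot work as stated, since equality holds there. On a short initial interval $[0,h_0]$ I would compare derivatives instead. The derivative of the right side at $h=0$ is $2-\tfrac{1}{2\log 2}\approx 1.28$, which exceeds $\e^{h}$ for small $h$. Together with a simple lower bound on the derivative of the right side over $[0,h_0]$, this gives the inequality there. The remaining bounded range $[h_0,c]$ is then handled by finitely many explicit numerical interval checks as described above.
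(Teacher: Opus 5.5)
Your proof is correct. It shares with the paper the reduction to $x=\log 2$ by monotonicity in $x$, but it handles the exponential differently.

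\textbf{What the paper does.} It first replaces $\exp(y-2x)$ by its chord over $[x,x+c]$, where $c=\log(1+\tfrac{\log 2}{2})$ satisfies $e^{c}-1=\tfrac{\log 2}{2}$. It then uses the exact identity $\sqrt{x/y}=1-(\sqrt y-\sqrt x)/\sqrt y$ together with $y-x=(\sqrt y+\sqrt x)(\sqrt y-\sqrt x)$. After dividing by $\sqrt y-\sqrt x$, the claim becomes a comparison $f_\delta(x)\le h_\delta(x)$ between a decreasing and an increasing function of $x$. This is settled by one numerical check at $x=\log 2$, whose margin is only about $7\cdot 10^{-5}$ because the worst cases in $\delta$ of $f_\delta$ and $h_\delta$ are taken separately.

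\textbf{What your route buys and costs.} Multiplying by $e^{x}$ makes the monotonicity in $x$ derivative-free and keeps $e^{h}$ exact. The price is a transcendental one-variable inequality with equality at $h=0$, for which you only sketch a derivative comparison followed by interval checks. That scheme does work: the derivative of your right-hand side stays above $1.2$ on $[0,c]$, so the derivative comparison alone covers $h\le\log 1.2$, and a handful of short intervals finish the rest.

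\textbf{A shorter close for Step 2.} A single line suffices. Write $L=\log 2$ and let $g$ be your right-hand side minus $e^{h}$. Then
\[
g''(h)=\sqrt{L}\,(L+h)^{-5/2}\Bigl(\frac34-L-h\Bigr)-e^{h}\le\frac{3/4-L}{L^{2}}-1<0\,,
\]
so $g$ is concave on $[0,c]$. Since $g(0)=0$ and $g(c)\approx 1.379-1.347>0$, it follows that $g\ge 0$ on $[0,c]$.

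\textbf{On your remark about chord bounds.} Your observation that chord bounds lose too much holds only when $\sqrt{x/(x+h)}$ is also linearized by its tangent. The chord bound on $e^{h}$ combined with the exact identity above, as in the paper, goes through.
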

\begin{proof}
The second inequality is trivial as $y \geq x$.
For the first one, by convexity of the exponential, we have
  \begin{align*}
    \exp(y - 2x)
    &\leq \frac{x + \log\bigl(1 + \tfrac{\log 2}{2}\bigr) - y}{\log\bigl(1 + \tfrac{\log 2}{2}\bigr)}\,\exp(-x) + \frac{y - x}{\log\bigl(1 + \tfrac{\log 2}{2}\bigr)}\,\exp\Bigl(\log\bigl(1 + \tfrac{\log 2}{2}\bigr) - x\Bigr)\\
    &= \exp(-x) + \frac{\log(2)/2}{\log\bigl(1 + \tfrac{\log 2}{2}\bigr)}\,\exp(-x)\,\bigl(\sqrt{y} + \sqrt{x}\bigr)\bigl(\sqrt{y} - \sqrt{x}\bigr)\,.
 \end{align*}
 We now claim that under the stated assumptions we have
 \begin{align*}
  \frac{\log(2)/2}{\log\bigl(1 + \tfrac{\log 2}{2}\bigr)}\,\exp(-x)\,\bigl(\sqrt{y} + \sqrt{x}\bigr) \leq 2\sqrt{x} - \frac{\exp(-x)}{\sqrt{y}}\,.
 \end{align*}
 This yields the desired bound.
 To prove this, define
 \begin{align*}
    f_\delta(x) = \frac{\log(2)/2}{\log\bigl(1 + \tfrac{\log 2}{2}\bigr)}\,\exp(-x)\,\bigl(\sqrt{x + \delta} + \sqrt{x}\bigr)  \quad\text{and}\quad h_\delta(x) = 2\sqrt{x} - \frac{\exp(-x)}{\sqrt{x + \delta}}\,.
 \end{align*}
 We will show that for any $\delta \in [0, \log(1 + \log(2)/2)]$, $x \mapsto f_\delta(x)$ is decreasing on $x \geq \log(2)$ and $x \mapsto h_\delta(x)$ is increasing on the same range.
 The inequality thus follows from
 \begin{align*}
    f_\delta(\log(2))
    &=
    \frac{\log(2)}{4\log\bigl(1 + \tfrac{\log 2}{2}\bigr)}\bigl(\sqrt{\log(2) + \delta} + \sqrt{\log(2)}\bigr)\\
    &\leq
    \frac{\log(2)}{4\log\bigl(1 + \tfrac{\log 2}{2}\bigr)}\bigl(\sqrt{\log(2) + \log(1 + \log(2)/2)} + \sqrt{\log(2)}\bigr)\\
    &\leq
    2\sqrt{\log(2)} - \frac{1}{2\sqrt{\log(2)}}\\
    &\leq 2\sqrt{\log(2)} - \frac{1}{2\sqrt{\log(2) + \delta}} = h_\delta(\log(2))
 \end{align*}
 where the second and last inequalities follow from $\delta \in [0, \log(1 + \log(2)/2)]$ and where the third one is a constant inequality that we checked numerically.
 Thus, it remains to prove monotonicity of $f_{\delta}$ and $h_{\delta}$.

 For the first one, using $\delta \geq 0$ in the first inequality and $x \geq \log 2 > 1/2$ in the last one,
 \begin{align*}
    f_\delta'(x)
    &= \frac{\log(2)/2}{\log\bigl(1 + \tfrac{\log 2}{2}\bigr)}\, \exp(-x)
    \left(\frac{1}{2\sqrt{x + \delta}} + \frac{1}{2\sqrt{x}} - \sqrt{x + \delta} - \sqrt{x}\right)\\
    &\leq \frac{\log(2)/2}{\log\bigl(1 + \tfrac{\log 2}{2}\bigr)}\, \exp(-x)\left(\frac{1}{\sqrt{x}} - 2\sqrt{x}\right) < 0\,.
 \end{align*}
 For the second one,
 \begin{align*}
    h_\delta'(x)
    = \frac{1}{\sqrt{x}} + \frac{\exp(-x)}{\sqrt{x + \delta}} + \frac{\exp(-x)}{2 (x + \delta)^{3/2}} > 0\,.
 \end{align*}
 This concludes the proof.
\end{proof}

\clearpage
\vskip 0.2in
\bibliography{biblio}

\end{document}